\documentclass[10pt]{article} 
\usepackage[preprint]{tmlr}


\usepackage{amsmath,amsfonts,amsthm,bm,bbm}









\def\eqref#1{equation~\ref{#1}}









\def\1{\bm{1}}










\DeclareMathAlphabet{\mathsfit}{\encodingdefault}{\sfdefault}{m}{sl}
\SetMathAlphabet{\mathsfit}{bold}{\encodingdefault}{\sfdefault}{bx}{n}


\def\gF{{\mathcal{F}}}



\def\sN{{\mathbb{N}}}

\def\sR{{\mathbb{R}}}








\newcommand{\E}{\mathbb{E}}

\newcommand{\Var}{\mathrm{Var}}



\DeclareMathOperator*{\argmin}{arg\,min}

\usepackage{hyperref}
\usepackage{url}
\usepackage[ruled,vlined,noend]{algorithm2e}
\usepackage{microtype}
\usepackage{graphicx}
\usepackage{caption}
\usepackage{subcaption}
\usepackage[noabbrev]{cleveref}

\newtheorem{theorem}{Theorem}
\crefname{theorem}{Theorem}{Theorems}
\newtheorem{lemma}{Lemma}
\crefname{lemma}{Lemma}{Lemmas}
\newtheorem{corollary}{Corollary}
\crefname{corollary}{Corollary}{Corollaries}
\newtheorem{proposition}{Proposition}
\crefname{proposition}{Proposition}{Propositions}

\newtheorem{assumption}{Assumption}
\crefname{assumption}{Assumption}{Assumptions}
\newenvironment{assumptionp}[1]{
  
  \assumption
}{\endassumption}
\crefname{assumptionp}{Assumption}{Assumptions}
\crefname{definition}{Definition}{Definitions}
\crefname{example}{Example}{Examples}
\crefname{remark}{Remark}{Remarks}
\crefname{equation}{}{}

\title{Learning from Time-dependent Streaming Data \\ with Online Stochastic Algorithms}


\author{\name Antoine Godichon-Baggioni \email antoine.godichon\_baggioni@sorbonne-universite.fr \\
      \addr Laboratoire de Probabilit\'es, Statistique et Mod\'elisation \\
      Sorbonne Universit\'e
      \AND
      \name Nicklas Werge \email nicklas.werge@sorbonne-universite.fr \\
      \addr Laboratoire de Probabilit\'es, Statistique et Mod\'elisation \\
      Sorbonne Universit\'e
      \AND
      \name Olivier Wintenberger \email olivier.wintenberger@sorbonne-universite.fr \\
      \addr Laboratoire de Probabilit\'es, Statistique et Mod\'elisation \\
      Sorbonne Universit\'e}



\begin{document}

\maketitle

\begin{abstract}
This paper addresses stochastic optimization in a streaming setting with time-dependent and biased gradient estimates. We analyze several first-order methods, including Stochastic Gradient Descent (SGD), mini-batch SGD, and time-varying mini-batch SGD, along with their Polyak-Ruppert averages. Our non-asymptotic analysis establishes novel heuristics that link dependence, biases, and convexity levels, enabling accelerated convergence. Specifically, our findings demonstrate that (i) time-varying mini-batch SGD methods have the capability to break long- and short-range dependence structures, (ii) biased SGD methods can achieve comparable performance to their unbiased counterparts, and (iii) incorporating Polyak-Ruppert averaging can accelerate the convergence of the stochastic optimization algorithms. To validate our theoretical findings, we conduct a series of experiments using both simulated and real-life time-dependent data.
\end{abstract}


\section{Introduction} \label{sec:introduction}


Machine learning has experienced remarkable growth and adoption across diverse domains, revolutionizing various applications and unleashing the potential of data-driven decision-making \citep{bishop2006pattern,goodfellow2016deep,sutton2018reinforcement,hastie2009elements,hazan2016introduction,shalev2012online}. With this proliferation of machine learning, a significant volume of new data has emerged, including streaming data that flows continuously and poses unique challenges for learning algorithms. Unlike traditional static datasets, streaming data requires algorithms to adapt in real-time, continuously updating their models to accurately predict new samples.

At the heart of machine learning lies optimization, the process of finding optimal model parameters that minimize the objective function and extract meaningful insights from the data \citep{abu2012learning}. Stochastic Optimization (SO) methods have emerged as powerful tools for addressing optimization tasks in machine learning \citep{kushner2003stochastic,nemirovskij1983problem,shalev2014understanding}. These SO methods overcome the scalability limitations of traditional batch learning techniques and allow for efficient processing of streaming data \citep{bottou2018optimization}.

Among the various SO methods, Stochastic Gradient Descent (SGD) and its variants stand out as the workhorses of the field \citep{robbins1951stochastic}, extensively utilized in machine learning tasks \citep{hardt2016train,shalev2011pegasos,zhang2004solving,xiao2009dual}. SGD and its variants provide a practical and efficient approach to optimize complex models by updating the model parameters using stochastic estimates of the gradients. These methods have proven their effectiveness in a wide range of applications, including deep learning, natural language processing, and computer vision \citep{goodfellow2016deep,sutton2018reinforcement,hastie2009elements}.

However, traditional analyses of SO problems often assume that gradient estimates are unbiased and drawn independently and identically distributed (i.i.d.) from an unknown data generation process \citep{cesa2004generalization}. Unfortunately, real-world data rarely adheres to this idealized assumption. Streaming data, in particular, introduces additional complexity, as dependencies and biases can arise due to time-dependency or other factors inherent in the data generation process. These dependencies and biases pose significant challenges for conventional SO methods.

Notably, SGD-based methods have demonstrated the ability to converge even under biased gradient estimates \citep{ajalloeian2020convergence,bertsekas2016nonlinear,d2008smooth,devolder2011stochastic,gorbunov2020stochastic,gorbunov2020linearly,schmidt2011convergence}. However, the theoretical understanding of the convergence properties of SGD in the presence of biased gradients is not yet well-established. While empirical studies have shown promising results in specific applications \citep{agarwal2012generalization,chen2018stochastic,karimi2019non,ma2022data,schmidt2011convergence}, more rigorous investigations are necessary to generalize these findings and comprehend the underlying mechanisms.

\textbf{Contributions.} In this paper, we explore SGD-based methods in the context of streaming data. We extend the analysis of the unbiased i.i.d. case by \citet{godichon2023non} to include time-dependency and biasedness. By leveraging their insights, we investigate the effectiveness of first-order SO methods in a streaming setting, where the assumption of unbiased i.i.d. samples no longer holds. Our non-asymptotic analysis establishes novel heuristics that bridge the gap between dependence, biases, and the convexity levels of the SO problem. These heuristics enable accelerated convergence in complex problems, offering promising opportunities for efficient optimization in streaming settings. Our contributions can be summarized as follows:
\begin{itemize}
\item \textbf{Non-asymptotic Convergence Rates of Time-varying Mini-batch SGD-based methods.} We present non-asymptotic convergence rates specifically tailored for time-varying mini-batch SGD-based methods under time-dependency and biasedness. These convergence rates offer valuable insights into achieving and enhancing convergence in applications involving time-dependent and biased inputs, providing a comprehensive understanding of the optimization process in streaming settings.
\item \textbf{Breaking Long- and Short-term Dependence.} Our study demonstrates the effectiveness of SGD-based methods in overcoming long- and short-range dependence by leveraging time-varying mini-batches. These mini-batches are carefully designed to counteract the inherent dependency structures present in streaming data, leading to improved performance and convergence. This contribution expands the applicability of SGD-based methods to scenarios where dependence poses a challenge.
\item \textbf{Robustness to Biased Gradients.} We show that biased SGD-based methods can converge and achieve accuracy comparable to unbiased ones, as long as the bias is not excessively large. This finding highlights the robustness of SGD-based methods in the presence of bias, broadening their applicability to scenarios where biased gradient estimates are encountered.
\item \textbf{Accelerated Convergence with Polyak-Ruppert Averaging.} Our study reveals that incorporating Polyak-Ruppert averaging into SGD-based methods accelerates convergence \citep{polyak1992acceleration,ruppert1988efficient}. Importantly, our findings emphasize the continued efficacy of this technique in challenging streaming settings characterized by dependence structures and biases. Furthermore, by combining Polyak-Ruppert averaging with the variance reduction capabilities offered by time-varying mini-batches, we obtain the best of both worlds. This powerful combination not only enhances the convergence rate but also increases robustness by reducing the variance.
\end{itemize}
Overall, our contributions extend the scope of SO by considering time-dependent and biased gradient estimates in a streaming framework. We provide non-asymptotic convergence rates, demonstrate the effectiveness of SGD-based methods in handling dependence and bias, and highlight the accelerated convergence achieved through Polyak-Ruppert averaging. These findings enhance our understanding of SO in streaming settings and offer practical strategies for researchers and practitioners to optimize their models in real-time applications.

\textbf{Organization.} In \cref{sec:problem_formulation}, we introduce the streaming framework and lay the groundwork for our non-asymptotic analysis. We provide an overview of key concepts, definitions, and assumptions, with a focus on the dependency structures. In \cref{sec:sub:objectives}, we discuss the assumptions related to the objective function, while \cref{sec:sub:assumptions} focuses on the assumptions concerning the gradient estimates, particularly the crucial role of $\alpha$-mixing conditions in validating the dependency structures. We then present our streaming variants of the SGD methods in \cref{sec:sub:ssgs}.

In \cref{sec:convergence}, we present our convergence results, considering both our streaming SGD variant with and without Polyak-Ruppert averaging. We provide a detailed discussion of each result, highlighting connections to previous work in the field. The proofs of our results are provided in \cref{sec:appendix}.

Finally, in \cref{sec:experiments}, we present experimental results that validate our findings. We conduct experiments on synthetic and real-life time-dependent streaming data, showcasing the practical implications of our work.


\section{Problem Formulation, Assumptions, and Methods} \label{sec:problem_formulation}


We consider SO problems of the form
\begin{equation} \label{eq:so_problem}
\min_{\theta\in\sR^{d}} \left\{ F(\theta)= \E_{\xi}[f(\theta;\xi)] \right\}.
\end{equation}
In the formulation \cref{eq:so_problem}, $\theta\in\sR^{d}$ represents the parameter vector of interest. The objective function $F$ is defined as the expected value of the random loss function $f(\theta;\xi)$, which depends on the parameter vector $\theta$ and the random variable $\xi$. To minimize $F$, we rely on estimates of its gradients. Specifically, we estimate $F$ by evaluating the gradient estimates of $f$ on a sequence of samples $(\xi_{t})$.

In our streaming setting, we have access to a sequence of time-varying mini-batches $(\xi_{t})$, rather than knowing the true underlying distribution of $\xi$. Each $\xi_{t}$ consists of $n_{t}\in\sN$ individual data points, represented by the mini-batch $\{\xi_{t,1},\ldots,\xi_{t,n_{t}}\}$.
We extend this notion of time-varying mini-batches by defining $f_{t}(\theta) = f(\theta;\xi_{t})$. Consequently, $(f_{t}(\theta))$ constitutes a sequence of differentiable (possible non-convex) random loss functions \citep{nesterov2018lectures,nemirovskij1983problem}. Hence, each $f_{t}(\theta)$ comprises $n_{t}\in\sN$ individual losses, represented by the mini-batch $\{f_{t,1},\ldots,f_{t,n_{t}}\}$.
For example, consider the scenario where $\xi_{t}$ represents a mini-batch of input-output pairs $\{(x_{t,i},y_{t,i})\}_{i=1}^{n_{t}}$. In this case, for a model class $\{h_{\theta}\}_{\theta \in \Theta}$, we can express $f_{t,i}(\theta)$ as a combination of a loss function $l$ and a regularizer $\Omega$:
\begin{equation*}
f_{t,i}(\theta) = l(y_{t,i}, h_{\theta}(x_{t,i})) + \Omega(\theta).
\end{equation*}


\subsection{Assumptions on Objective Functions: Quasi-strong Convexity and Lipschitz Smoothness} \label{sec:sub:objectives}


In accordance with previous work by \citet{bach2011non, gower2019sgd, nguyen2019new}, we adopt certain assumptions regarding the objective function $F$. Firstly, we assume that $F$ possesses a unique global minimizer $\theta^{*}\in\Theta$, where $\Theta$ is a closed convex set in $\sR^{d}$. These assumptions align with techniques utilized in (online) convex optimization \citep{boyd2004convex, nesterov2018lectures, hazan2016introduction, shalev2012online}. In addition, we assume that the objective function $F$ is $\mu$-quasi-strongly convex \citep{karimi2016linear, necoara2019linear}:
\begin{assumption}[$\mu$-quasi-strong convexity] \label{assump:L_strong_convexity}
The objective function $F$ is differentiable with $\nabla_{\theta}F(\theta^{*})=0$ and there exists a constant $\mu>0$ such that $\forall\theta\in\Theta$, we have
\begin{equation} \label{eq:assump:L_strong_convexity}
F(\theta^{*}) \geq F(\theta)+\langle\nabla_{\theta}F(\theta),\theta^{*}-\theta\rangle+\frac{\mu}{2}\lVert\theta^{*}-\theta\rVert^{2}.
\end{equation}
\end{assumption}
The $\mu$-quasi-strong convexity assumption serves as a relaxed version of strong convexity for the SO problem, providing a more conservative notion. Various objective functions $F$ employed in machine learning applications have been extensively investigated and documented by \citet{teo2007scalable}. Researchers have also explored milder degrees of convexity, such as the Polyak-Łojasiewicz condition \citep{polyak1963gradient, lojasiewicz1963topological} studied by \citet{karimi2016linear, gower2021sgd} for SGD methods, and their Ruppert-Polyak average investigated by \citet{gadat2023optimal} under a Kurdyka-Łojasiewicz-type condition \citep{kurdyka1998gradients, lojasiewicz1963topological}.
Relaxing the assumption of strict convexity is essential in practice to ensure the robustness and adaptiveness of algorithms, particularly for non-strongly convex SO problems \citep{bach2013non, nemirovski2009robust, necoara2019linear, khaled2022better}.

\textbf{$F$ being $\mu$-quasi-strongly convex does not guarantee the $\mu$-quasi-strong convexity of $(f_{t})$.} It is crucial to understand that while the objective function $F$ satisfies the $\mu$-quasi-strong convexity assumption \cref{eq:assump:L_strong_convexity}, the individual loss functions $(f_{t})$ may not exhibit the same property. This distinction plays an essential role in our convergence analysis as it interacts with the time-dependency of the problem and the presence of biases. In \cref{sec:convergence}, we will explore this relationship in detail and investigate its implications for the convergence behavior. Specifically, we will examine how the level of dependence, biases, and the $\mu$-quasi-strong convexity conditions intertwine and affect the convergence properties of the optimization process.

To analyze the Polyak-Ruppert averaging estimate, we need to impose additional smoothness assumptions on the objective function $F$, following the framework established in \cite{bach2011non,godichon2023non}. These smoothness assumptions ensure the necessary conditions for the convergence analysis of the Polyak-Ruppert method \citep{ruppert1988efficient,polyak1992acceleration}.
\begin{assumption}[$C_{\nabla}$- and $C_{\nabla}'$-smoothness] \label{assump:L_smoothness}
The objective function $F$ have $C_{\nabla}$-Lipschitz continuous gradients around $\theta^{*}$, i.e., there exists a constant $C_{\nabla}>0$ such that $\forall\theta\in\Theta$,
\begin{equation} \label{eq:assump:L_lipschitz}
\lVert\nabla_{\theta}F(\theta)-\nabla_{\theta}F(\theta^{*})\rVert \leq C_{\nabla}\lVert\theta-\theta^{*}\rVert.
\end{equation}
Next, the Hessian of $F$ is $C_{\nabla}'$-Lipschitz-continuous around $\theta^{*}$, that is, there exists a constant $C_{\nabla}'\geq0$ such that $\forall\theta\in\Theta$,
\begin{equation} \label{eq:assump:L_diff_lipschitz}
\lVert\nabla_{\theta}^{2}F(\theta)-\nabla_{\theta}^{2}F(\theta^{*})\rVert \leq C_{\nabla}'\lVert\theta-\theta^{*}\rVert.
\end{equation}
\end{assumption}

As highlighted in \citet{bottou2018optimization}, the assumption stated in \cref{assump:L_smoothness} guarantees that the gradient $\nabla_{\theta}F$ does not exhibit arbitrary variations. This property makes $\nabla_{\theta}F$ a valuable guide for reducing the value of $F$.
In deterministic optimization, smooth optimization typically achieves faster convergence rates compared to non-smooth optimization. However, in the context of SO, the benefits of smoothness are limited to improvements in the associated constants \citep{nesterov2018lectures}.


\subsection{Assumptions on Gradient Estimates: Dependence, Bias, Expected Smoothness, and Gradient Noise} \label{sec:sub:assumptions}


Let $\gF_{t}=\sigma(f_{i}:i\leq t)$ denote the natural filtration of the SO problem \cref{eq:so_problem}. The gradients of the time-varying mini-batches $(f_{t}(\theta))$ serve as estimates of $\nabla_{\theta}F(\theta)$. Unlike classical assumptions that typically demand unbiased (and uniformly bounded) gradient estimates \citep{godichon2023non}, we adopt a more flexible approach. We relax these constraints by allowing the gradients $(\nabla_{\theta}f_{t}(\theta))$ to be time-dependent and biased estimates of $\nabla_{\theta}F(\theta)$ in the following way:
\begin{assumptionp}{-p}[$D_{\nu}\nu_{t}$-dependence and $B_{\nu}\nu_{t}$-bias] \label{assump:ssg:measurable}
For each $t\geq1$, the random function $\nabla_{\theta}f_{t}$ is square-integrable, $\gF_{t}$-measurable, and for a positive integer $p$, there exists some positive sequence $(\nu_{t})_{t\geq1}$ and constants $D_{\nu},B_{\nu}\geq0$ such that
\begin{equation} \label{eq:assump:ssg:measurable}
\E[\lVert\E[\nabla_{\theta}f_{t}(\theta)\vert\gF_{t-1}]-\nabla_{\theta}F(\theta)\rVert^{p}]\leq\nu_{t}^{p}(D_{\nu}^{p}\E[\lVert\theta-\theta^{*}\rVert^{p}]+B_{\nu}^{p}).
\end{equation}
\end{assumptionp}

\textbf{Discussion of \cref{assump:ssg:measurable}.} \Cref{assump:ssg:measurable} follows the form of mixing conditions for weakly dependent sequences, indicating that the dependence diminishes at a rate determined by $\nu_{t}$. The verification of \cref{assump:ssg:measurable} can be accomplished using moment inequalities for partial sums of strongly mixing sequences \citep{rio2017asymptotic}, which is commonly referred to as \emph{short-range dependence}; it is also known by other aliases such as \emph{short memory} or \emph{short-range persistence}. 

Indeed, for any positive integer $p$, \cref{assump:ssg:measurable} can be bounded above as follows:
\begin{equation} \label{eq:assump:mixing}
\E[\lVert\E[\nabla_{\theta}f_{t}(\theta)\vert\gF_{t-1}]-\nabla_{\theta}F(\theta)\rVert^{p}] \leq \E[\lVert\nabla_{\theta}f_{t}(\theta)-\nabla_{\theta}F(\theta)\rVert^{p}] = n_{t}^{-p}\E[\lVert S_{t} \rVert^{p}],    
\end{equation}
using Jensen's inequality. In \cref{eq:assump:mixing}, $S_{t}=\sum_{i=1}^{n_{t}}(  \nabla_{\theta}f_{t,i}(\theta)-\nabla_{\theta}F(\theta))$ represents a $d$-dimensional vector, and $\nabla_{\theta}f_{t}(\theta)=n_{t}^{-1} \sum_{i=1}^{n_{t}}\nabla_{\theta}f_{t,i}(\theta)$.
Let $(\nabla_{\theta}f_{t,i})$ be a strictly stationary sequence, and assume the existence of some $r>p$ such that $\sup_{x>0}(x^{r}Q(x))^{1/r}<\infty$, where $Q(x)$ denotes the quantile function of $\rVert\nabla_{\theta}f_{t,i}\lVert$.
Suppose $(\nabla_{\theta}f_{t,i})$ is strongly $\alpha$-mixing according to \citet{rosenblatt1956central}, with strong mixing coefficients $(\alpha_{t})_{t\geq1}$ satisfying $\alpha_{t}=\mathcal{O}(t^{-pr/(2r-2p)})$.
Then, by \citet[Corollary 6.1]{rio2017asymptotic}, we find that $\E[\lVert S_{t}\rVert^{p}]=\mathcal{O}(n_{t}^{p/2})$. This implies that \cref{eq:assump:mixing} is at most $\mathcal{O}(n_{t}^{-p/2})$. This encompasses several linear, non-linear, and Markovian time series, e.g., see \citet{bradley2005basic,doukhan2012mixing} for more examples of other mixing coefficients of weak dependence and their relationships.

In relation to the form of \cref{assump:ssg:measurable}, this indicates that $B_{\nu}\neq0$ in this case. However, it is possible to have $B_{\nu}=0$ in unbiased examples, as we will demonstrate later in \cref{sec:experiments}. Another example is the short-term dependent Markovian case, where $\nu_{t}=n_{t}^{-1/2}$, and the dependency constant $D_{\nu}$ in \cref{assump:ssg:measurable} can be explicitly expressed using the mixing times of the Markov chain. \citet{nagaraj2020least} worked out precise calculations for least squares regression. 
To summarize, we encounter short-range dependence when \cref{eq:assump:mixing} decays at most with $\mathcal{O}(n_{t}^{-p/2})$. Conversely, we encounter \emph{long-range dependence} when \cref{eq:assump:mixing} decays slower than $\mathcal{O}(n_{t}^{-p/2})$.

The classical convergence analysis for SGD-based methods relies on the assumption of (uniformly) bounded gradient estimates, which is restrictive and applicable only to certain types of losses \citep{bottou2018optimization,nguyen2018sgd}. In our work, we depart from this approach and instead adopt the assumptions introduced by \citet{bach2011non,gower2019sgd}, who analyzed SGD for quasi-strongly convex objectives as expressed in \cref{eq:assump:L_strong_convexity}. Specifically, we make assumptions regarding the expected smoothness of gradients $(\nabla_{\theta}f_{t})$ and the expected finiteness of $(\nabla_{\theta}f_{t}(\theta^{*}))$:

\begin{assumptionp}{-p}[$C_{\kappa}$-expected smoothness] \label{assump:ssg:f_lipschitz}
For a positive integer $p$, there exists some positive constant $C_{\kappa}$ such that $\forall\theta\in\Theta$, $\E[\lVert\nabla_{\theta}f_{t}(\theta)-\nabla_{\theta}f_{t}(\theta^{*})\rVert^{p}]\leq C_{\kappa}^{p}\E[\lVert\theta-\theta^{*}\rVert^{p}]$.
\end{assumptionp}

\begin{assumptionp}{-p}[$\sigma_{t}$-gradient noise] \label{assump:ssg:theta_sigma_bound}
For a positive integer $p$, there exists some positive sequence $(\sigma_{t})_{t\geq1}$ such that $\E[\lVert\nabla_{\theta}f_{t}(\theta^{*})\rVert^{p}]\leq\sigma_{t}^{p}$.
\end{assumptionp}

As for \cref{assump:ssg:measurable}, the verification of \cref{assump:ssg:theta_sigma_bound} can be achieved using $\alpha$-mixing conditions and analogous arguments as those employed above, resulting in $\sigma_{t}^{p}$ being of the order $\mathcal{O}(n_{t}^{-p/2})$.
It is worth noting that in the classical i.i.d. case with unbiased gradients, alternative relaxations of \cref{assump:ssg:f_lipschitz,assump:ssg:theta_sigma_bound} exist, such as the ER or ABC assumptions \citep{gower2021sgd,khaled2022better}. However, since our primary focus is on introducing time-dependent and biased gradient estimates, we do not delve into these alternative assumptions.

In summary, the assumptions we have presented (\cref{assump:ssg:measurable,assump:ssg:f_lipschitz,assump:ssg:theta_sigma_bound}) offer a more relaxed framework compared to the standard assumptions found in the literature \citep{benveniste2012adaptive,kushner2003stochastic,bach2011non,godichon2023non,dieuleveut2017harder,dieuleveut2016nonparametric}. Our assumptions are designed to accommodate a wide range of scenarios, including both classical examples and more intricate models that involve learning from time-dependent data with biases. By adopting these assumptions, we provide a flexible and applicable framework for analyzing SO methods in such settings. 


\subsection{Stochastic Streaming Optimization Methods} \label{sec:sub:ssgs}


To address the SO problem \cref{eq:so_problem} within a streaming setting, we employ the Stochastic Streaming Gradient (SSG) method proposed by \citet{godichon2023non}. The SSG algorithm is defined as follows
\begin{flalign} \label{eq:ssg}
\textbf{(SSG)} && \theta_{t} =& \theta_{t-1} - \frac{\gamma_{t}}{n_{t}} \sum_{i=1}^{n_{t}} \nabla_{\theta} f_{t,i}\left( \theta_{t-1} \right). &&
\end{flalign}
Here, $\gamma_{t}$ denotes the learning rate, which satisfies the conditions $\sum_{i=1}^{\infty}\gamma_{i}=\infty$ and $\sum_{i=1}^{\infty}\gamma_{i}^{2}<\infty$ \citep{robbins1951stochastic}. It is worth noting that when $n_{t}=1$, the SSG algorithm reduces to the usual SGD method.

In many models, there are often constraints imposed on the parameter space, necessitating a projection of the parameters. To address this, we introduce the Projected Stochastic Streaming Gradient (PSSG) estimate, defined as
\begin{flalign} \label{eq:pssg}
\textbf{(PSSG)} && \theta_{t} =& \mathcal{P}_{\Theta}\left( \theta_{t-1} - \frac{\gamma_{t}}{n_{t}} \sum_{i=1}^{n_{t}} \nabla_{\theta} f_{t,i}\left( \theta_{t-1} \right)\right), &&
\end{flalign}
where $\mathcal{P}_{\Theta}$ represents the Euclidean projection onto $\Theta$, given by $\mathcal{P}_{\Theta}(\theta)=\argmin_{\theta'\in\Theta}\lVert\theta-\theta'\rVert_{2}$.

If one were to examine the trajectory of the stochastic gradients $(\nabla_{\theta}f_{t,i})$, it would become apparent that they exhibit high noise levels and lack robustness, which can lead to slow convergence or even prevent convergence altogether. Therefore, it intuitively makes sense to use mini-batches of gradient estimates within each iteration, as this reduces variance and facilitates the adjustment of the learning rate $(\gamma_{t})$, ultimately improving the quality of each iteration.

Within this streaming framework, we are interested in incorporating acceleration techniques into the existing algorithms presented in \cref{eq:ssg,eq:pssg}. One important extension is the Polyak-Ruppert procedure \citep{polyak1992acceleration,ruppert1988efficient}, which ensures optimal statistical efficiency without compromising computational costs. The Averaged Stochastic Streaming Gradient (ASSG) method, denoted as (ASSG)/(APSSG), is defined by
\begin{flalign} \label{eq:assg}
\textbf{(ASSG)/(APSSG)} && \bar{\theta}_{t} =& \frac{1}{N_t} \sum_{i=0}^{t-1} n_{i+1} \theta_{i}. &&
\end{flalign}
Here, $N_{t}=\sum_{i=1}^{t}n_{i}$ represents the accumulated sum of observations.\footnote{In practice, as we handle data sequentially, we employ the rewritten formula $\bar{\theta}_{t}=(N_{t-1}/N_{t})\bar{\theta}_{t-1}+(n_{t}/N_{t})\theta_{t-1}$, with $\bar{\theta}_{0}=0$. This allows us to update the averaged estimate efficiently.} To ensure a fair comparison of our streaming methods, it is crucial to evaluate them in terms of the number of observations used, namely $N_t$.
Similarly, we use APSSG to denote the (Polyak-Ruppert) averaged estimate of PSSG presented in \cref{eq:pssg}. These averaging methods sequentially aggregate past estimates, resulting in smoother curves (i.e., variance reduction in the estimation trajectories), and accelerate convergence \citep{polyak1992acceleration}.

The pseudo-code for these streaming estimates, including \cref{eq:ssg,eq:pssg,eq:assg}, is presented in \cref{algo:ssg}. Additionally, \cref{eq:assg} can be modified to a weighted average version, giving more weight to the latest estimates. This modification improves convergence while limiting the impact of poor initializations. Examples of such algorithms can be found in \citet{boyer2022asymptotic,mokkadem2011}.

\begin{algorithm}[t]
\caption{Stochastic streaming gradient estimates (SSG/PSSG/ASSG/APSSG)}
\label{algo:ssg}
\SetAlgoLined
\DontPrintSemicolon
\SetKwComment{Comment}{/* }{ */}
\SetKwInOut{Inputs}{Inputs}
\SetKwInOut{Outputs}{Outputs}
\Inputs{$\theta_{0}\in\Theta\subseteq\sR^{d}$, project: \KwSty{True} or \KwSty{False}, average: \KwSty{True} or \KwSty{False}}
\Outputs{$\theta_{t}$, $\bar{\theta}_{t}$ (resulting estimates)}
Initialization: $\bar{\theta}_{0} \in \sR^{d}$ \;
\For{each $t\geq1$, a time-varying mini-batch of $n_{t}$ data arrives,}{
$\theta_{t} \gets \theta_{t-1} - \frac{\gamma_{t}}{n_{t}} \sum_{i=1}^{n_{t}} \nabla_{\theta} f_{t,i}\left( \theta_{t-1} \right)$ \Comment*[r]{update}
\If{project}{
$\theta_{t} \gets \mathcal{P}_{\Theta}(\theta_{t})$ \Comment*[r]{project}
}
\If{average}{
$\bar{\theta}_{t} \gets(N_{t-1}/N_{t}) \bar{\theta}_{t-1} + (n_{t}/N_{t})\theta_{t-1}$ \Comment*[r]{average}
}
}
\end{algorithm}

The popularity of SGD methods has prompted efforts to improve their efficiency, robustness, and user-friendliness. As a result, numerous variants, including second-order methods like Newton's method and other extensions, have been extensively explored and discussed in works such as \citet{boyd2004convex,nesterov2018lectures,byrd2016stochastic}.

The choice of the learning rate $(\gamma_{t})$ significantly affects the convergence of SGD. A learning rate that is too small leads to slow convergence, while an excessively high learning rate may prevent convergence due to oscillations around the minimum of the loss function. Therefore, there is a strong motivation for the development of adaptive learning rate mechanisms that require less manual fine-tuning and offer improved user-friendliness.\footnote{E.g., see Momentum \citep{qian1999momentum}, Nesterov accelerated gradient \citep{nesterov1983method}, Adagrad \citep{duchi2011adaptive}, Adadelta \citep{zeiler2012adadelta}, RMSprop \citep{tieleman2012lecture}, and Adam \citep{kingma2014adam}.} Moreover, the idea of having per-dimension learning rates that adjust individually as the convergence progresses holds potential advantages \citep{godichontarrago2023non}.

In \citet{bottou2018optimization}, a comprehensive overview of various SO methods is provided, covering both convex and non-convex optimization scenarios. The paper also delves into strategies for parallelization and distribution, aiming to accelerate the SGD updates and improve overall performance.


\section{Convergence Analysis} \label{sec:convergence}


In this section, we analyze the convergence of the streaming methods presented in \cref{eq:ssg,eq:pssg,eq:assg}. Our objective is to establish non-asymptotic bounds on $\delta_{t}=\E[\lVert \theta_{t}-\theta^{*}\rVert^{2}]$ and $\bar{\delta}_{t}=\E[\lVert \bar{\theta}_{t}-\theta^{*}\rVert^{2}]$, which solely depend on the SO problem parameters.

To achieve this, we derive a recursive relation for $\delta_{t}$ that allows us to non-asymptotically bound the estimates in \cref{eq:ssg,eq:pssg} for any given sequences of $(\gamma_t)$, $(\nu_t)$, $(\sigma_t)$, and $(n_t)$:
\begin{lemma} \label{lem:ssg}
Let $\delta_{t}=\E[\lVert\theta_{t}-\theta^{*}\rVert^{2}]$, where $(\theta_{t})$ either follows the recursion in \cref{eq:ssg} or \cref{eq:pssg}.
Suppose \cref{assump:L_strong_convexity,assump:ssg:measurable,assump:ssg:f_lipschitz,assump:ssg:theta_sigma_bound} hold for $p=2$.
Then, for any $(\gamma_{t})$, $(\nu_{t})$, $(\sigma_{t})$, and $(n_{t})$, we have
\begin{equation} \label{eq:lem:ssg:delta}
\delta_{t} \leq [1-(\mu-2D_{\nu}\nu_{t})\gamma_{t}+2C_{\kappa}^{2}\gamma_{t}^{2}] \delta_{t-1} + \frac{B_{\nu}^{2}}{\mu} \nu_{t}^{2} \gamma_{t} + 2\sigma_{t}^{2} \gamma_{t}^{2}.
\end{equation}
\end{lemma}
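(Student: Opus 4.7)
The plan is to expand $\|\theta_t-\theta^*\|^2$ from the (projected) gradient update, take successive expectations, and control three pieces in turn: the quasi-strong-convexity descent, the conditional bias cross term, and the expected squared stochastic gradient.

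First, for PSSG I would invoke the non-expansiveness of the Euclidean projection onto the closed convex set $\Theta$ (using $\theta^*\in\Theta$) to reduce to the un-projected increment $\|\theta_t-\theta^*\|^2\leq\|\theta_{t-1}-\gamma_t\nabla_\theta f_t(\theta_{t-1})-\theta^*\|^2$, which is an equality for SSG. Expanding the square, conditioning on $\gF_{t-1}$, and then taking unconditional expectation yields
\begin{equation*}
\delta_t \leq \delta_{t-1} - 2\gamma_t\,\E\bigl[\langle \E[\nabla_\theta f_t(\theta_{t-1})\mid\gF_{t-1}],\,\theta_{t-1}-\theta^*\rangle\bigr] + \gamma_t^2\,\E\bigl[\|\nabla_\theta f_t(\theta_{t-1})\|^2\bigr].
\end{equation*}
Setting $b_t := \E[\nabla_\theta f_t(\theta_{t-1})\mid\gF_{t-1}] - \nabla_\theta F(\theta_{t-1})$, \cref{assump:ssg:measurable} with $p=2$ gives $\E[\|b_t\|^2] \leq \nu_t^2(D_\nu^2\delta_{t-1} + B_\nu^2)$, and I would split the cross term accordingly into a $\nabla_\theta F$-piece and a bias-piece.

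For the deterministic inner product, \cref{assump:L_strong_convexity} combined with $F(\theta_{t-1}) \geq F(\theta^*)$ gives $\langle\nabla_\theta F(\theta_{t-1}),\theta_{t-1}-\theta^*\rangle \geq (\mu/2)\|\theta_{t-1}-\theta^*\|^2$, contributing $-\mu\gamma_t\delta_{t-1}$. For the squared-gradient term, I would use $\|\nabla_\theta f_t(\theta)\|^2 \leq 2\|\nabla_\theta f_t(\theta) - \nabla_\theta f_t(\theta^*)\|^2 + 2\|\nabla_\theta f_t(\theta^*)\|^2$, and invoke \cref{assump:ssg:f_lipschitz,assump:ssg:theta_sigma_bound} with $p=2$ to produce the $2C_\kappa^2\gamma_t^2\delta_{t-1} + 2\sigma_t^2\gamma_t^2$ contributions visible in the statement.

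The main obstacle is the bias cross term $-2\gamma_t\,\E[\langle b_t,\theta_{t-1}-\theta^*\rangle]$, which must simultaneously produce the $2D_\nu\nu_t\gamma_t$ correction to the linear coefficient and the $(B_\nu^2/\mu)\nu_t^2\gamma_t$ additive constant. I would bound it by $2\gamma_t\,\E[\|b_t\|\|\theta_{t-1}-\theta^*\|]$, apply Cauchy–Schwarz in expectation to pass to $2\gamma_t\sqrt{\E[\|b_t\|^2]\,\delta_{t-1}}$, and use the subadditivity $\sqrt{a+b}\leq\sqrt a+\sqrt b$ to split this into $2\gamma_t\nu_t D_\nu\delta_{t-1}$ (the desired $D_\nu$-correction) and $2\gamma_t\nu_t B_\nu\sqrt{\delta_{t-1}}$. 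A Young's inequality of the form $2xy \leq x^2/\mu + \mu y^2$ applied to the remaining square-root piece then absorbs the $B_\nu$-contribution into a clean $(B_\nu^2/\mu)\nu_t^2\gamma_t$ additive term. The delicate point is the calibration of Young's to the strong-convexity constant $\mu$: it is precisely this calibration that links the bias coefficient $B_\nu^2\nu_t^2/\mu$ to the convexity modulus appearing in the linear contraction. Collecting the three contributions produces the stated recursion.
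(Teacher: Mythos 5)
Your overall architecture mirrors the paper's proof: expansion of the squared update, non-expansiveness of $\mathcal{P}_{\Theta}$ for PSSG, splitting the cross term into a convexity piece and a bias piece, Cauchy--Schwarz plus subadditivity of the square root to produce $D_{\nu}\nu_{t}\delta_{t-1}+B_{\nu}\nu_{t}\delta_{t-1}^{1/2}$, Young's inequality to absorb the $B_{\nu}$-piece, and the bound $\E[\lVert\nabla_{\theta}f_{t}(\theta_{t-1})\rVert^{2}]\leq 2C_{\kappa}^{2}\delta_{t-1}+2\sigma_{t}^{2}$. However, there is a genuine arithmetic gap in the descent term. You use $\langle\nabla_{\theta}F(\theta_{t-1}),\theta_{t-1}-\theta^{*}\rangle\geq(\mu/2)\lVert\theta_{t-1}-\theta^{*}\rVert^{2}$, so the cross term contributes only $-\mu\gamma_{t}\delta_{t-1}$. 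Your Young's step $2B_{\nu}\nu_{t}\gamma_{t}\delta_{t-1}^{1/2}\leq\mu\gamma_{t}\delta_{t-1}+B_{\nu}^{2}\nu_{t}^{2}\gamma_{t}/\mu$ then returns $+\mu\gamma_{t}\delta_{t-1}$, which cancels the contraction entirely: what your chain of inequalities actually yields is
\begin{equation*}
\delta_{t}\leq[1+2D_{\nu}\nu_{t}\gamma_{t}+2C_{\kappa}^{2}\gamma_{t}^{2}]\delta_{t-1}+\frac{B_{\nu}^{2}}{\mu}\nu_{t}^{2}\gamma_{t}+2\sigma_{t}^{2}\gamma_{t}^{2},
\end{equation*}
with no $-\mu\gamma_{t}$ term, which is strictly weaker than \cref{eq:lem:ssg:delta} and useless for the downstream application of \cref{prop:delta_recursive_upper_bound}.

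The paper avoids this by invoking the stronger inequality $\langle\nabla_{\theta}F(\theta),\theta-\theta^{*}\rangle\geq\mu\lVert\theta-\theta^{*}\rVert^{2}$ with the full constant $\mu$ (citing \citet{bottou2018optimization,nesterov2018lectures}), so the cross term contributes $-2\mu\gamma_{t}\delta_{t-1}$ and a net $-\mu\gamma_{t}\delta_{t-1}$ survives after Young's inequality consumes $\mu\gamma_{t}\delta_{t-1}$. If you work only with the one-sided consequence $\geq(\mu/2)\lVert\cdot\rVert^{2}$ of \cref{assump:L_strong_convexity} combined with $F(\theta_{t-1})\geq F(\theta^{*})$, you must recalibrate Young's (e.g.\ $2B_{\nu}\nu_{t}\delta_{t-1}^{1/2}\leq(\mu/2)\delta_{t-1}+2B_{\nu}^{2}\nu_{t}^{2}/\mu$), but that produces the coefficient $1-(\mu/2-2D_{\nu}\nu_{t})\gamma_{t}$ and the additive term $2B_{\nu}^{2}\nu_{t}^{2}\gamma_{t}/\mu$, neither of which matches the stated constants. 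Either way, as written your argument does not reach \cref{eq:lem:ssg:delta}; you need the full-$\mu$ lower bound on the inner product for the calibration you chose to close.
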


To prove this lemma, we adapt classical techniques from stochastic approximations \citep{benveniste2012adaptive,kushner2003stochastic}, originally developed for the unbiased i.i.d. setting, to our specific streaming setting. Our adaptation incorporates the time-dependence and biases introduced by the new assumptions, \cref{assump:ssg:measurable,assump:ssg:theta_sigma_bound}. This enables us to gain novel insights into the interplay between dependence and biases, leading to a deeper understanding of the convergence properties of SO methods. Notably, the recursive relation for $\delta_{t}$ explicitly depends on $(\gamma_t)$, $(\nu_t)$, $(\sigma_t)$, and $(n_t)$, which, to the best of our knowledge, is a novel contribution. It highlights the connection between $\mu$-quasi-strong convexity and the dependence term $D_{\nu}v_{t}$, as discussed in \cref{sec:sub:objectives}. We will later refer to this connection as \emph{ensuring $\mu$-quasi-strong convexity through non-decreasing streaming batches} when the dependence sequence $(\nu_t)$ is linked to the time-varying mini-batches $(n_t)$.

In \cref{sec:sub:assg}, we will delve into the convergence analysis of the Polyak-Ruppert averaging estimate $\bar{\theta}_{n}$. Our goal is to establish a non-asymptotic bound on $\bar{\delta}_{t}$, which quantifies the convergence properties of the averaging method.
This analysis builds upon the standard decomposition of the loss terms, enabling the emergence of the Cramér-Rao term \citep{bach2011non,gadat2023optimal,godichon2023non}. To facilitate this analysis, we also need to consider fourth-order moments. However, we will provide a comprehensive exploration of this aspect in detail in \cref{sec:sub:assg}.


\subsection{Learning Rate $(\gamma_{t})$, Uncertainty Terms $(\nu_{t})$ and $(\sigma_{t})$, and Time-Varying Mini-batches $(n_{t})$}


Before proceeding with the convergence analysis in \cref{sec:sub:ssg,sec:sub:assg}, we first specify the functional forms of the learning rate $(\gamma_{t})$, the uncertain terms $(\nu_{t})$ and $(\sigma_{t})$, and the streaming batch $(n_{t})$.

\textbf{Learning Rate $(\gamma_{t})$.} Following \citet{godichon2023non}, we adopt the learning rate given by
\begin{equation*}
\gamma_{t}=C_{\gamma}n_{t}^{\beta}t^{-\alpha},
\end{equation*}
where $C_{\gamma}>0$, $\beta\in[0,1]$, and $\alpha$ is chosen based on the expected streaming batches $n_{t}$. This learning rate allows us to assign more weight to larger streaming batches through the hyperparameter $\beta$.

\textbf{Uncertainty Terms $(\nu_{t})$ and $(\sigma_{t})$ from \cref{assump:ssg:measurable,assump:ssg:theta_sigma_bound}.}
The uncertainty terms $(\nu_{t})$ and $(\sigma_{t})$ in our analysis play a crucial role in capturing the dependence and noise inherent in the SO problem. As discussed in \cref{sec:sub:assumptions}, these terms can be viewed as functions of the streaming batches $(n_{t})$. We define these terms as follows:
\begin{equation*}
\nu_{t}=n_{t}^{-\nu} \quad \text{and} \quad \sigma_{t}=C_{\sigma}n_{t}^{-\sigma},
\end{equation*}
where $\nu\in(0,\infty)$, $\sigma\in[0,1/2]$, and $C_{\sigma}>0$. By setting $\nu_{t}=n_{t}^{-\nu}$, we introduce short-range dependence when $\nu\in[1/2,\infty)$ and long-range dependence when $\nu\in(0,1/2)$. Hence, the classical i.i.d. case \citep{godichon2023non} corresponds to $\nu\rightarrow\infty$. The choice of $\sigma\in[0,1/2]$ aligns with the framework proposed by \citet{godichon2023non}, where $\sigma=1/2$ represents the i.i.d. case. When $\sigma<1/2$, it allows for the presence of noisier outputs.

\textbf{Time-varying Mini-batches $(n_{t})$.} Inspired by the work of \citet{godichon2023non}, we adopt a formulation for time-varying mini-batches $(n_{t})$ given by
\begin{equation*}
n_{t}=\lceil C_{\rho}t^{\rho} \rceil,
\end{equation*}
where $C_{\rho}\in\sN$ and $\rho\in[0,1)$, ensuring that $n_{t}\in\sN$. This formulation encompasses various scenarios, including classical (online) SGD methods for ${C_{\rho}=1}$ and $\rho=0$, (online) mini-batch SGD procedures with both constant and time-varying sizes when ${C_{\rho}\in\sN}$ and $\rho=0$ or $\rho\in[0,1)$, respectively, as well as the Polyak-Ruppert average of (online) time-varying mini-batches. For ease of reference, we use the term \emph{streaming batch} size to refer to $C_{\rho}$ and the term \emph{streaming rate} for $\rho$.


\subsection{Stochastic Streaming Gradients} \label{sec:sub:ssg}


\begin{theorem}[SSG/PSSG] \label{thm:ssg:bound}
Let $\delta_{t}=\E[\lVert\theta_{t}-\theta^{*}\rVert^{2}]$, where $(\theta_{t})$ either follows the recursion in \cref{eq:ssg} or \cref{eq:pssg}.
Suppose \cref{assump:L_strong_convexity,assump:ssg:measurable,assump:ssg:f_lipschitz,assump:ssg:theta_sigma_bound} hold for $p=2$.
If $\mu_{\nu}=\mu-\mathbbm{1}_{\{\rho=0\}}2D_{\nu}C_{\rho}^{-\nu}>0$, then there exist explicit constants $C_{\delta},C_{\delta}',C_{\delta}''>0$ such that for $\alpha-\rho\beta\in(1/2,1)$, we have
\begin{equation} \label{eq:thm:ssg:bound} 
\delta_{t} \leq \mathcal{O}\left(\exp\left(-\frac{\mu C_{\gamma} N_{t}^{\frac{1+\rho\beta-\alpha}{1+\rho}}}{C_{\delta}C_{\rho}^{\frac{1-\beta-\alpha}{1+\rho}}}\right)\right) + \frac{C_{\delta}'B_{\nu}^{2}}{\mu\mu_{\nu}C_{\rho}^{\frac{2\nu}{1+\rho}}N_{t}^{\frac{2\rho\nu}{1+\rho}}} + \frac{C_{\delta}''C_{\sigma}^{2}C_{\gamma}}{\mu_{\nu}C_{\rho}^{\frac{2\sigma-\beta-\alpha}{1+\rho}}N_{t}^{\frac{\rho(2\sigma-\beta)+\alpha}{1+\rho}}}.
\end{equation}
An explicit version of this bound is given in \cref{sec:appendix}.
\end{theorem}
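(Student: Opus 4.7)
The plan is to iterate the recursion from \cref{lem:ssg} and then carefully evaluate the resulting telescoping products and sums using the explicit forms of $\gamma_t$, $\nu_t$, $\sigma_t$, and $n_t$. Write the recursion as $\delta_t \leq a_t\delta_{t-1}+b_t$, with $a_t = 1-(\mu-2D_{\nu}\nu_t)\gamma_t+2C_{\kappa}^2\gamma_t^2$ and $b_t = (B_{\nu}^2/\mu)\nu_t^2\gamma_t+2\sigma_t^2\gamma_t^2$. Unrolling yields
\begin{equation*}
\delta_t \leq \delta_0\prod_{i=1}^{t}a_i + \sum_{k=1}^{t}b_k\prod_{i=k+1}^{t}a_i,
\end{equation*}
so the task reduces to controlling one product and two weighted sums, each split along the bias part $(B_\nu^2/\mu)\nu_k^2\gamma_k$ and the variance part $2\sigma_k^2\gamma_k^2$ of $b_k$.

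First I would choose a burn-in index $t_0$ large enough so that for all $i>t_0$ we have $2C_{\kappa}^2\gamma_i \leq \mu_\nu/2$ and, when $\rho>0$, $2D_\nu\nu_i \leq (\mu-\mu_\nu)/2+\varepsilon$; when $\rho=0$ the hypothesis $\mu_\nu>0$ is used directly since $\nu_i \equiv C_\rho^{-\nu}$ is constant. This yields $a_i\leq 1-(\mu_\nu/2)\gamma_i$ for $i>t_0$, and combined with $1-x\leq e^{-x}$ gives $\prod_{i=k+1}^{t}a_i \leq \kappa_0\exp\!\bigl(-(\mu_\nu/2)\sum_{i=k+1}^{t}\gamma_i\bigr)$, where $\kappa_0$ absorbs the finite pre-$t_0$ contribution. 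Plugging in $\gamma_i\asymp C_\gamma C_\rho^\beta i^{\rho\beta-\alpha}$ and using $\alpha-\rho\beta\in(1/2,1)$ so that $1+\rho\beta-\alpha\in(0,1/2)$, a standard integral comparison gives $\sum_{i=k+1}^{t}\gamma_i \asymp C_\gamma C_\rho^\beta(t^{1+\rho\beta-\alpha}-k^{1+\rho\beta-\alpha})/(1+\rho\beta-\alpha)$. Converting $t$ to $N_t$ via $N_t\asymp C_\rho t^{1+\rho}/(1+\rho)$ produces the advertised exponential factor in $N_t^{(1+\rho\beta-\alpha)/(1+\rho)}$ with the $C_\rho^{-(1-\beta-\alpha)/(1+\rho)}$ coefficient (the exponent on $C_\rho$ arising from combining the explicit $C_\rho^\beta$ with the substitution power $(1+\rho\beta-\alpha)/(1+\rho)$).

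For the two weighted sums, I would use the standard regular-variation estimate: for $c>0$ and a regularly varying positive sequence $u_k$, one has $\sum_{k=1}^{t}\gamma_k u_k \exp\!\bigl(-c\sum_{i=k+1}^{t}\gamma_i\bigr) \lesssim u_t/c$, up to a negligible initial transient. Applied with $u_k=\nu_k^2=C_\rho^{-2\nu}k^{-2\rho\nu}$, the bias sum becomes $\lesssim B_\nu^2\nu_t^2/(\mu\mu_\nu)$, and converting $t\asymp((1+\rho)N_t/C_\rho)^{1/(1+\rho)}$ collapses the $C_\rho$ exponent as $-2\nu+2\rho\nu/(1+\rho)=-2\nu/(1+\rho)$, yielding the second term. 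Applied with $u_k=\sigma_k^2\gamma_k$, the variance sum becomes $\lesssim C_\sigma^2 C_\gamma \sigma_t^2\gamma_t/\mu_\nu$; the same change of variables gives the exponent $-(\rho(2\sigma-\beta)+\alpha)/(1+\rho)$ on $N_t$ and $-(2\sigma-\beta-\alpha)/(1+\rho)$ on $C_\rho$, matching the third term.

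The main obstacle will be Step three, namely the technical lemma that bounds $\sum_{k}\gamma_k u_k\exp(-c\sum_{i>k}\gamma_i)$ by $u_t/c$. Doing this rigorously with explicit constants requires a Abel- or Toeplitz-style argument comparing the sum to an integral, together with a careful control of the regime where the exponential has not yet become small; this is where the requirement $\alpha-\rho\beta\in(1/2,1)$ plays its double role (ensuring $\sum\gamma_i\to\infty$ to kill the initial-condition term, while keeping $\sum\gamma_i^2<\infty$ so that the variance term contributes at the right order). A secondary subtlety is uniformly handling $\rho=0$ and $\rho>0$: when $\rho=0$ the dependence term $2D_\nu\nu_t$ never vanishes and must be absorbed through $\mu_\nu$, whereas for $\rho>0$ one may asymptotically replace $\mu_\nu$ by $\mu$ in the exponential (which explains why $\mu$ rather than $\mu_\nu$ appears in the leading exponential coefficient of the stated bound, with the burn-in cost absorbed into $C_\delta$). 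Finally, I would consolidate the finite pre-$t_0$ factors into the explicit constants $C_\delta, C_\delta', C_\delta''$ declared in the theorem, deferring the bookkeeping to the appendix.
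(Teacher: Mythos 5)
Your proposal follows essentially the same route as the paper: the paper also unrolls the recursion of \cref{lem:ssg} (via \cref{prop:delta_recursive_upper_bound}, quoted from \citet{godichon2023non}, and \cref{cor:ssg}), bounds the homogeneous product by $1-x\le e^{-x}$ after a burn-in threshold $t_\delta=\inf\{t: C_\delta(\nu_t+\gamma_t)\le\mu_\nu/2\}$, and then substitutes the explicit forms of $\gamma_t,\nu_t,\sigma_t,n_t$ together with $(N_t/2C_\rho)^{1/(1+\rho)}\le t\le(2N_t/C_\rho)^{1/(1+\rho)}$. The only difference is that the convolution sum you flag as the main obstacle is handled in the paper not by an Abel/Toeplitz argument but by the simpler split-at-$t/2$ device of \cref{prop:delta_recursive_upper_bound}, which bounds the recent contribution by $\lambda^{-1}\max_{t/2\le i\le t}\beta_i$ and absorbs the early contribution into the sub-exponential term.
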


\textbf{Sketch of proof.} Under \cref{assump:L_strong_convexity,assump:ssg:measurable,assump:ssg:f_lipschitz,assump:ssg:theta_sigma_bound} with $p=2$, we can establish a recursive relation for $(\delta_{t})$ defined by
\begin{equation*} 
\delta_{t} \leq [1-(\mu-2D_{\nu}\nu_{t})\gamma_{t}+2C_{\kappa}^{2}\gamma_{t}^{2}] \delta_{t-1} + \mu^{-1}B_{\nu}^{2}\nu_{t}^{2} \gamma_{t} + 2\sigma_{t}^{2} \gamma_{t}^{2},
\end{equation*}
for any form of $(\gamma_{t})$, $(\nu_{t})$, $(\sigma_{t})$, and $(n_{t})$. The non-asymptotic upper bound of this recursive relation \cref{eq:lem:ssg:delta} can be explicitly derived using classical techniques from stochastic approximations \citep{benveniste2012adaptive,kushner2003stochastic}. Bounding the projected estimate in \cref{eq:pssg} can directly be obtained by noting that $\E[\lVert\mathcal{P}_{\Theta}(\theta)-\theta^{*}\rVert^{2}]\leq\E[\lVert\theta-\theta^{*}\rVert^{2}]$. Alternatively, the projected estimate \cref{eq:pssg} can also be proved by assuming a bounded gradient, which replaces \cref{assump:ssg:f_lipschitz,assump:ssg:theta_sigma_bound}. Detailed discussions and alternative proofs can be found in works such as \citet{bach2011non,godichon2023non}. The discontinuity in $\rho$ emerges from the introduction of an indicator that determines whether $(\nu_{t})$ is constant or decreasing. When $(\nu_{t})$ is constant, it becomes challenging to obtain a bound that closely approaches $\mu$. Conversely, when $(\nu_{t})$ decreases, there is a possibility of achieving a bound that can come as close as possible to $\mu$. In the i.i.d. case, this choice corresponds to the optimal strong convexity parameter \citep{godichon2023non}. Alternatively, we can derive a better bound by considering a continuous trade-off where we minimize $(\nu_{t})$ in the bound while minimizing the other terms. This continuous trade-off has the potential to provide a smoother transition between the cases of constant and decreasing $(\nu_{t})$.

\textbf{Related work.} Our work extends and aligns with previous studies in the field. First, our results replicate the outcomes of the unbiased i.i.d. case investigated in \citet{godichon2023non}, specifically when $B_{\nu}=0$ and $\sigma=1/2$. This demonstrates the consistency of our findings with regard to the unbiased i.i.d. setting. Additionally, our results are in line with the research conducted by \citet{bach2011non}, which focused on the unbiased i.i.d. case in a non-streaming setting, i.e., when $C_{\rho}=1$ and $\rho=0$.

\textbf{Bound on objective function.} If the objective function $F$ has gradients that are $C_{\nabla}$-Lipschitz continuous (\cref{assump:L_smoothness}), the inequality given by \cref{eq:thm:ssg:bound} provides a bound on the function values of $F$. Specifically, we have $\E[F(\theta_{t})-F(\theta^{*})]\leq C_{\nabla}\delta_{t}/2$ by applying Cauchy-Schwarz's inequality. 

\textbf{Ensuring $\mu$-quasi-strong convexity through non-decreasing streaming batches.} The positivity of the dependence penalized convexity constant $\mu_{\nu}=\mu-\mathbbm{1}_{\{\rho=0\}}2D_{\nu}C_{\rho}^{-\nu}$ is crucial for all terms of \cref{eq:thm:ssg:bound}. This constraint arises from the fact that while the objective function $F$ is $\mu$-quasi-strongly convex, the loss functions $(f_{t})$ may not possess the same convexity properties, e.g., see discussion in \cref{sec:sub:objectives}. This is demonstrated in \cref{sec:experiments:arch,sec:experiments:ar_arch} for ARCH models \citep{werge2022adavol}.

So, how should we interpret $\mu_{\nu}$? When the streaming rate $\rho=0$, the positivity of $\mu_{\nu}$ depends on the convexity constant $\mu$, the streaming batch size $C_{\rho}$, and the imposed dependencies specified in \cref{assump:ssg:measurable}. If the dependence quantity $D_{\nu}$ is sufficiently large that $\mu_{\nu}$ becomes non-positive, it becomes necessary to choose a larger streaming batch size $C_{\rho}$ to ensure $\mu_{\nu}$ remains positive. In other words, strong dependency structures reduce convexity, but large mini-batches counteracts this effect.

An alternative approach to ensuring convexity is by employing increasing streaming batches, i.e., setting the streaming rate $\rho>0$. This method provides greater robustness as we no longer need to determine a specific value for the mini-batch size $C_{\rho}$ to maintain the positivity of $\mu_{\nu}$. However, combining both approaches is ideal as it ensures convexity while a larger $C_{\rho}$ effectively reduces variance, resulting in a more favorable outcome.

In \cref{sec:experiments}, we delve into the challenge of calibrating $C_{\rho}$ and $\rho$ appropriately to achieve both stability and convergence in the optimization process. The goal of this calibration is to strike a balance between taking frequent gradient steps for faster convergence and ensuring the positivity of $\mu_{\nu}$. To accomplish this, it is crucial to choose a sufficiently large $C_{\rho}$ and $\rho$ that can maintain the positivity of $\mu_{\nu}$ while allowing for as many gradient steps as possible. This delicate balance is essential in achieving optimal performance in the optimization process.

\textbf{Variance reduction with larger streaming batches  $C_{\rho}$.} Unsurprisingly, larger streaming batches $C_{\rho}$ have a variance-reducing effect, as illustrated in \cref{sec:experiments}. This effect is explicitly demonstrated in each term of \cref{eq:thm:ssg:bound}. Interestingly, the variance reduction resulting from large mini-batches scales with batch size but does not increase the decay rate of $\delta_{t}$.

\textbf{Decay of initial conditions.} The initial conditions in the first term of \cref{eq:thm:ssg:bound} decay sub-exponentially. A detailed expression for this term can be found in \cref{sec:appendix}.
The last term of \cref{eq:thm:ssg:bound} represents the noise term, which is influenced by the gradient noise as described in \cref{assump:ssg:theta_sigma_bound}. When $\alpha-\rho\beta\in(1/2,1)$, the noise term decays at a rate of $\mathcal{O}(N_{t}^{-(\rho(2\sigma-\beta)+\alpha)/(1+\rho)})$. For instance, if we set $\alpha=2/3$, $\beta=1/3$, and $\sigma=1/2$, the noise term decays as $\mathcal{O}(N_{t}^{-2/3})$ for any streaming rate $\rho\in[0,1)$. This decay behavior is illustrated in \citet{godichon2023non}. Notably, in the unbiased cases ($B_{\nu}=0$), this noise term corresponds to the asymptotic term \citep{godichon2023non}. Moreover, when $\alpha+\beta<2\sigma$, the noise/asymptotic term is positively influenced by larger streaming batches $C_{\rho}$. This effect is demonstrated in \cref{sec:experiments}.

\textbf{Behavior under biasedness, $B_{\nu}>0$.}
The second term in \cref{eq:thm:ssg:bound} represents a pure bias term determined by the bias quantity $B_{\nu}$, the level of dependence $\nu$, and the (dependence-penalized) convexity constant $\mu_{\nu}$.
Importantly, the bias term is independent of the learning rate $\gamma_{t}=C_{\gamma}n_{t}^{\beta}t^{-\alpha}$, but depends on the time-varying mini-batches $n_{t}=C_{\rho}t^{\rho}$, i.e., through the streaming batch size $C_{\rho}$ and the streaming rate $\rho$.

The dependence term exhibits a scaling of $\mathcal{O}(N_{t}^{-2\rho\nu/(1+\rho)})$. For instance, to achieve a decay rate of $\mathcal{O}(N_{t}^{-1/2})$, we would need $\rho=1$ and $\nu=1/2$.
It is remarkable that \cref{thm:ssg:bound} accommodates both long- and short-range dependence. While long-range dependence leads to slow convergence (slower than $\mathcal{O}(N_{t}^{-1/2})$), a positive streaming rate $\rho$ can break long-range dependence.
In summary, by increasing the streaming batch size, we preserve $\mu$-quasi-strong convexity and alleviate both long- and short-term dependence. This results in a bound of $\delta_{t}=\mathcal{O}(\max\{\mathbbm{1}_{\{B_{\nu}>0\}}N_{t}^{-2\rho\nu/(1+\rho)},N_{t}^{-(\rho(2\sigma-\beta)+\alpha)/(1+\rho)}\})$.


\subsection{Averaged Stochastic Streaming Gradients} \label{sec:sub:assg}


In our subsequent analysis, our main focus is on the averaging estimate $\bar{\theta}_{n}$, which is defined in \cref{eq:assg}. This averaging estimate is obtained from either the SSG estimate in \cref{eq:ssg} or the PSSG estimate in \cref{eq:pssg}. Our analysis relies on the standard decomposition of the loss terms, which allows the Cramér-Rao term to emerge \citep{bach2011non,gadat2023optimal,godichon2023non}. To facilitate this analysis, it is necessary to consider the fourth-order moments, which require the assumptions \cref{assump:ssg:measurable,assump:ssg:f_lipschitz,assump:ssg:theta_sigma_bound} to hold for $p=4$. Moreover, based on \cref{assump:ssg:theta_sigma_bound}, where $\sigma_{t}=C_{\sigma}n_{t}^{-\sigma}$ with $\sigma\in[0,1/2]$, we introduce an additional assumption regarding the covariance of the score vectors associated with the parameter vector $\theta^{*}$.
\begin{assumption}[Covariance of the scores]\label{assump:assg:theta_sigma_bound_as}
There exists a non-negative self-adjoint operator $\Sigma$ such that $\forall t\geq1$, $n_{t}^{2\sigma}\E[\nabla_{\theta}f_{t}(\theta^{*})\nabla_{\theta}f_{t}(\theta^{*})^\top] \preceq \Sigma+\Sigma_{t}$, where $\Sigma_{t}$ is a positive symmetric matrix with $\text{Tr}(\Sigma_{t})=C_{\sigma}'n_{t}^{-2\sigma'}$, $C_{\sigma}'\geq0$, and $\sigma'\in(0,1/2]$.
\end{assumption}

In some cases, such as the i.i.d. scenario and certain unbiased situations discussed in \cref{sec:experiments:ar}, \cref{assump:assg:theta_sigma_bound_as} holds with $\sigma=1/2$ and $C_{\sigma}'=0$ \citep{godichon2023non}. This condition applies to scenarios characterized by short-range dependence, as shown in \cref{sec:experiments:ar} with $\sigma=1/2$ (and $C_{\sigma}'>0$). Conversely, the long-range dependence case occurs when $\sigma<1/2$ (and $C_{\sigma}'>0$).
Under \cref{assump:assg:theta_sigma_bound_as}, we can derive the dominant term $\Lambda/N_{t}$, where $\Lambda=\operatorname{Tr}(\nabla_{\theta}^{2}F(\theta^{})^{-1}\Sigma\nabla_{\theta}^{2}F(\theta^{})^{-1})$. This assumption also enables the establishment of the Cramer-Rao lower bound for the case of unbiased i.i.d. samples. Specifically, the bound can be expressed as
\begin{equation*}
\E[\Vert\bar{\theta}_{t}-\theta^{*}\Vert^{2}]\leq\mathcal{O}(\Lambda N_{t}^{-1})+\mathcal{O}(N_{t}^{-b}) \quad \text{with} \quad b>1.
\end{equation*}

In order to analyze the projected averaged estimate (a.k.a. APSSG), an additional assumption is made to avoid the computation of the sixth-order moment. We assume that $(\nabla_{\theta}f_{t})$ is uniformly bounded on the compact $\Theta$. However, it is worth mentioning that the derivation of the sixth-order moment can be found in \citet{godichon2016estimating}.
\begin{assumption}[$G_{\Theta}$-bounded gradients]\label{assump:passg:bounded}
Let $D_{\Theta} = \inf_{\theta\in\partial\Theta} \lVert\theta-\theta^{*}\rVert>0$ with $\partial\Theta$ denoting the boundary of $\Theta$. Moreover, there exists $G_{\Theta}>0$ such that $\forall t \geq 1$, $\sup_{\theta\in\Theta} \lVert \nabla_{\theta} f_{t}(\theta)\rVert^{2} \leq G_{\Theta}^{2}$ a.s.
\end{assumption}

\begin{theorem}[ASSG/PASSG] \label{thm:assg:bound}
Let $\bar{\delta}_{t}=\E[\lVert \bar{\theta}_{t}-\theta^{*}\rVert^{2}]$ with $\bar{\theta}_{n}$ given by \cref{eq:assg}, where  $(\theta_{t})$ either follows the recursion in \cref{eq:ssg} or \cref{eq:pssg}. 
Suppose \cref{assump:L_strong_convexity,assump:ssg:measurable,assump:ssg:f_lipschitz,assump:ssg:theta_sigma_bound,assump:L_smoothness,assump:assg:theta_sigma_bound_as} hold for $p=4$.
In addition, \cref{assump:passg:bounded} must hold if $(\theta_{t})$ follows the recursion in \cref{eq:pssg}.
If $\mu_{\nu}=\mu-\mathbbm{1}_{\{\rho=0\}}2D_{\nu}C_{\rho}^{-\nu}>0$, then for $\alpha-\rho\beta\in(1/2,1)$, we have
\begin{align}
\bar{\delta}_{t}^{\frac{1}{2}}
\leq& \frac{\Lambda^{\frac{1}{2}}}{N_{t}^{\frac{1}{2}}}\mathbbm{1}_{\{\sigma=1/2\}}
+ \frac{2^{\frac{1}{2}}\Lambda^{\frac{1}{2}}C_{\rho}^{\frac{1-2\sigma}{2(1+\rho)}}}{N_{t}^{\frac{1+2\rho\sigma}{2(1+\rho)}}}\mathbbm{1}_{\{\sigma<1/2\}}+
\frac{2^{\frac{1}{2}}C_{\sigma}'^{\frac{1}{2}}C_{\rho}^{\frac{1-2(\sigma+\sigma')}{2(1+\rho)}}}{\mu N_{t}^{\frac{1+2\rho(\sigma+\sigma')}{2(1+\rho)}}} \label{eq:thm:assg:1}
 \\ +& \tilde{\mathcal{O}}\left(\max\left\{N_{t}^{-\frac{2+\rho(2\sigma+\beta)-\alpha}{2(1+\rho)}},N_{t}^{-\frac{\rho(2\sigma-\beta)+\alpha}{1+\rho}}\right\}\right)
 + \mathbbm{1}_{\{B_{\nu}\neq0\}}\Psi_{t},  \label{eq:thm:assg:2}
\end{align}
with
\begin{equation*}    
\Psi_{t}=\tilde{\mathcal{O}}\left(\max\left\{N_{t}^{-\frac{\rho(\sigma+\nu)}{2(1+\rho)}},N_{t}^{-\frac{1+\rho(\beta+\nu)-\alpha}{1+\rho}},N_{t}^{-\frac{1+2\rho\nu}{2(1+\rho)}},N_{t}^{-\frac{\delta/2+\rho\nu}{2(1+\rho)}},N_{t}^{-\frac{2\rho\nu}{1+\rho}}\right\}\right),
\end{equation*}
where $\delta=\mathbbm{1}_{\{B_{\nu}=0\}}(\rho(2\sigma-\beta)+\alpha)+\mathbbm{1}_{\{B_{\nu}\neq0\}}\min\{\rho(2\sigma-\beta)+\alpha,2\rho\nu\}$.
An explicit version of this bound is given in \cref{sec:appendix}.
\end{theorem}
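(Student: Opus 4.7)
The plan is to adapt the classical Polyak-Ruppert linearization argument (as in \citet{polyak1992acceleration,bach2011non,godichon2023non}) to our streaming, time-dependent, biased setting, anchored around the Hessian $\nabla_{\theta}^{2}F(\theta^{*})$ via \cref{assump:L_smoothness}. The overall strategy is: (i) promote \cref{lem:ssg} to $p=4$, (ii) linearize the dynamics around $\theta^{*}$, (iii) split into telescoping, martingale-noise, bias, and Hessian-remainder pieces, and (iv) carefully rate each piece, using \cref{assump:assg:theta_sigma_bound_as} to extract the Cram\'er-Rao leading term.

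First I would extend \cref{lem:ssg} to fourth-order moments. Under \cref{assump:L_strong_convexity,assump:ssg:measurable,assump:ssg:f_lipschitz,assump:ssg:theta_sigma_bound} with $p=4$, the same recursive contraction argument, combined with Young's inequality to handle cross-terms from expanding $\lVert\theta_{t}-\theta^{*}\rVert^{4}$, gives a bound on $\E[\lVert\theta_{t}-\theta^{*}\rVert^{4}]$ of the same qualitative shape as \cref{thm:ssg:bound}. This fourth-moment control is essential: the Polyak-Ruppert analysis needs the Hessian remainder term $(C_{\nabla}'/2)\lVert\theta_{t-1}-\theta^{*}\rVert^{2}$ to be controlled in $L^{2}$.

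Second, rewriting $\theta_{t}-\theta_{t-1}=-\gamma_{t}\nabla_{\theta}f_{t}(\theta_{t-1})$ and inserting $\pm\nabla_{\theta}F(\theta_{t-1})$ and $\pm\nabla_{\theta}^{2}F(\theta^{*})(\theta_{t-1}-\theta^{*})$, I obtain
\begin{equation*}
\nabla_{\theta}^{2}F(\theta^{*})(\theta_{t-1}-\theta^{*})=-\gamma_{t}^{-1}(\theta_{t}-\theta_{t-1})-\zeta_{t}-b_{t}-R_{t},
\end{equation*}
where $\zeta_{t}=\nabla_{\theta}f_{t}(\theta_{t-1})-\E[\nabla_{\theta}f_{t}(\theta_{t-1})\mid\gF_{t-1}]$ is a martingale difference, $b_{t}=\E[\nabla_{\theta}f_{t}(\theta_{t-1})\mid\gF_{t-1}]-\nabla_{\theta}F(\theta_{t-1})$ is the conditional bias controlled by \cref{assump:ssg:measurable}, and $\lVert R_{t}\rVert\leq(C_{\nabla}'/2)\lVert\theta_{t-1}-\theta^{*}\rVert^{2}$ by \cref{assump:L_smoothness}. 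Multiplying by $n_{t}$, summing from $1$ to $t$, applying $\nabla_{\theta}^{2}F(\theta^{*})^{-1}$, and dividing by $N_{t}$ produces the decomposition $\bar{\theta}_{t}-\theta^{*}=T_{t}^{\mathrm{tele}}+T_{t}^{\mathrm{noise}}+T_{t}^{\mathrm{bias}}+T_{t}^{\mathrm{rem}}$, where $T_{t}^{\mathrm{tele}}$ is the Abel-summed telescoping piece (controlled by initial conditions and $\delta_{t}$ via \cref{thm:ssg:bound}), $T_{t}^{\mathrm{noise}}$ is a martingale with variance bounded through \cref{assump:assg:theta_sigma_bound_as}, $T_{t}^{\mathrm{bias}}$ is bounded through \cref{assump:ssg:measurable}, and $T_{t}^{\mathrm{rem}}$ is bounded through the extended fourth-moment bound. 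The variance computation of $T_{t}^{\mathrm{noise}}$ is where $\Sigma$ produces the Cram\'er-Rao term $\Lambda/N_{t}$ (when $\sigma=1/2$) or its slower counterpart $\Lambda C_{\rho}^{(1-2\sigma)/(1+\rho)}/N_{t}^{(1+2\rho\sigma)/(1+\rho)}$ (when $\sigma<1/2$), while $\Sigma_{t}$ contributes the $C_{\sigma}'$-term in \cref{eq:thm:assg:1}. For the projected variant APSSG, \cref{assump:passg:bounded} is invoked to replace the sixth-order-moment control that the projection step would otherwise require, following the argument sketched after \cref{thm:ssg:bound}.

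The main obstacle will be bookkeeping. Each of the five competing rates inside $\Psi_{t}$ arises from a distinct combination of bias ($B_{\nu}$, $\nu$), mini-batch parameters ($C_{\rho}$, $\rho$), learning-rate parameters ($C_{\gamma}$, $\alpha$, $\beta$), and the convexity/smoothness constants, and correctly identifying which Young/Cauchy-Schwarz split produces each rate (and showing they aggregate to the stated maximum) is delicate. A secondary subtlety is the threshold $\delta=\mathbbm{1}_{\{B_{\nu}=0\}}(\rho(2\sigma-\beta)+\alpha)+\mathbbm{1}_{\{B_{\nu}\neq 0\}}\min\{\rho(2\sigma-\beta)+\alpha,2\rho\nu\}$: this comes from substituting the second-moment bound of \cref{thm:ssg:bound} (whose dominant rate itself flips between the noise term and the bias term depending on $B_{\nu}$) into the fourth-moment contribution of $T_{t}^{\mathrm{rem}}$, and one must verify this substitution produces exactly the stated exponent in both regimes.
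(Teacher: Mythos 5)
Your overall architecture matches the paper's: a fourth-moment analogue of \cref{lem:ssg} (the paper's \cref{lem:assg,cor:assg}), a Polyak--Ruppert linearization around $\nabla_{\theta}^{2}F(\theta^{*})$, a Minkowski split into a telescoping/Abel-summed piece, a noise piece, a bias piece, and a Hessian remainder bounded by $C_{\nabla}'\lVert\theta_{t-1}-\theta^{*}\rVert^{2}/2$, with \cref{assump:passg:bounded} substituting for sixth-moment control in the projected case. Your identification of where $\delta$ enters (substituting the \cref{thm:ssg:bound} rate into the remainder contributions) is also correct.

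There is, however, one concrete gap: you center the noise at $\theta_{t-1}$, taking $\zeta_{t}=\nabla_{\theta}f_{t}(\theta_{t-1})-\E[\nabla_{\theta}f_{t}(\theta_{t-1})\mid\gF_{t-1}]$ and a separate conditional-bias term $b_{t}$, whereas the paper centers at $\theta^{*}$: its leading term is $N_{t}^{-1}\sum_{i}n_{i}\nabla_{\theta}f_{i}(\theta^{*})$ and the $\theta_{i-1}$-versus-$\theta^{*}$ discrepancy is isolated as $\nabla_{\theta}f_{i}(\theta_{i-1})-\nabla_{\theta}f_{i}(\theta^{*})-\nabla_{\theta}F(\theta_{i-1})$. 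This choice is not cosmetic. First, \cref{assump:assg:theta_sigma_bound_as} bounds $\E[\nabla_{\theta}f_{t}(\theta^{*})\nabla_{\theta}f_{t}(\theta^{*})^{\top}]$, not the conditional covariance of $\nabla_{\theta}f_{t}(\theta_{t-1})$, so extracting the exact constant $\Lambda$ from $\Var(T_{t}^{\mathrm{noise}})$ requires a further splitting $\zeta_{t}=\nabla_{\theta}f_{t}(\theta^{*})+(\cdots)$ that you do not perform; without it the Cram\'er--Rao term does not come out with the stated constant. Second, and more seriously, the rate $N_{t}^{-\rho(\sigma+\nu)/2(1+\rho)}$ in $\Psi_{t}$ arises in the paper from the off-diagonal terms $\E[\langle\nabla_{\theta}f_{i}(\theta^{*}),\nabla_{\theta}f_{j}(\theta^{*})\rangle]\leq\sigma_{i}\cdot B_{\nu}\nu_{j}$ for $i<j$, i.e., from pairing the score magnitude $\sigma_{i}$ with the dependence decay $\nu_{j}$. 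In your decomposition that cross structure is lost: bounding $N_{t}^{-1}\sum_{i}n_{i}b_{i}$ by the triangle inequality yields $B_{\nu}N_{t}^{-\rho\nu/(1+\rho)}$, and Cauchy--Schwarz on the cross terms of $\E[\lVert N_{t}^{-1}\sum_{i}n_{i}b_{i}\rVert^{2}]$ pairs $\nu_{i}$ with $\nu_{j}$ and yields the same. When $\nu<\sigma$ (long-range dependence) this is strictly slower than every exponent appearing in the stated $\Psi_{t}$, so your route proves a weaker bound in that regime unless you revert to the paper's centering at $\theta^{*}$.
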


\textbf{Related work.} It is important to note that the bound presented in \cref{thm:assg:bound} is for the root mean square error, while our discussions focus on $\bar{\delta}_{t}$ without taking the root.
Similar to the unbiased i.i.d. case explored in \citet{godichon2023non}, the dominant term of $\bar{\delta}_{t}$ in \cref{eq:thm:assg:1,eq:thm:assg:2} is $\Lambda/N_{t}$, which achieves the asymptotically optimal Cramer-Rao lower bound \citep{murata1999}.
Each term in \cref{eq:thm:assg:1} directly stems from \cref{assump:assg:theta_sigma_bound_as}. Importantly, these terms remain unaffected by the choice of learning rate $(\gamma_{t})$, but depend on the time-varying mini-batches $(n_{t})$. As discussed in \citet{gadat2023optimal}, the bound of $\bar{\delta}_{t}$ can be interpreted as a bias-variance decomposition between the leading terms in \cref{eq:thm:assg:1} and the remaining terms in \cref{eq:thm:assg:2}.

\textbf{Ensuring $\mu$-quasi-strong convexity through non-decreasing streaming batches.} The interpretation of $\mu_{\nu}$ in \cref{thm:assg:bound} is similar to that in \cref{thm:ssg:bound}. In both cases, the positivity of $\mu_{\nu}$ plays a crucial role in all terms of \cref{eq:thm:assg:2}, despite not being immediately apparent.
Analogous to the insights gained from \cref{thm:ssg:bound}, increasing the streaming batch size allows us to preserve $\mu$-quasi-strong convexity and alleviate both long- and short-term dependence. By choosing a larger streaming batch size, we ensure the positivity of $\mu_{\nu}$, as demonstrated in the experiments conducted for ARCH models in \cref{sec:experiments:arch,sec:experiments:ar_arch}.

\textbf{Accelerated decay through Polyak-Ruppert averaging.} Through Polyak-Ruppert averaging, it is possible to achieve the leading term $\Lambda/N_{t}$, which is known to attain the asymptotically optimal Cramer-Rao bound in the i.i.d. case \citep{godichon2023non}. Consequently, we can achieve the optimal and irreducible rate of $\bar{\delta}_{t}=\mathcal{O}(N_{t}^{-1})$.
This is always accomplished in the unbiased case ($B_{\nu}=0$) with $\sigma=1/2$, even under short-range dependence.

In the specific case of $\sigma=1/2$, \cref{eq:thm:assg:1,eq:thm:assg:2} simplify significantly. The last two terms of \cref{eq:thm:assg:1} become negligible as $\sigma'>0$.
The first term of \cref{eq:thm:assg:2} decays at the rate $\mathcal{O}(N_{t}^{-(2+\rho(2\sigma+\beta)-\alpha)/(1+\rho)})$ or $\mathcal{O}(N_{t}^{-2(\rho(2\sigma-\beta)+\alpha)/(1+\rho)})$. Choosing $\alpha,\beta$ such that $\alpha+\rho(2\sigma/3-\beta)=2/3$ (e.g., $\alpha=2/3$, $\beta=1/3$, and $\sigma=1/2$) yields a decay of $\mathcal{O}(N_{t}^{-4/3})$ for any $\rho$.
Hence, by setting $\alpha=2/3$ and $\beta=1/3$ when $\sigma=1/2$, the first term of \cref{eq:thm:assg:2} robustly achieves $\mathcal{O}(N_{t}^{-4/3})$ for any streaming rate $\rho$.
Similarly, the last term of \cref{eq:thm:assg:2} is $\mathcal{O}(N_{t}^{-\rho(1/2+\nu)/(1+\rho)})$ for any $\rho$ when $\alpha=2/3$ and $\beta=1/3$.
In summary, \cref{thm:assg:bound} with $\alpha=2/3$, $\beta=1/3$, and $\sigma=1/2$ simplifies to the bound:
\begin{equation} \label{eq:thm:assg:sigma}
\bar{\delta}_{t}^{\frac{1}{2}}\leq\frac{\Lambda^{\frac{1}{2}}}{N_{t}^{\frac{1}{2}}}+\tilde{\mathcal{O}}\left(N_{t}^{-\frac{2}{3}}\right)
+ \mathbbm{1}_{\{B_{\nu}\neq0\}}\tilde{\mathcal{O}}\left(N_{t}^{-\frac{\rho(1/2+\nu)}{2(1+\rho)}}\right).
\end{equation}

\textbf{Variance reduction from larger streaming batches $C_{\rho}$.} Polyak-Ruppert averaging accelerates convergence and, in particular, the decay rate. However, by taking large mini-batches, we can also achieve variance reduction. As discussed earlier (for \cref{thm:ssg:bound}), larger mini-batches scale the error, which is particularly beneficial in the initial stages. Therefore, combining averaging and mini-batches offers the best of both worlds, resulting in a better slope (decay rate) and intercept.

\textbf{Behavior under biasedness, $B_{\nu}>0$.} The bias term $B_{\nu}$ affects only $\Psi_{t}$, except for the second term in \cref{eq:thm:assg:2}, which impacts the decay rate $\delta$. Increasing the streaming rate $\rho$ diminishes the negative influence of the bias term. Surprisingly, $\Psi_{t}$ approaches zero as $t$ increases indefinitely for any $\nu$. However, achieving the desired decay rate of $\bar{\delta}_{t}=\mathcal{O}(N_{t}^{-1})$ excludes long-range dependence. Nevertheless, in our experiments (see \cref{sec:experiments}), the bias term does not seem to have a significant impact.


\section{Experiments} \label{sec:experiments}


In this section, we present the details and results of our experiments, aimed at evaluating the performance of our streaming SGD-based methods on both synthetic and real-world data. Our objective is to demonstrate findings that highlight the following: (i) time-varying mini-batch SGD methods have the capability to break long- and short-range dependence structures, (ii) biased SGD methods can achieve comparable performance to their unbiased counterparts, and (iii) incorporating Polyak-Ruppert averaging can accelerate the convergence. These findings collectively showcase the effectiveness and potential of our streaming SGD-based methods.


\subsection{Synthetic Data} \label{sec:experiments:simulation}


A way to illustrate our findings is by use of classical methods that aim to model and predict an underlying sequence of real-valued time-series $(X_{s})$; here $s$ is short notation for indexing the sequence of observations, $(X_{N_{t}},X_{N_{t}-1},\dots, X_{N_{t}-n_{t}}\equiv X_{N_{t-1}}, X_{N_{t-1}-1}, \dots)$ with $N_{t}=\sum_{i=1}^{t}n_{t}$.
The AutoRegressive (AR), Moving-Average (MA), and AutoRegressive Moving-Average (ARMA) models are the most well-known models for time-series \citep{brockwell2009time,box2015time,hamilton2020time}.
The standard time-series analysis often relies on independence and constant noise, but it can be relaxed by, e.g., the AutoRegressive Conditional Heteroskedasticity (ARCH) model \citep{engle1982autoregressive}.
Online learning algorithms of (both stationary and non-stationary) dependent time-series have been studied in \citet{agarwal2012generalization,anava2013online,wintenberger2021stochastic}.


\subsubsection{AR Model} \label{sec:experiments:ar}


A process $(X_{s})$ is called a (zero-mean) AR$(1)$ process, if there exists real-valued parameter $\theta$ such that $X_{s}=\theta X_{s-1}+\epsilon_{s}$, where $(\epsilon_{s})$ is weak white noise with zero mean and variance $\sigma_{\epsilon}^{2}$.
To illustrate the versatility of our results, we construct some (heavy-tailed) noise processes with long-range dependence: the noisiness is integrated using a Student's $t$-distribution with degrees of freedom above four, denoted by $(z_{s})$.
The long-range dependence is incorporated by multiplying $(z_{s})$ with the fractional Gaussian noise $G_{s}(H)=B_{s+1}(H)-B_{s}(H)$, where $(B_{s}(H))$ is a fractional Brownian motion with Hurst index $H\in(0,1)$.
$(G_{s}(H))$ can also be seen as a (zero-mean) Gaussian process with stationary and self-similar increments \citep{nualart2006malliavin}.

\textbf{Well-specified case.} Consider the well-specified case, in which, we estimate an AR$(1)$ model from the underlying stationary AR$(1)$ process $X_{s}=\theta^{*}X_{s-1}+\epsilon_{s}$ with $\vert\theta^{*}\vert<1$.
The squared loss function $f_{t}(\theta)=n_{t}^{-1}\sum_{i=1}^{n_{t}}(X_{N_{t-1}+i}-\theta X_{N_{t-1}+i-1})^{2}$ with gradient $\nabla_{\theta}f_{t}(\theta)=-2n_{t}^{-1}\sum_{i=1}^{n_{t}}X_{N_{t-1}+i-1}(X_{N_{t-1}+i}-\theta X_{N_{t-1}+i-1})$.
Thus, the objective function is $F(\theta)=(\sigma_{\epsilon}^{2}(\theta^{*} - \theta)^{2})/(1-(\theta^{*})^{2}) + \sigma_{\epsilon}^{2}$, as $\E[X_{s}]=0$ and $\E[X_{s}^{2}]=\sigma_{\epsilon}^{2}/(1-(\theta^{*})^{2})$, yielding $\nabla_{\theta}F(\theta)=2\sigma_{\epsilon}^{2}(\theta-\theta^{*})/(1-(\theta^{*})^{2})$.
\Cref{assump:ssg:measurable} with $p=2$ can be written as follows:
\begin{equation*}
\E [\lVert\E[\nabla_{\theta} f_{t}(\theta) \vert \gF_{t-1}] - \nabla_{\theta} F(\theta)\rVert^{2}]\
= \frac{4(\theta - \theta^{*})^{2}(1-(\theta^{*})^{2n_{t}})^{2}\sigma_{\epsilon}^{2}}{(1-(\theta^{*})^{2})^{4}n_{t}^{2}} \left(\sigma_{\epsilon}^{2}+\frac{1}{1-(\theta^{*})^{2}}\right).
\end{equation*}
This implies that \cref{assump:ssg:measurable} holds when $(X_{s})$ has bounded moments, which is satisfied under the natural constraint $\vert\theta^{*}\vert<1$.\footnote{The verification is available in a longer version in \cref{sec:appendix:verification}.}
As a result, we can deduce that $D_{\nu}>0$, $B_{\nu}=0$, and $\nu_{t}$ is $\mathcal{O}(n_{t}^{-1})$. Similarly, \cref{assump:ssg:theta_sigma_bound} can be verified in the same manner, with $\sigma_{t}$ being $\mathcal{O}(n_{t}^{-1/2})$. Additionally, \cref{assump:L_smoothness} holds with $C_{\nabla}=2\sigma_{\epsilon}^{2}/(1-(\theta^{*})^{2})$ and $C_{\nabla}'=0$, while \cref{assump:assg:theta_sigma_bound_as} holds with $\Sigma=4\sigma_{\epsilon}^{4}/(1-(\theta^{*})^{2})$ and $\Sigma_{t}=0$.
Furthermore, for an AR$(1)$ process $(X_{s})$ constructed using the noise process $\epsilon_{s}=\sqrt{G_{s}(H)}z_{s}$ with Hurst index $H\geq1/2$, one can verify that $\nu_{t}^{4}$ and $\sigma_{t}^{4}$ are $\mathcal{O}(n_{t}^{H-1})$ using the self-similarity property \citep{nourdin2012selected}.

\textbf{Misspecified case.} Now, assume that the underlying data generating process follows a MA$(1)$-process: $X_{s}=\epsilon_{s}+\phi^{*}\epsilon_{s-1}$, with $\phi^{*}\in\sR$.
The misspecification error of fitting an AR$(1)$ model to a MA$(1)$ process can be found by minimizing $F(\theta) = \E [(X_{s}-\theta X_{s-1})^{2}] =  \sigma_{\epsilon}^{2}(1+(\phi^{*}-\theta)^{2}+\theta^{2}(\phi^{*})^{2})$, where $\nabla_{\theta}F(\theta)=2(\theta-\phi^{*})\sigma_{\epsilon}^{2}+2\theta(\phi^{*})^{2}\sigma_{\epsilon}^{2}$.
Thus, as $\theta^{*} = \argmin_{\theta} F(\theta) \equiv \argmin_{\theta} (\phi^{*}-\theta)^{2}+\theta^{2}(\phi^{*})^{2}$ is a strictly convex function in $\theta$, we have $\nabla_{\theta}F(\theta)=0 \Leftrightarrow 2(\theta-\phi^{*})+2\theta(\phi^{*})^{2}=0\Leftrightarrow 2\theta (1+(\phi^{*})^{2})=2\phi^{*} \Leftrightarrow \theta= \phi^{*}/(1+(\phi^{*})^{2})$. Thus, for any $\phi^{*}\in\sR$, we have $\theta\in(-1/2,1/2)$.
With this in mind, we can conduct our study of fitting an AR(1) model to a MA(1) process with $\phi^{*}$ drawn randomly from $\sR$ (\cref{fig:ar:ma}).
Furthermore, this reparametrization trick can be used to verify \cref{assump:ssg:measurable} in the following way:
\begin{equation*}
\E [\lVert\E[\nabla_{\theta} f_{t}(\theta) \vert \gF_{t-1}] - \nabla_{\theta} F(\theta)\rVert^{2}] = \frac{4(\theta-\theta^{*})^{2}}{n_{t}^{2}} f_{\phi^{*}}(\epsilon_{N_{t-1}}),
\end{equation*}
where $f_{\phi^{*}}(\epsilon_{N_{t-1}})$ is finite function depending on the moments of $(\epsilon_{N_{t-1}})$ and $\phi^{*}$.\footnote{The verification is available in a longer version in \cref{sec:appendix:verification}.}
Consequently, we establish $D_{\nu}>0$, $B_{\nu}=0$, and $\nu_{t}$ being $\mathcal{O}(n_{t}^{-1})$.
Similarly, using the reparametrization trick, we can verify that $\sigma_{t}$ is $\mathcal{O}(n_{t}^{-1/2})$ (\cref{assump:ssg:theta_sigma_bound}).


\subsubsection{ARCH Model} \label{sec:experiments:arch}


In time series analysis, modeling heteroscedasticity of the conditional variance is a crucial component, especially in capturing phenomena like volatility clustering in financial time series. ARCH models are widely recognized models that effectively incorporate this characteristic. A process $(\epsilon_{s})$ is called an ARCH$(1)$ process with parameters $\alpha_{0}$ and $\alpha_{1}$ if it satisfies  
\begin{align} \label{eq:arch}
\begin{cases}
\epsilon_{s} = \sigma_{s}z_{s}, \\
\sigma_{s}^{2} = \alpha_{0} + \alpha_{1}\epsilon_{s-1}^{2}, 
\end{cases}
\end{align}
where $\alpha_{0}>0$ and $\alpha_{1}\geq0$ ensures the non-negativity of the conditional variance process $(\sigma_{s}^{2})$, and the innovations $(z_{s})$ is white noise.
We employ the Quasi-Maximum Likelihood (QML) procedure for the statistical inference as outlined in \citet{werge2022adavol}; the quasi likelihood losses is given by $f_{s}(\theta) = 2^{-1}(\epsilon_{s}^{2}/\sigma_{s}^{2}(\theta) + \log (\sigma_{s}^{2} (\theta))$ with first-order derivative
\begin{equation*} 
 \nabla_{\theta}f_{s}(\theta) = \nabla_{\theta}\sigma_{s}^{2}(\theta)\left( \frac{ \sigma_{s}^{2}(\theta)-\epsilon_{s}^{2}}{2\sigma_{s}^{4}(\theta)}\right),
\end{equation*}
where $\nabla_{\theta}\sigma_{s}^{2} (\theta)=(1,\epsilon_{s-1}^2)^T$.
Verification of \cref{assump:ssg:measurable,assump:ssg:f_lipschitz,assump:ssg:theta_sigma_bound} can be done using mixing conditions; \citet[Theorem 3.5]{francq2019garch} showed that stationary ARCH processes are geometrically $\beta$-mixing, which implies $\alpha$-mixing as well.
Observe that the loss function $(f_{s})$ itself is not strongly convex but only the objective function $F$ may be strongly convex; convexity conditions of ARCH processes was investigated in \citet{wintenberger2021stochastic}.
This makes the parameters challenging to estimate in empirical applications as the optimization algorithms can quickly fail or converge to irregular solutions. 
There are different ways to overcome lack of convexity: first, projecting the estimates such that the (conditional) variance process $(\sigma_{s}^{2})$ stays away from zero (and close to the unconditional variance).
Second, in the specific example of ARCH model, one could also recover convexity by implementing variance targeting techniques; an example using Generalized ARCH (GARCH) models can be found in \citet{werge2022adavol}.
To simplify our analysis we consider stationary ARCH(1) processes, where we fix $\alpha_{0}$ at $1$ and initialize it at $1/2$.


\subsubsection{AR-ARCH Model} \label{sec:experiments:ar_arch}


We complete our experiments by considering an AR models with ARCH noise: the process $(X_{s})$ is called an AR(1)-ARCH$(1)$ process with parameters $\theta$, $\alpha_{0}$, and $\alpha_{1}$, if it satisfies
\begin{equation} \label{eq:ar_arch}
\begin{cases}
X_{s}=\theta X_{s-1}+\epsilon_{s}, \\
\epsilon_{s} = \sigma_{s} z_{s}, \\
\sigma_{s}^{2} = \alpha_{0} + \alpha_{1}\epsilon_{s-1}^{2},
\end{cases}
\end{equation}
where the innovations $(z_{s})$ is weak white noise.
The statistical inference of this model is done using the squared loss for the AR-part and the QML procedure for the ARCH part, e.g., see \cref{sec:experiments:ar,sec:experiments:arch}.
\Cref{assump:ssg:measurable,assump:ssg:theta_sigma_bound} can be verified by \citet[Proposition 6]{doukhan1994mixing}, which showed that ARMA-ARCH processes are $\beta$-mixing.


\subsubsection{Discussion} \label{sec:experiments:disc}


Our experimental evaluation assesses the performance using the mean quadratic error $\E[\lVert \theta_{N_{t}}-\theta^{*}\rVert^{2}]$ across one thousand replications. The initial values $\theta_{0}$ and $\theta^{}$ are randomly generated according to the specifications of the models.
We intentionally omit projecting our estimates to highlight the errors stemming from dependence and lack of convexity. By averaging over multiple replications, we observe a reduction in variability, which primarily benefits the SSG method.
The experiments aim to demonstrate the impact of the choice of $C_{\rho}$ and $\rho$ on the measures of dependence $D_{\nu}$, bias $B_{\nu}$, and the penalized convexity constant $\mu_{\nu}$ associated with dependence. To facilitate a comparison between different data streams, we set the parameters $C_{\gamma}=1$, $\alpha=2/3$, and $\beta=0$ as fixed values.

\begin{figure}[t!]
\caption{Simulation of various data streams $n_{t}=C_{\rho}t^{\rho}$. See \texorpdfstring{\cref{sec:experiments:simulation}}{4.1} for details.}
  \centering
\begin{subfigure}{.45\textwidth}
  \centering
  \caption{AR(1): well-specified case. See \texorpdfstring{\cref{sec:experiments:ar}}{4.1.1} for details.}
  \includegraphics[width=.95\textwidth]{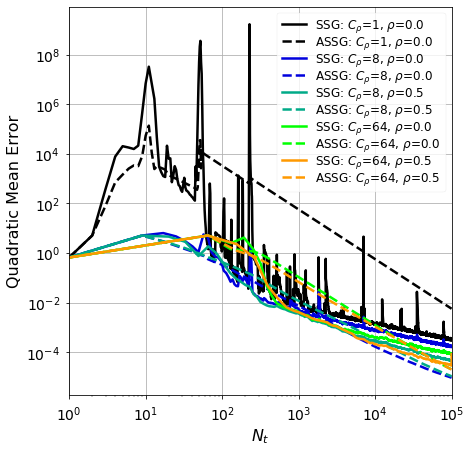}
  \label{fig:ar:ar}
\end{subfigure}
\begin{subfigure}{.45\textwidth}
  \centering
  \caption{AR(1): misspecified case. See \texorpdfstring{\cref{sec:experiments:ar}}{4.1.2} for details.}
  \includegraphics[width=.95\textwidth]{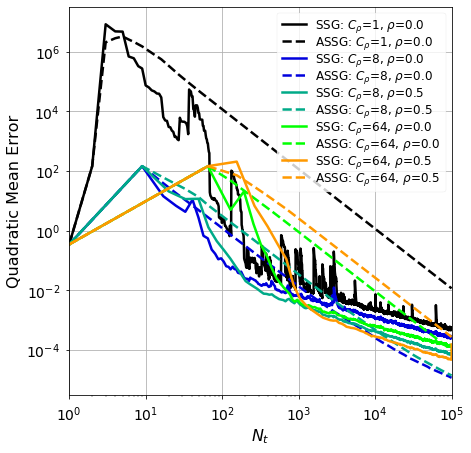}
  \label{fig:ar:ma}
\end{subfigure}
\\
\begin{subfigure}{.45\textwidth}
  \centering
  \caption{ARCH(1). See \texorpdfstring{\cref{sec:experiments:arch}}{4.1.2} for details.}
  \includegraphics[width=.95\textwidth]{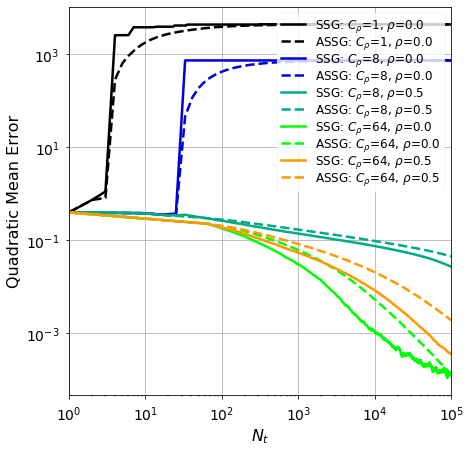}
  \label{fig:arch}
\end{subfigure}
\begin{subfigure}{.45\textwidth}
  \centering
  \caption{AR(1)-ARCH(1). See \texorpdfstring{\cref{sec:experiments:ar_arch}}{4.1.3} for details.}
  \includegraphics[width=.95\textwidth]{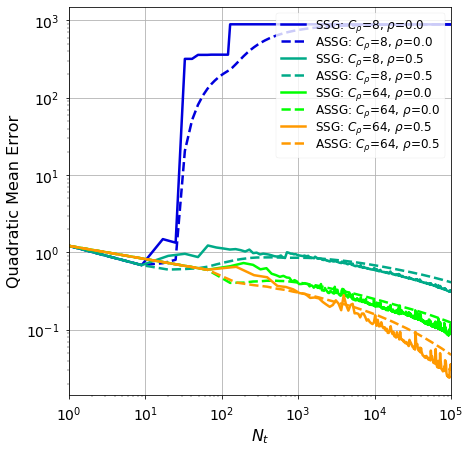}
  \label{fig:ar_arch}
\end{subfigure}
\label{fig:simulations}
\end{figure}

The experiments described in \cref{sec:experiments:ar,sec:experiments:arch,sec:experiments:ar_arch} can be found in \cref{fig:simulations}; here $\{C_{\rho}=1,\rho=0\}$ corresponds to the classical SG descent and its (Polyak-Ruppert) average estimate, $\{C_{\rho}=64,\rho=0\}$ is a mini-batch SSG/ASSG, and $\{C_{\rho}=64,\rho=1/2\}$ is an increasing SSG/ASSG with initial batch size of $C_{\rho}=64$.

Let's examine the AR cases, both well-specified and misspecified, as shown in \cref{fig:ar:ar,fig:ar:ma}. These figures display the results for long-range dependent white noise processes with a Hurst index of $H=3/4$. It is evident that each pair of data streams converges, but the traditional SG method exhibits significant initial noise, particularly affecting the average estimate period without affecting its decay rate. An improvement could be achieved by modifying our average estimate to a weighted average version, which could mitigate the impact of poor initializations \citep{mokkadem2011,boyer2022asymptotic}. Nonetheless, even with this noise, the ASSG method still demonstrates better convergence. Both methods show a noticeable reduction in variance as $C_{\rho}$ increases, which is particularly advantageous in the early stages. However, excessively large streaming batch sizes $C_{\rho}$ may hinder convergence due to fewer iterations. Additionally, \cref{fig:ar:ar,fig:ar:ma} indicate improved decay for the SSG methods as the streaming rate $\rho$ increases. On the other hand, improvements in the ASSG method are not observed, as we do not leverage the potential of using more observations through the $\beta$ parameter, which could accelerate convergence (refer to \cref{sec:experiments:temperature}). It is surprising that we do not observe any effects from the long-range dependent white noise processes, but this seems to be an artifact effect in the proof, as fourth-order moments are required (i.e., \cref{assump:ssg:measurable,assump:ssg:f_lipschitz,assump:ssg:theta_sigma_bound} with $p=4$). Therefore, we conjecture that only second-order moment properties are responsible for the behavior of our simulations and that $\sigma=1/2$ holds even for long-range dependent white noise processes, as proven in \cref{sec:experiments:ar}.

Moving on to \cref{fig:arch,fig:ar_arch}, we present the experiments for stationary ARCH(1) models, both with and without an AR component, as outlined in \cref{sec:experiments:arch,sec:experiments:ar_arch}, respectively. These figures illustrate the lack of convexity when using small streaming batch sizes $C_{\rho}$, such as the traditional SG descent and its average estimate $\{C_{\rho}=1,\rho=0\}$, which leads to divergence. It is important to note that the lack of convexity is reflected in the absence of a positive $\mu_{\nu}$, which can only be counteracted by larger streaming batch sizes $C_{\rho}$. Moreover, \cref{fig:ar_arch} excludes the traditional SG descent $\{C_{\rho}=1,\rho=0\}$ due to its lack of convexity. The figure demonstrates that larger ($C_{\rho}=64$) and non-decreasing ($\rho\geq0$) streaming batches can converge even under challenging settings.


\subsection{Historical Hourly Weather Data} \label{sec:experiments:temperature}


To illustrate our methodology on real-life time-dependent streaming data, we consider some historical hourly weather data.\footnote{The historical hourly weather dataset can be found on \url{https://www.kaggle.com/datasets/selfishgene/historical-hourly-weather-data}.}
This dataset comprises approximately five years (roughly 45000 data points) of high temporal resolution hourly measurements encompassing various weather attributes, including temperature, humidity, and air pressure. The dataset encompasses thirty-six cities, resulting in a dimensionality of $d=36$. In our analysis, we specifically focus on the hourly temperature measurements. To account for monthly and annual seasonality, we preprocess the data by subtracting the corresponding monthly and annual averages. This ensures that the analysis is centered around the deviations from the expected seasonal patterns.


\subsubsection{Geometric Median} \label{sec:experiments:gm}


In the presence of noisy data, robust estimators such as the geometric median are preferred. The geometric median, introduced by Haldane \citep{haldane1948note}, is a robust generalization of the real median. Its efficiency makes it particularly well-suited for handling high-dimensional streaming data \citep{cardot2013efficient,godichon2016estimating}. To estimate the geometric median of $X\in\sR^{d}$, we minimize the objective function $F(\theta)=\E[\Vert X-\theta \Vert - \Vert X\Vert]$ using stochastic gradient estimates $\nabla_\theta f(\theta) =(X-\theta)/\Vert X-\theta\Vert$. The existence, uniqueness, and robustness (breakdown point) of the geometric median have been studied in \citet{kemperman1987median,gervini2008robust}. It is important to note that this objective function only possesses locally strong convexity properties \citep{cardot2013efficient}. However, by projecting the gradients, it is possible to adapt the proof of optimality in a streaming setting, as demonstrated by \citet{gadat2023optimal}. Alternatively, if $X$ is bounded, one can adapt the approach of \citet{cardot2012fast} to the streaming setting, which shows that the estimates remain bounded and projection is unnecessary in such cases. In our analysis, we choose not to project our estimates, as doing so would obscure the errors we intend to explore.


\subsubsection{Discussion} \label{sec:experiments:gm:dis}


Similar to previous evaluations, we assess the performance using the mean quadratic error of the parameter estimates over one hundred replications, denoted as $\E[\lVert\theta_{N_{t}}-\theta^{*}\rVert^2]$. In this comparison, we contrast our estimates with the geometric median estimate obtained through Weiszfeld's algorithm \citep{weiszfeld2009point}.
We suppose our data are standard Gaussian random variables centered at $(\theta_{i})_{1\leq i \leq d}$, where each $\theta_{i}$ is randomly selected from the range $[-d,d]$. To align with the reasoning of \citet{cardot2013efficient}, we set $C_{\gamma}=\sqrt{d}$ and choose $\alpha=2/3$.

\begin{figure}[t!]
\centering
\caption{Geometric median for various data streams $n_{t}=C_{\rho}t^{\rho}$. See \texorpdfstring{\cref{sec:experiments:temperature}}{4.2} for details.}
  \centering
\begin{subfigure}{.45\textwidth}
  \centering
  \caption{Varying $C_{\rho}$, $\rho=0$, $\beta=0$}
  \includegraphics[width=.95\textwidth]{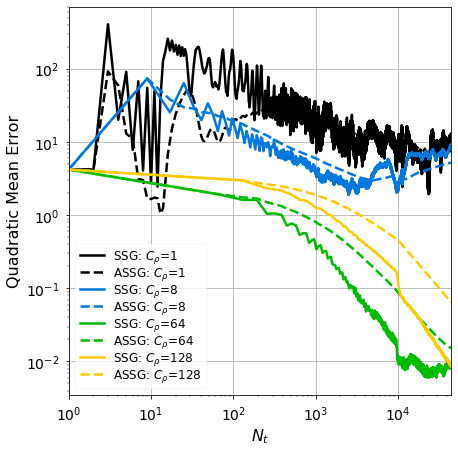}
  \label{fig:temperature:gm:c}
\end{subfigure}
\begin{subfigure}{.45\textwidth}
  \centering
  \caption{Varying $\rho$, $C_{\rho}=1$, $\beta=0$}
  \includegraphics[width=.95\textwidth]{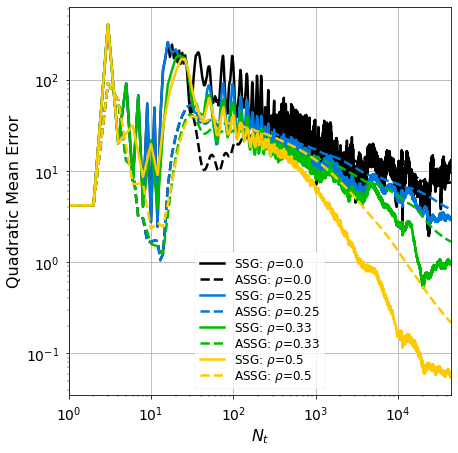}
  \label{fig:temperature:gm:v1}
\end{subfigure}
\begin{subfigure}{.45\textwidth}
  \centering
  \caption{Varying $\rho$, $C_{\rho}=64$, $\beta=0$}
  \includegraphics[width=.95\textwidth]{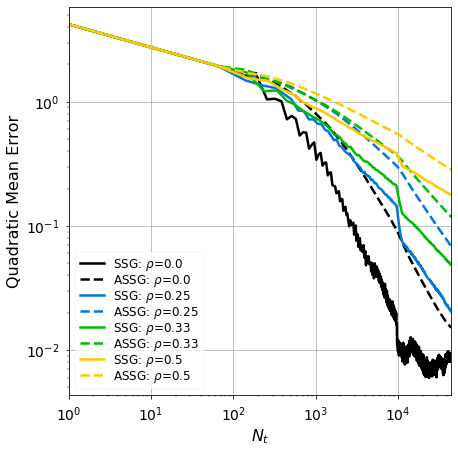}
  \label{fig:temperature:gm:v64}
\end{subfigure}
\begin{subfigure}{.45\textwidth}
  \centering
  \caption{Varying $\rho$, $C_{\rho}=64$, $\beta=1/3$}
  \includegraphics[width=.95\textwidth]{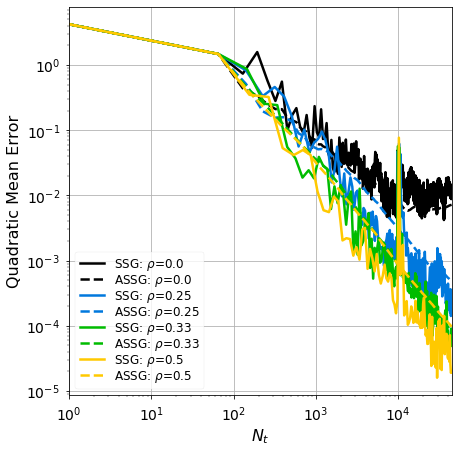}
  \label{fig:temperature:gm:v64:phi}
\end{subfigure}
\label{fig:temperature:gm}
\end{figure}

\Cref{fig:temperature:gm} presents the results of the geometric median estimation, following the procedure outlined in \cref{sec:experiments:gm}. Despite the robustness of the geometric median, noticeable fluctuations are observed in \cref{fig:temperature:gm}, which arise from the time-dependency and noise present in the weather measurements.
\Cref{fig:temperature:gm:c} emphasizes the importance of utilizing a mini-batch size $C_{\rho}$ to stabilize the optimization process and ensure convexity through larger streaming batches $C_{\rho}$. However, to achieve reasonable convergence, it is crucial to employ increasing streaming batches with positive streaming rates $\rho>0$, as illustrated in \cref{fig:temperature:gm:v1,fig:temperature:gm:v64}. These figures demonstrate an enhanced decay of the SSG when the streaming rate $\rho$ is increased.
However, the lack of convergence improvement in \cref{fig:temperature:gm:v64} can be attributed to the use of $\beta=0$, which neglects the potential benefits of leveraging additional observations to accelerate convergence. For further insights into this matter, refer to \citet{godichon2023non}. As discussed after \cref{thm:assg:bound}, one way to achieve this acceleration is by setting $\alpha=2/3$ and $\beta=1/3$. \Cref{fig:temperature:gm:v64:phi} demonstrates that simply selecting $\beta=1/3$ enables this acceleration. Moreover, $\beta=1/3$ ensures optimal convergence robust to any streaming rate $\rho$. Selecting an appropriate $\beta>0$ is particularly crucial when $C_{\rho}$ is large, as robustness is a fundamental aspect of the geometric median method.
Most surprisingly, excellent convergence with a final error as low as $10^{-5}$ can be achieved by combining increasing streaming batches with averaging. This is exemplified in \cref{fig:temperature:gm:v64:phi} with $C_{\rho}=64$, $\rho>0$, and $\beta=1/3$.
Based on these real-life experiments and the discussion in \cref{sec:experiments:disc}, we speculate that the sequence of scores $(\nabla_{\theta}f_{t}(\theta^{*}))$ constitutes a martingale difference sequence that extends beyond our specific examples. Notably, our findings indicate that $\sigma=1/2$ holds even in the presence of long-range dependence. Thus, the complexity associated with \cref{thm:assg:bound} appears to be an artifact of the proof, which relies on \cref{assump:ssg:measurable,assump:ssg:f_lipschitz,assump:ssg:theta_sigma_bound} with $p=4$.


\section{Conclusions and Future Work} \label{sec:conclusion}


In this paper, we explored SGD-based methods in the context of streaming data. We extended the analysis of the unbiased i.i.d. case by \citet{godichon2023non} to include time-dependency and biasedness. By leveraging their insights, we investigated the effectiveness of first-order SO methods in a streaming setting, where the assumption of unbiased i.i.d. samples no longer holds. Our non-asymptotic analysis established novel heuristics that bridge the gap between dependence, biases, and the convexity levels of the SO problem. These heuristics enabled accelerated convergence in complex problems, offering promising opportunities for efficient optimization in streaming settings. Specifically, our findings demonstrated that (i) time-varying mini-batch SGD methods can break long- and short-range dependence structures, (ii) biased SGD methods can achieve comparable performance to their unbiased counterparts, and (iii) incorporating Polyak-Ruppert averaging can accelerate the convergence. We validated our theoretical findings by conducting a series of experiments using both simulated and real-life time-dependent data.

\textbf{Future perspectives.} There are several ways to expand our work about stochastic streaming algorithms: (a) we can extend our analysis to include streaming batches of any size (and not as a function of streaming batch size $C_{\rho}$ and streaming rates $\rho$), e.g., \citet{godichon2023non} discuss random streaming batches with negative and positive drift.
(b) an extension to non-strongly convex objectives could be advantageous as it will provide more insight into how we should choose our learning rates \citep{bach2013non,nemirovski2009robust,necoara2019linear,gadat2023optimal}. (c) learning rates should be made adaptive so they are robust to poor initialization and require less tuning; an adaptive learning rate is essential for practitioners as it builds a form of universality across applications, e.g., see \citet{duchi2011adaptive,kingma2014adam,godichontarrago2023non}. (d) non-parametric analysis could improve our theoretical results for large values of $d$. (e) we have focused on results in quadratic mean but another way to strengthen our non-asymptotic guarantees could be high probability bounds \citep{durmus2021tight,durmus2022finite}; for any $\delta\in(0,1)$, we could obtain bounds on the sequence $\{\lVert\theta_{t}-\theta^{*}\rVert:t\in\sN\}$ that holds with probability at least $1-\delta$.


\bibliography{main}
\bibliographystyle{tmlr}



\appendix



\section{Proofs} \label{sec:appendix}


To begin, we establish recursive relationships for the desired quantities $\delta_{t}=\E[\lVert\theta_{t}-\theta^{*}\rVert^{2}]$ and $\bar{\delta}_{t}=\E[\lVert\bar{\theta}_{t}-\theta^{*}\rVert^{2}]$. These relationships hold for any values of $(\gamma_{t})$, $(\nu_{t})$, $(\sigma_{t})$, and $(n_{t})$. Subsequently, we substitute the specific functional forms of these parameters, which lead to the results presented in \cref{thm:ssg:bound} and \cref{thm:assg:bound}.

Before presenting the proofs, it is important to revisit a recurring argument employed to non-asymptotically bound $\delta_{t}$ and $\bar{\delta}_{t}$:
\begin{proposition}[\citet{godichon2023non}] \label{prop:delta_recursive_upper_bound}
Suppose $(\omega_{t})$, $(\alpha_{t})$, $(\eta_{t})$, and $(\beta_{t})$ to be some non-negative sequences satisfying the recursive relation,
\begin{equation} \label{eq:prop:delta}
\omega_{t}\leq[1-2\lambda\alpha_{t}+\eta_{t}\alpha_{t}]\omega_{t-1}+\beta_{t}\alpha_{t},    
\end{equation}
with $\omega_{0} \geq 0$ and $ \lambda > 0$. 
Let $C_{\omega}\geq1$ be such that $\lambda \alpha_{t} \leq 1$ for all $t \geq t_{\omega}$ with $t_{\omega} = \inf\{ t\ge 1\, : C_{\omega} \eta_{t} \leq \lambda \}$. Then, for $(\alpha_{t})$ and $(\eta_{t})$ decreasing, we have the upper bound on $(\omega_{t})$ given by
\begin{equation} \label{eq:prop:delta:bound}
\omega_{t} \leq \tau_{t} + \frac{1}{\lambda} \max_{t/2 \leq i \leq t} \beta_{i},
\end{equation}
with $\tau_{t} = \exp \left( - \lambda \sum_{i=t/2}^{t} \alpha_{i} \right) \left[\exp \left( C_{\omega}\sum_{i=1}^{t} \eta_{i} \alpha_{i} \right) \left( \omega_{0} + \frac{1}{\lambda} \max_{1 \leq i \leq t} \beta_{i} \right) + \sum_{i=1}^{t/2-1} \beta_{i}\alpha_{i}\right]$.
\end{proposition}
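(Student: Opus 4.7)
The plan is to iterate the recursion \cref{eq:prop:delta} into an explicit sum-product form and then bound each piece after splitting the summation range at $t/2$. Setting $P_j^t := \prod_{i=j+1}^{t}(1 - 2\lambda\alpha_i + \eta_i \alpha_i)$ with the convention $P_t^t = 1$, a straightforward induction yields
\begin{equation*}
\omega_t \leq P_0^t \omega_0 + \sum_{j=1}^{t} \beta_j \alpha_j P_j^t,
\end{equation*}
so everything reduces to controlling $P_j^t$ in the appropriate regime.

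Next, I would establish a two-regime pointwise bound on the factors of $P_j^t$ via $1+x\leq e^x$. For $i\geq t_\omega$, the defining property $C_\omega\eta_i \leq \lambda$ together with $C_\omega\geq 1$ gives $\eta_i \leq \lambda$, whence $1-2\lambda\alpha_i + \eta_i\alpha_i \leq 1-\lambda\alpha_i \leq e^{-\lambda\alpha_i}$, with non-negativity of $1-\lambda\alpha_i$ secured by $\lambda\alpha_i\leq 1$. For $i<t_\omega$, use the crude bound $1-2\lambda\alpha_i + \eta_i\alpha_i \leq 1+\eta_i\alpha_i \leq e^{\eta_i\alpha_i}\leq e^{C_\omega\eta_i\alpha_i}$. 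Assuming $t$ is large enough that $t_\omega \leq t/2$ (the regime in which the statement is non-vacuous), these two regimes can be stitched together to yield $P_j^t \leq \exp(C_\omega \sum_{i=1}^{t}\eta_i\alpha_i - \lambda\sum_{i=(j+1)\vee t/2}^{t}\alpha_i)$ for $j<t/2$, and the sharper $P_j^t \leq \exp(-\lambda(A_t-A_j))$ for $j\geq t/2$ (with $A_k:=\sum_{i=1}^{k}\alpha_i$), since in the latter case every $i\geq j+1$ already lies in the decay regime.

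I would then split the bias sum at $t/2$. For the tail $j\geq t/2$, the sharper bound gives
\begin{equation*}
\sum_{j=t/2}^{t}\beta_j\alpha_j P_j^t \leq \max_{t/2\leq i\leq t}\beta_i\cdot e^{-\lambda A_t}\sum_{j=t/2}^{t}\alpha_j e^{\lambda A_j},
\end{equation*}
and a sum-integral comparison of the form $\alpha_j e^{\lambda A_j}\leq e^{\lambda\alpha_j}(e^{\lambda A_j}-e^{\lambda A_{j-1}})/\lambda$, combined with $\lambda\alpha_j\leq 1$ (so $e^{\lambda\alpha_j}\leq e$), telescopes to produce the $(1/\lambda)\max_{t/2\leq i\leq t}\beta_i$ piece of \cref{eq:prop:delta:bound} up to the universal constant $e$. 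For the initial contribution $P_0^t\omega_0$ and the $j<t/2$ slice of the sum, the uniform bound above contributes the outer factor $\exp(-\lambda\sum_{i=t/2}^{t}\alpha_i)$; the initial condition picks up $\exp(C_\omega\sum_{i=1}^{t}\eta_i\alpha_i)\omega_0$, the $j<t_\omega$ indices pick up $\exp(C_\omega\sum_{i=1}^{t}\eta_i\alpha_i)(1/\lambda)\max_{1\leq i\leq t}\beta_i$ (by once again applying the sum-integral trick on the pre-$t_\omega$ segment), and the intermediate $t_\omega\leq j<t/2$ indices satisfy $P_j^t\leq\exp(-\lambda\sum_{i=t/2}^{t}\alpha_i)$ without any $C_\omega$-exponential, contributing the residual $\sum_{i=1}^{t/2-1}\beta_i\alpha_i$ piece. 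Summing these three contributions reproduces $\tau_t$ exactly.

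The main obstacle lies in the careful bookkeeping around the threshold $t_\omega$: the argument must ensure that the explosive factor $\exp(C_\omega\sum_{i=1}^{t}\eta_i\alpha_i)$ multiplies only those pieces whose product $P_j^t$ genuinely traverses the pre-threshold regime ($j<t_\omega$), while the innocuous slice $t_\omega\leq j<t/2$ receives only the outer decay factor, matching the asymmetry in the definition of $\tau_t$. Monotonicity of $(\alpha_t)$ and $(\eta_t)$ is essential both for $t_\omega$ to act as a clean single threshold and for the sum-integral telescoping to go through, while the condition $\lambda\alpha_t\leq 1$ for $t\geq t_\omega$ is precisely what tames the stray $e^{\lambda\alpha_j}$ factor in that telescoping step.
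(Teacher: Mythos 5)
The paper itself does not prove this proposition; it is imported from \citet{godichon2023non}, so your argument can only be measured against the standard product-unrolling proof of such recursions, which is indeed what you reconstruct: unroll, bound each factor by $e^{-\lambda\alpha_i}$ past the threshold $t_{\omega}$ and by $e^{C_{\omega}\eta_i\alpha_i}$ before it, split the noise sum at $t/2$, and telescope. The architecture and the $t_{\omega}$-bookkeeping are right in outline, but as written you do not obtain the stated bound. The dominant slice $j\geq t/2$ only yields $\frac{e}{\lambda}\max_{t/2\leq i\leq t}\beta_i$, not $\frac{1}{\lambda}\max_{t/2\leq i\leq t}\beta_i$: passing to exponentials before telescoping costs exactly the factor $e^{\lambda\alpha_j}\leq e$ you concede. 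The claimed constant comes from telescoping the product itself via the identity $\lambda\alpha_j\prod_{i=j+1}^{t}(1-\lambda\alpha_i)=\prod_{i=j+1}^{t}(1-\lambda\alpha_i)-\prod_{i=j}^{t}(1-\lambda\alpha_i)$, whose sum over $t/2\leq j\leq t$ is at most $1$; this is available on that range because $\eta_i\leq\lambda$ gives $1-2\lambda\alpha_i+\eta_i\alpha_i\leq 1-\lambda\alpha_i$ and $\lambda\alpha_i\leq 1$ keeps every factor in $[0,1]$. (Relatedly, your unrolled sum–product form is only a valid upper bound if the factors are nonnegative; replace $1-2\lambda\alpha_i+\eta_i\alpha_i$ by its positive part before iterating, which changes nothing downstream.)

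Two further steps are under-justified. For the $j<t_{\omega}$ slice, "the same sum–integral trick" does not apply: there $\lambda\alpha_j\leq1$ is unavailable and the $j$-dependence of $P_j^t$ sits entirely in the $C_{\omega}$-exponential, not in a decay factor. What rescues it is the reversed inequality defining the threshold, $C_{\omega}\eta_j>\lambda$ for $j<t_{\omega}$: with $E_j=\exp(C_{\omega}\sum_{i=j+1}^{t_{\omega}-1}\eta_i\alpha_i)$ one has $\lambda\alpha_jE_j\leq C_{\omega}\eta_j\alpha_jE_j\leq E_{j-1}-E_j$, which telescopes to $E_0/\lambda$ and produces precisely the $\exp(C_{\omega}\sum_{i=1}^{t}\eta_i\alpha_i)\frac{1}{\lambda}\max_{1\leq i\leq t}\beta_i$ term of $\tau_t$ with no extra constant. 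Finally, you restrict to $t\geq 2t_{\omega}$, whereas \cref{eq:prop:delta:bound} is asserted for all $t$; the regime $t/2<t_{\omega}$ is not vacuous (the bound must still hold there, with $\tau_t$ absorbing the damage), and it needs its own short argument rather than being set aside.
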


\Cref{prop:delta_recursive_upper_bound} presents a straightforward method for bounding $(\omega_{t})$ in \cref{eq:prop:delta}. The resulting bound in \cref{eq:prop:delta:bound} comprises a sub-exponential term $\tau_{t}$ and a noise term $\lambda^{-1}\max_{t/2\leq i\leq t}\beta_{i}$.
Hence, our primary objective is to minimize the noise term without compromising the inherent decay of the sub-exponential term. It is worth noting that the sub-exponential term $\tau_{t}$ diminishes exponentially as $t\rightarrow\infty$.

During our proofs, specific functions will be introduced, resulting in various generalized harmonic numbers that can be bounded using the integral test for convergence. Additionally, to express our findings in terms of $N_{t}=\sum_{i=1}^{t}n_{i}$, we will utilize the inequality $(N_{t}/2C_{\rho})^{1/(1+\rho)}\leq t \leq (2N_{t}/C_{\rho})^{1/(1+\rho)}$, as demonstrated in \citet{godichon2023non}.
For the sake of simplicity in notation, we will employ the functions $\psi_{x}(t)$ and $\psi_{x}^{y}(t)$, defined as mappings from $\sR_{+}\setminus{0}$ to $\sR$, given by
\begin{align} \label{eq:psi_rate}
\psi_{x}(t)=
\begin{cases}
t^{1-x}/(1-x) &\text{if } x < 1, \\
1+\log(t) &\text{if } x = 1 ,\\
x/(x-1) &\text{if } x > 1,
\end{cases}
\quad \text{and} \quad 
\psi_{x}^{y}(t) =
\begin{cases}
t^{(1-x)/(1+y)}/(1-x) & \text{if } x < 1, \\
1+\log(t^{1/(1+y)}) & \text{if } x = 1 ,\\
x/(x-1) & \text{if } x > 1,
\end{cases}
\end{align} 
with $y \in \sR_{+}$ such that $\psi_{x}^{y}(t)=\psi_{x}(t^{1/(1+y)})$. Consequently, for any $x \geq 0$, we have $\sum_{i=1}^{t} i^{-x} \leq \psi_{x}(t)$. Additionally, when considering $\psi_{x}^{y}(t)/t$, we find that if $x < 1$, then $\psi_{x}^{y}(t)/t=\mathcal{O}(t^{-(x+y)/(1+y)})$. If $x = 1$, it is $\mathcal{O}(\log(t)t^{-1})$. And if $x > 1$, it becomes $\mathcal{O}(t^{-1})$. Therefore, for any $x_{0}, x_{1}, x_{2}, y \geq 0$, we can conclude that $\psi_{x_{0}}^{y}(t)/t=\tilde{\mathcal{O}}(t^{-(x_{0}+y)/(1+y)})$, and $\psi_{x_{1}}^{y}(t)\psi_{x_{2}}^{y}(t)/t=\tilde{\mathcal{O}}(t^{-(x_{1}+x_{2}+y-1)/(1+y)})$, where $\tilde{\mathcal{O}}(\cdot)$ denotes the suppression of logarithmic factors.

In \cref{lem:ssg}, we establish an explicit recursive relation for $\delta_{t}$, which provides a non-asymptotic bound on the $t$-th estimate of \cref{eq:ssg}. We achieve this using classical techniques from stochastic approximations \citep{benveniste2012adaptive,kushner2003stochastic}.

\begin{proof}[Proof of \cref{lem:ssg}]
By taking the quadratic norm on \cref{eq:ssg}, expanding it, and taking the expectation, we can derive the equation
\begin{align} \label{eq:lem:ssg:norm}
\delta_{t} = \delta_{t-1} + \gamma_{t}^{2} \E[\lVert\nabla_{\theta}f_{t}(\theta_{t-1})\rVert^{2}]
- 2\gamma_{t} \E[\langle\nabla_{\theta}f_{t}(\theta_{t-1}),\theta_{t-1}-\theta^{*}\rangle],
\end{align}
with $\delta_{0}\geq0$.
To bound the second term on the right-hand side of \cref{eq:lem:ssg:norm}, we use \cref{assump:ssg:f_lipschitz,assump:ssg:theta_sigma_bound} for $p=2$. This allows us to derive the inequality,
\begin{align}
\E[\lVert\nabla_{\theta}f_{t}(\theta_{t-1})\rVert^{2}] 
\leq 2\E[\lVert\nabla_{\theta}f_{t}(\theta_{t-1})-\nabla_{\theta}f_{t}(\theta^{*})\rVert^{2}] + 2\E[\lVert\nabla_{\theta}f_{t}(\theta^{*})\rVert^{2}] \leq 2C_{\kappa}^{2}\delta_{t-1} + 2\sigma_{t}^{2}, \label{eq:lem:ssg:delta_1}
\end{align}
using that $\lVert x+y \rVert^{p} \leq 2^{p-1}(\lVert x \rVert^{p} + \lVert y \rVert^{p})$.
As mentioned in \citet{bottou2018optimization,nesterov2018lectures}, \cref{eq:assump:L_strong_convexity} implies that $\langle\nabla_{\theta}F(\theta),\theta-\theta^{*}\rangle\geq\mu\rVert\theta-\theta^{*}\rVert^{2}$ for all $\theta\in\Theta$.
Thus, as $F$ is $\mu$-quasi-strongly convex \cref{eq:assump:L_strong_convexity} and $\theta_{t-1}$ is $\gF_{t-1}$-measurable (\cref{assump:ssg:measurable}), we can bound the third term on the right-hand side of \cref{eq:lem:ssg:norm} as follows:
\begin{align}
\E[\langle\nabla_{\theta}f_{t}(\theta_{t-1}),\theta_{t-1}-\theta^{*}\rangle] =& \E[\langle\nabla_{\theta}F(\theta_{t-1}),\theta_{t-1}-\theta^{*}\rangle] + \E[\langle\E[\nabla_{\theta}f_{t}(\theta_{t-1})\vert\gF_{t-1}]-\nabla_{\theta}F(\theta_{t-1}),\theta_{t-1}-\theta^{*}\rangle] \nonumber
\\ \geq& \mu\delta_{t-1}-D_{\nu}\nu_{t}\delta_{t-1}-B_{\nu}\nu_{t}\delta_{t-1}^{\frac{1}{2}}, \label{eq:lem:ssg:delta_2}
\end{align}
since
\begin{align*}
\E[\langle\E[\nabla_{\theta}f_{t}(\theta_{t-1})\vert\gF_{t-1}] -\nabla_{\theta}F(\theta_{t-1}),\theta_{t-1}-\theta^{*}\rangle]
&\geq -\E[\lVert\E[\nabla_{\theta}f_{t}(\theta_{t-1})\vert\gF_{t-1}]-\nabla_{\theta}F(\theta_{t-1})\rVert\lVert\theta_{t-1}-\theta^{*}\rVert] 
\\ &\geq -\sqrt{\E[\lVert\E[\nabla_{\theta}f_{t}(\theta_{t-1})\vert\gF_{t-1}]-\nabla_{\theta}F(\theta_{t-1})\rVert^{2}]}\sqrt{\E[\lVert\theta_{t-1}-\theta^{*}\rVert^{2}]} 
\\ &\geq -\sqrt{\nu_{t}^{2}(D_{\nu}^{2}\delta_{t-1}+B_{\nu}^{2})}\sqrt{\delta_{t-1}} \geq -D_{\nu}\nu_{t}\delta_{t-1}-B_{\nu}\nu_{t}\sqrt{\delta_{t-1}},
\end{align*}
by Jensen's inequality, Cauchy–Schwarz inequality, Hölder's inequality, and \cref{assump:ssg:measurable} with $p=2$.
Hence, by applying the inequalities \cref{eq:lem:ssg:delta_1,eq:lem:ssg:delta_2} to \cref{eq:lem:ssg:norm}, we obtain that
\begin{align*}
\delta_{t} &\leq [1-2\mu\gamma_{t}+2D_{\nu}\nu_{t}\gamma_{t}+2C_{\kappa}^{2}\gamma_{t}^{2}] \delta_{t-1}
+ 2 B_{\nu} \nu_{t} \gamma_{t} \delta_{t-1}^{\frac{1}{2}}
+ 2\sigma_{t}^{2} \gamma_{t}^{2}
\\ &\leq [1-(\mu-2D_{\nu}\nu_{t})\gamma_{t}+2C_{\kappa}^{2}\gamma_{t}^{2}] \delta_{t-1}
+ \frac{B_{\nu}^{2}}{\mu} \nu_{t}^{2} \gamma_{t}
+ 2\sigma_{t}^{2} \gamma_{t}^{2},
\end{align*}
using Young's inequality\footnote{If $a,b,c>0$, $p,q>1$ such that $1/p+1/q=1$, then $ab\leq a^{p}c^{p}/p+b^{q}/qc^{q}$.} in the second line; $2B_{\nu}\nu_{t}\gamma_{t}\delta_{t-1}^{\frac{1}{2}} \leq \mu\gamma_{t}\delta_{t-1}+B_{\nu}^{2}\nu_{t}^{2}\gamma_{t}/\mu$. Bounding the projected estimate \cref{eq:pssg} follows from the property that $\E[\lVert\mathcal{P}_{\Theta}(\theta)-\theta^{}\rVert^{2}]\leq\E[\lVert\theta-\theta^{}\rVert^{2}]$, for all $\theta\in\sR^{d}$ and $\theta^{*}\in\Theta$, as mentioned in \citet{zinkevich2003online}.
\end{proof}

The following corollary directly follows by applying \cref{prop:delta_recursive_upper_bound} to the recursive relation for $\delta_t$ derived in \cref{lem:ssg}.

\begin{corollary} \label{cor:ssg}
Let $\delta_{t}=\E[\lVert\theta_{t}-\theta^{*}\rVert^{2}]$, where $(\theta_{t})$ either follows the recursion in \cref{eq:ssg} or \cref{eq:pssg}.
Suppose \cref{assump:L_strong_convexity,assump:ssg:measurable,assump:ssg:f_lipschitz,assump:ssg:theta_sigma_bound} hold for $p=2$.
Let $\mathbbm{1}_{\{\nu_{t}=\mathcal{C}\}}$ and $\mathbbm{1}_{\{\nu_{t}\neg\mathcal{C}\}}$ indicate whether $(\nu_{t})$ is constant or not.
If $\mu_{\nu}=\mu-\mathbbm{1}_{\{\nu_{t}=\mathcal{C}\}}2D_{\nu}\nu_{t}>0$, then, for any learning rate $(\gamma_{t})$, we have
\begin{align*}
\delta_{t} \leq \pi_{t} + \frac{2B_{\nu}^{2}}{\mu\mu_{\nu}} \max_{t/2 \leq i \leq t} \nu_{i}^{2} + \frac{4}{\mu_{\nu}} \max_{t/2 \leq i \leq t} \sigma_{i}^{2} \gamma_{i},
\end{align*}
with
\begin{align*}
\pi_{t} =& \exp \left( - \frac{\mu_{\nu}}{2} \sum_{i=t/2}^{t} \gamma_{i} \right)\left[\exp \left( \mathbbm{1}_{\{\nu_{t}\neg\mathcal{C}\}}2C_{\delta}D_{\nu} \sum_{i=1}^{t} \nu_{i}\gamma_{i} \right) \exp \left( 2C_{\delta}C_{\kappa}^{2}\sum_{i=1}^{t} \gamma_{i}^{2} \right) \right. \\ & \left. \left( \delta_{0} + \frac{2B_{\nu}^{2}}{\mu\mu_{\nu}} \max_{1 \leq i \leq t} \nu_{i}^{2} + \frac{4}{\mu_{\nu}} \max_{1 \leq i \leq t} \sigma_{i}^{2}\gamma_{i} \right) + \frac{B_{\nu}^{2}}{\mu} \sum_{i=1}^{t/2-1} \nu_{i}^{2}\gamma_{i} + 2 \sum_{i=1}^{t/2-1} \sigma_{i}^{2}\gamma_{i}^{2}\right].
\end{align*}
\end{corollary}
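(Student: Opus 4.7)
The plan is to read off the corollary directly from \cref{prop:delta_recursive_upper_bound} applied to the recursion established in \cref{lem:ssg}. Concretely, I start from
\begin{equation*}
\delta_{t} \leq [1-(\mu-2D_{\nu}\nu_{t})\gamma_{t}+2C_{\kappa}^{2}\gamma_{t}^{2}] \delta_{t-1} + \tfrac{B_{\nu}^{2}}{\mu} \nu_{t}^{2} \gamma_{t} + 2\sigma_{t}^{2} \gamma_{t}^{2},
\end{equation*}
and cast it in the form $\omega_{t}\leq[1-2\lambda\alpha_{t}+\eta_{t}\alpha_{t}]\omega_{t-1}+\beta_{t}\alpha_{t}$ required by the proposition, with $\omega_{t}=\delta_{t}$ and $\alpha_{t}=\gamma_{t}$.

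\textbf{Identifications.} The key choice is picking $\lambda$ so that it is a strictly positive constant, not a time-varying quantity. I would split on whether $(\nu_{t})$ is constant: if $\nu_{t}\equiv\nu$, the dependence contribution $2D_{\nu}\nu_{t}\gamma_{t}$ can be absorbed into the $2\lambda\alpha_{t}$ term, yielding $2\lambda=\mu-2D_{\nu}\nu=\mu_{\nu}$ and $\eta_{t}=2C_{\kappa}^{2}\gamma_{t}$; if $(\nu_{t})$ is non-constant (hence nonincreasing), I keep $2\lambda=\mu=\mu_{\nu}$ and set $\eta_{t}=2D_{\nu}\nu_{t}+2C_{\kappa}^{2}\gamma_{t}$, which remains nonincreasing as required by the proposition. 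Both branches are encoded uniformly by $\lambda=\mu_{\nu}/2$, $\eta_{t}=\mathbbm{1}_{\{\nu_{t}\neg\mathcal{C}\}}2D_{\nu}\nu_{t}+2C_{\kappa}^{2}\gamma_{t}$, and $\beta_{t}=\tfrac{B_{\nu}^{2}}{\mu}\nu_{t}^{2}+2\sigma_{t}^{2}\gamma_{t}$, with the indicator in $\mu_{\nu}$ taking care of the absorbed term in the constant case.

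\textbf{Application and finish.} With these identifications, the positivity hypothesis $\mu_{\nu}>0$ of the corollary translates precisely into $\lambda>0$, and \cref{prop:delta_recursive_upper_bound} applies, yielding $\delta_{t}\leq\tau_{t}+\lambda^{-1}\max_{t/2\leq i\leq t}\beta_{i}$. Bounding $\max_{i}\beta_{i}$ by the sum of its two maxima gives
\begin{equation*}
\lambda^{-1}\max_{t/2\leq i\leq t}\beta_{i}\leq\tfrac{2B_{\nu}^{2}}{\mu\mu_{\nu}}\max_{t/2\leq i\leq t}\nu_{i}^{2}+\tfrac{4}{\mu_{\nu}}\max_{t/2\leq i\leq t}\sigma_{i}^{2}\gamma_{i},
\end{equation*}
which is the noise part of the claimed inequality. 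Substituting the same identifications into $\tau_{t}$ (and renaming the absorbing constant $C_{\omega}$ of the proposition as $C_{\delta}$) produces the expression $\pi_{t}$ of the corollary verbatim, where splitting $\max\beta_{i}$ the same way yields the two maxima inside the bracket.

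\textbf{Expected difficulty.} There is essentially no technical obstacle; this is a bookkeeping exercise that specializes the abstract recursion of \cref{prop:delta_recursive_upper_bound} to the one of \cref{lem:ssg}. The only subtle point is the dichotomy on $(\nu_{t})$: one must make sure that, in both branches, the resulting $(\eta_{t})$ is nonincreasing (true under the standard monotonicity assumptions on $(\gamma_{t})$ and $(\nu_{t})$) and that the threshold $t_{\omega}$ implicit in the proposition is well defined, so that Proposition 1 is applicable for $t$ large enough. Beyond that, the projected estimate \cref{eq:pssg} is handled, as in the proof of \cref{lem:ssg}, via the nonexpansiveness $\E[\lVert\mathcal{P}_{\Theta}(\theta)-\theta^{*}\rVert^{2}]\leq\E[\lVert\theta-\theta^{*}\rVert^{2}]$.
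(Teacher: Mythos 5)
Your proposal is correct and follows essentially the same route as the paper: both proofs cast the recursion of \cref{lem:ssg} in the form required by \cref{prop:delta_recursive_upper_bound} with $\lambda=\mu_{\nu}/2$, $\alpha_{t}=\gamma_{t}$, $\eta_{t}=\mathbbm{1}_{\{\nu_{t}\neg\mathcal{C}\}}2D_{\nu}\nu_{t}+2C_{\kappa}^{2}\gamma_{t}$, and $\beta_{t}=\mu^{-1}B_{\nu}^{2}\nu_{t}^{2}+2\sigma_{t}^{2}\gamma_{t}$, using the indicator dichotomy on $(\nu_{t})$ to absorb the dependence term into $\mu_{\nu}$ exactly as the paper does. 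The paper merely makes your ``renaming'' explicit by setting $C_{\delta}=\max\{1,2C_{\kappa}^{2},(\mu_{\nu}/2)^{2},2\mathbbm{1}_{\{\nu_{t}\neg\mathcal{C}\}}D_{\nu}\}$ and checking $\mu_{\nu}\gamma_{t}/2\leq1$ past the threshold $t_{\delta}=\inf\{t:C_{\delta}(\nu_{t}+\gamma_{t})\leq\mu_{\nu}/2\}$, which is the same bookkeeping you describe.
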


\begin{proof}[Proof of \cref{cor:ssg}]
First, we introduce the indicator function for whether $(\nu_{t})$ is constant ($=\mathcal{C}$) or not ($\neg\mathcal{C}$).
With this notation, we can rewrite $\delta_{t}$ from \cref{lem:ssg} as follows:
\begin{align} \label{eq:lem:ssg:delta_ineq}
\delta_{t} \leq [1-(\mu_{\nu}-\mathbbm{1}_{\{\nu_{t}\neg\mathcal{C}\}}2D_{\nu}\nu_{t})\gamma_{t}+2C_{\kappa}^{2}\gamma_{t}^{2}] \delta_{t-1}
+ \frac{B_{\nu}^{2}}{\mu} \nu_{t}^{2} \gamma_{t}
+ 2\sigma_{t}^{2} \gamma_{t}^{2},
\end{align}
with $\mu_{\nu}=\mu-\mathbbm{1}_{\{\nu_{t}=\mathcal{C}\}}2D_{\nu}\nu_{t}>0$.
Let us consider $C_{\delta}=\max\lbrace 1,2C_{\kappa}^{2},(\mu_{\nu}/2)^{2},2\mathbbm{1}_{\{\nu_{t}\neg\mathcal{C}\}} D_{\nu}\rbrace$. 
By doing so, we can rewrite \cref{eq:lem:ssg:delta_ineq} as follows:  
\begin{equation}  \label{eq:lem:ssg:delta_ineq:cdelta}
\delta_{t} \leq [1-\mu_{\nu}\gamma_{t} + C_{\delta}  \left(  \nu_{t}  + \gamma_{t} \right)\gamma_{t}] \delta_{t-1}
+ \frac{B_{\nu}^{2}}{\mu} \nu_{t}^{2} \gamma_{t}
+ 2\sigma_{t}^{2} \gamma_{t}^{2}.
\end{equation}
Let $t_{\delta}$ denote $\inf \left\lbrace t : C_{\delta} \left( \nu_{t} + \gamma_{t} \right) \leq \mu_{\nu} /2 \right\rbrace$. Then, for any $t \geq t_{\delta}$, we have $\gamma_{t}(\mu_{\nu}/2)^{2} \leq C_{\delta} \gamma_{t} \leq \mu_{\nu}/2$, i.e.,  $\mu_{\nu}\gamma_{t}/2 \leq 1$. Thus, the conditions of \cref{prop:delta_recursive_upper_bound} are satisfied.
Then, applying \cref{prop:delta_recursive_upper_bound} to inequality \cref{eq:lem:ssg:delta_ineq:cdelta}, we can conclude the proof.
\end{proof}

\begin{proof}[Proof of \cref{thm:ssg:bound}]
By substituting the functions $\gamma_{t}=C_{\gamma}n_{t}^{\beta}t^{-\alpha}$, $\nu_{t}=n_{t}^{-\nu}$, $\sigma_{t}=C_{\sigma}n_{t}^{-\sigma}$, and  $n_{t}=\lceil C_{\rho}t^{\rho}\rceil\geq C_{\rho}t^{\rho}$ into the bound of \cref{cor:ssg}, we obtain that
\begin{align}
\delta_{t} \leq& \pi_{t} + \frac{2^{1+2\rho\nu}B_{\nu}^{2}}{\mu\mu_{\nu}C_{\rho}^{2\nu}t^{2\rho\nu}} + \frac{2^{2+\rho(2\sigma-\beta)+\alpha}C_{\sigma}^{2}C_{\gamma}C_{\rho}^{\beta}}{\mu_{\nu}C_{\rho}^{2\sigma}t^{\rho(2\sigma-\beta)+\alpha}} \label{eq:thm:ssg:bound:t}
\\ \leq& \pi_{t} + \frac{2^{(2+6\rho\nu)/(1+\rho)}B_{\nu}^{2}}{\mu\mu_{\nu}C_{\rho}^{2\nu/(1+\rho)}N_{t}^{2\rho\nu/(1+\rho)}} + \frac{2^{(7+6\rho\sigma)/(1+\rho)}C_{\sigma}^{2}C_{\gamma}}{\mu_{\nu}C_{\rho}^{(2\sigma-\beta-\alpha)/(1+\rho)}N_{t}^{(\rho(2\sigma-\beta)+\alpha)/(1+\rho)}}, \label{eq:thm:ssg:bound:nt}
\end{align}
with $\mu_{\nu}=\mu-\mathbbm{1}_{\{\rho=0\}}2D_{\nu}C_{\rho}^{-\nu}>0$, and
\begin{align} 
\pi_{t} \leq & \exp \left( - \frac{\mu_{\nu}C_{\gamma}C_{\rho}^{\beta}t^{1+\rho\beta-\alpha}}{2^{2}} \right) \left[ \exp \left( \frac{\mathbbm{1}_{\{\rho\neq0\}}2C_{\delta}D_{\nu}C_{\gamma}C_{\rho}^{\beta}\psi_{\alpha-\rho(\beta-\nu)}(t)}{C_{\rho}^{\nu}}\right)  \exp \left( \frac{4(\alpha-\rho\beta)C_{\delta}C_{\kappa}^{2}C_{\gamma}^{2}C_{\rho}^{2\beta}}{(2\alpha-2\rho\beta-1)} \right) \right. \nonumber
\\ &\left. \left( \delta_{0} + \frac{2B_{\nu}^{2}}{\mu\mu_{\nu}C_{\rho}^{2\nu}} + \frac{4C_{\sigma}^{2}C_{\gamma}C_{\rho}^{\beta}}{\mu_{\nu}C_{\rho}^{2\sigma}} \right) +\frac{B_{\nu}^{2}C_{\gamma}C_{\rho}^{\beta}\psi_{\alpha-\rho(\beta-2\nu)}(t/2)}{\mu C_{\rho}^{2\nu}}+ \frac{4(\alpha-\rho(\beta-\sigma))C_{\sigma}^{2}C_{\gamma}^{2}C_{\rho}^{2\beta}}{(2\alpha-2\rho(\beta-\sigma)-1)C_{\rho}^{2\sigma}}\right] \nonumber \\ 
\leq & \exp \left( - \frac{\mu C_{\gamma}N_{t}^{(1+\rho\beta-\alpha)/(1+\rho)}}{2^{(3+\rho(2+\beta)-\alpha)/(1+\rho)}C_{\rho}^{(1-\beta-\alpha)/(1+\rho)}} \right)\left[ \exp \left( \frac{\mathbbm{1}_{\{\rho\neq0\}}2C_{\delta}D_{\nu}C_{\gamma}C_{\rho}^{\beta}\psi_{\alpha-\rho(\beta-\nu)}^{\rho}(2N_{t}/C_{\rho})}{C_{\rho}^{\nu}}\right) \right. \nonumber
\\ & \left.\exp \left( \frac{4(\alpha-\rho\beta)C_{\delta}C_{\kappa}^{2}C_{\gamma}^{2}C_{\rho}^{2\beta}}{(2\alpha-2\rho\beta-1)} \right) \left( \delta_{0} + \frac{2B_{\nu}^{2}}{\mu\mu_{\nu}C_{\rho}^{2\nu}} + \frac{4C_{\sigma}^{2}C_{\gamma}C_{\rho}^{\beta}}{\mu_{\nu}C_{\rho}^{2\sigma}} \right) \right. \nonumber
\\ & + \left. \frac{B_{\nu}^{2}C_{\gamma}C_{\rho}^{\beta}\psi_{\alpha-\rho(\beta-2\nu)}^{\rho}(N_{t}/C_{\rho})}{\mu C_{\rho}^{2\nu}}+ \frac{4(\alpha-\rho(\beta-\sigma))C_{\sigma}^{2}C_{\gamma}^{2}C_{\rho}^{2\beta}}{(2\alpha-2\rho(\beta-\sigma)-1)C_{\rho}^{2\sigma}}\right], \label{eq:thm:ssg:bound:pi}
\end{align}
with help of an integral test for convergence\footnote{$\sum_{i=1}^{t}i^{2\rho(\beta-\sigma)-2\alpha}\leq (2\alpha-2\rho(\beta-\sigma))/(2\alpha-2\rho(\beta-\sigma)-1)$ as $\nu>0$, $\sigma\in[0,1/2]$, $\rho\in[0,1)$, $\beta\in[0,1]$, and $\alpha-\rho\beta\in(1/2,1)$.}, the functions $\psi_{x}(t)$ and $\psi_{x}^{y}(t)$ from \cref{eq:psi_rate}, and by use of $(N_{t}/2C_{\rho})^{1/(1+\rho)} \leq t \leq (2N_{t}/C_{\rho})^{1/(1+\rho)}$.
\end{proof}

Next, our focus turns to the analysis of the fourth-order rate, denoted as $\Delta_{t}=\E[\lVert\theta_{t}-\theta^{*}\rVert^{4}]$, for the recursive estimates given by equations \cref{eq:ssg} and \cref{eq:pssg}. Similar to the approach in \cref{cor:ssg}, we commence a comprehensive investigation of the general case with $(\gamma_{t})$, $(\nu_{t})$, $(\sigma_{t})$, and $(n_{t})$.
\begin{lemma} \label{lem:assg}
Let $\Delta_{t}=\E [\lVert\theta_{t} - \theta^{*}\rVert^{4}]$, where $(\theta_{t})$ either follows the recursion in \cref{eq:ssg} or \cref{eq:pssg}.
Suppose \cref{assump:L_strong_convexity,assump:ssg:measurable,assump:ssg:f_lipschitz,assump:ssg:theta_sigma_bound} hold for $p=4$.
Let $\mathbbm{1}_{\{\nu_{t}=\mathcal{C}\}}$ and $\mathbbm{1}_{\{\nu_{t}\neg\mathcal{C}\}}$ indicate whether $(\nu_{t})$ is constant or not.
If $\mu_{\nu}'=\mu-\mathbbm{1}_{\{\nu_{t}=\mathcal{C}\}}2D_{\nu}^{4}\nu_{t}^{4}/\mu^{3}>0$, then for any learning rate $(\gamma_{t})$, we have
\begin{align*}
\Delta_{t} \leq& \Pi_{t} + \frac{4B_{\nu}^{4}}{\mu^{3}\mu_{\nu}'} \max_{t/2 \leq i \leq t}\nu_{i}^{4} + \frac{1024}{\mu\mu_{\nu}'} \max_{t/2 \leq i \leq t} \sigma_{i}^{4}\gamma_{i}^{2} + \frac{96}{\mu_{\nu}'} \max_{t/2 \leq i \leq t} \sigma_{i}^{4}\gamma_{i}^{3},
\end{align*}
with $\Pi_{t}$ given as
\begin{align*}
 &\exp\left(-\frac{\mu_{\nu}'}{4}\sum_{i=t/2}^{t}\gamma_{i}\right) \left[ \exp\left(\frac{\mathbbm{1}_{\{\nu_{t}\neg\mathcal{C}\}}C_{\Delta}D_{\nu}^{4}}{\mu^{3}}\sum_{i=1}^{t}\nu_{i}^{4}\gamma_{i}\right)\exp\left(\frac{256C_{\Delta}C_{\kappa}^{4}}{\mu}\sum_{i=1}^{t}\gamma_{i}^{3}\right) \exp\left(24C_{\Delta}C_{\kappa}^{4}\sum_{i=1}^{t}\gamma_{i}^{4}\right)  \right. \\ & \left. \left(\Delta_{0} + \frac{4B_{\nu}^{4}}{\mu^{3}\mu_{\nu}'} \max_{1 \leq i \leq t}\nu_{i}^{4} + \frac{1024}{\mu\mu_{\nu}'} \max_{1 \leq i \leq t} \sigma_{i}^{4}\gamma_{i}^{2} + \frac{96}{\mu_{\nu}'} \max_{1 \leq i \leq t} \sigma_{i}^{4}\gamma_{i}^{3}\right)+\frac{B_{\nu}^{4}}{\mu^{3}} \sum_{i=1}^{t/2-1} \nu_{i}^{4}\gamma_{i} + \frac{256}{\mu} \sum_{i=1}^{t/2-1} \sigma_{i}^{4}\gamma_{i}^{3} + 24 \sum_{i=1}^{t/2-1} \sigma_{i}^{4}\gamma_{i}^{4}\right].
\end{align*}
\end{lemma}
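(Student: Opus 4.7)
The plan is to mirror the second-moment argument of \cref{lem:ssg,cor:ssg}, adapted to fourth moments. First I would square the quadratic expansion used in \cref{lem:ssg} to obtain
\begin{equation*}
\|\theta_t-\theta^*\|^4 = \bigl(\|\theta_{t-1}-\theta^*\|^2 - 2\gamma_t\langle g_t,\theta_{t-1}-\theta^*\rangle + \gamma_t^2\|g_t\|^2\bigr)^2,
\end{equation*}
with $g_t=\nabla_\theta f_t(\theta_{t-1})$. Expanding yields six expectations to control: a leading cross term $-4\gamma_t\E[\|\theta_{t-1}-\theta^*\|^2\langle g_t,\theta_{t-1}-\theta^*\rangle]$ that drives the quasi-strong convexity contraction, and five auxiliary terms mixing inner products with powers of $\|g_t\|$.

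The main step is handling the leading cross term. I would condition on $\gF_{t-1}$, split $\E[g_t\vert\gF_{t-1}]$ into $\nabla_\theta F(\theta_{t-1})$ plus a bias, and combine $\mu$-quasi-strong convexity with Cauchy-Schwarz to reach $-4\mu\gamma_t\Delta_{t-1} + 4\gamma_t\E[\|\theta_{t-1}-\theta^*\|^3 \cdot \|\E[g_t\vert\gF_{t-1}]-\nabla_\theta F(\theta_{t-1})\|]$. The main obstacle is absorbing this remainder without spoiling the contraction, and the key device is Young's inequality with conjugate exponents $(4/3,4)$: writing $4\gamma_t a^3 b \leq \tfrac{3}{4}\mu\gamma_t a^4 + 256\gamma_t b^4/\mu^3$ with $a=\|\theta_{t-1}-\theta^*\|$ and $b$ the bias norm, then invoking \cref{assump:ssg:measurable} at $p=4$ to bound $\E[b^4]\leq\nu_t^4(D_\nu^4\Delta_{t-1}+B_\nu^4)$. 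This simultaneously produces the penalty $D_\nu^4\nu_t^4/\mu^3$ responsible for the stated $\mu_\nu' = \mu - \mathbbm{1}_{\{\nu_t=\mathcal{C}\}}2D_\nu^4\nu_t^4/\mu^3$ (the $\mu^3$ denominator being the signature of the $(4/3,4)$ Young split, and the reason the penalty is so much smaller than in \cref{lem:ssg}) and the pure bias noise contribution $B_\nu^4\nu_t^4\gamma_t/\mu^3$.

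The five remaining terms involve $\|g_t\|^j$ for $j\in\{2,3,4\}$. Applying $\|g_t\|^p\leq 2^{p-1}(\|g_t-g_t(\theta^*)\|^p+\|g_t(\theta^*)\|^p)$ together with \cref{assump:ssg:f_lipschitz,assump:ssg:theta_sigma_bound} at $p=4$ splits each gradient moment into a $C_\kappa$-Lipschitz contribution in $\Delta_{t-1}$ and a pure noise contribution in $\sigma_t$. I would then use Cauchy-Schwarz (or Hölder with exponents $(4,4/3)$) to detach $\Delta_{t-1}^{1/2}$ or $\Delta_{t-1}^{1/4}$ factors, and Young's inequality once more to convert the mixed products into small multiples of $\mu\gamma_t\Delta_{t-1}$ plus isolated noise terms of order $\sigma_t^4\gamma_t^3/\mu$ and $\sigma_t^4\gamma_t^4$, with higher-order $C_\kappa^4\gamma_t^3$ and $C_\kappa^4\gamma_t^4$ perturbations on $\Delta_{t-1}$. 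Collecting all contributions produces a recursion of the form
\begin{equation*}
\Delta_t \leq \bigl[1 - \mu_\nu'\gamma_t + C_\Delta\bigl(\mathbbm{1}_{\{\nu_t\neg\mathcal{C}\}}D_\nu^4\nu_t^4\gamma_t/\mu^3 + C_\kappa^4\gamma_t^3/\mu + C_\kappa^4\gamma_t^4\bigr)\bigr]\Delta_{t-1} + \tfrac{B_\nu^4\nu_t^4\gamma_t}{\mu^3} + \tfrac{C\sigma_t^4\gamma_t^3}{\mu} + C'\sigma_t^4\gamma_t^4,
\end{equation*}
to which I would apply \cref{prop:delta_recursive_upper_bound} with $\lambda=\mu_\nu'/2$, $\alpha_i=\gamma_i$, and $\eta_i$ matching the higher-order terms above, exactly as was done in the proof of \cref{cor:ssg}. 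This yields $\Pi_t$ and the three noise maxima in the stated bound. The projected variant \cref{eq:pssg} is covered by the elementary contraction $\E[\|\mathcal{P}_\Theta(\theta)-\theta^*\|^4]\leq\E[\|\theta-\theta^*\|^4]$.
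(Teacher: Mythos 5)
Your proposal follows essentially the same route as the paper's proof: the same squared expansion of the update, the same $(4/3,4)$ Young split on the leading cross term to absorb the bias into a $D_{\nu}^{4}\nu_{t}^{4}/\mu^{3}$ convexity penalty plus a $B_{\nu}^{4}\nu_{t}^{4}\gamma_{t}/\mu^{3}$ noise term, the same treatment of the remaining gradient-moment terms via \cref{assump:ssg:f_lipschitz,assump:ssg:theta_sigma_bound} at $p=4$, and the same final appeal to \cref{prop:delta_recursive_upper_bound}. Only your bookkeeping of the Young constants (e.g.\ the $256/\mu^{3}$ factor) differs slightly from the paper's choice, which affects the explicit constants but not the structure or validity of the argument.
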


\begin{proof}[Proof of \cref{lem:assg}]
The derivation of the recursive step sequence for the fourth-order moment, denoted by $\Delta_{t}$, in \cref{eq:ssg}, follows a similar methodology as for the second-order moment in \cref{cor:ssg}. By applying the same approach used to derive \cref{eq:lem:ssg:norm}, we can take the quadratic norm of \cref{eq:ssg}, expand the norm, square both sides of the equation, and take the conditional expectation on both sides. This leads us to the following expression:
\begin{align*}
\Delta_{t}
=& \Delta_{t-1}
+ \gamma_{t}^{4} \E [ \lVert \nabla_{\theta} f_{t} ( \theta_{t-1} ) \rVert^{4} ]
+ 4\gamma_{t}^{2} \E [ \langle \nabla_{\theta} f_{t} ( \theta_{t-1} ), \theta_{t-1} - \theta^{*} \rangle^{2} ]
+ 2\gamma_{t}^{2} \E [ \lVert \theta_{t-1} - \theta^{*} \rVert^{2} \lVert \nabla_{\theta} f_{t} ( \theta_{t-1} ) \rVert^{2} ]
\\ &- 4 \gamma_{t} \E [ \lVert \theta_{t-1} - \theta^{*} \rVert^{2} \langle \nabla_{\theta} f_{t} ( \theta_{t-1} ), \theta_{t-1} - \theta^{*} \rangle ]
- 4 \gamma_{t}^{3} \E [ \lVert \nabla_{\theta} f_{t} ( \theta_{t-1} ) \rVert^{2} 
 \langle \nabla_{\theta} f_{t} ( \theta_{t-1} ), \theta_{t-1} - \theta^{*} \rangle ]
 \\ \leq& \Delta_{t-1}
+ \gamma_{t}^{4} \E [ \lVert \nabla_{\theta} f_{t} ( \theta_{t-1} ) \rVert^{4} ]
+ 6\gamma_{t}^{2} \E [ \lVert \theta_{t-1} - \theta^{*} \rVert^{2} \lVert \nabla_{\theta} f_{t} ( \theta_{t-1} ) \rVert^{2} ]
\\ &- 4 \gamma_{t} \E [ \lVert \theta_{t-1} - \theta^{*} \rVert^{2} \langle \nabla_{\theta} f_{t} ( \theta_{t-1} ), \theta_{t-1} - \theta^{*} \rangle ]
+ 4 \gamma_{t}^{3} \E [ \lVert \theta_{t-1} - \theta^{*} \rVert \lVert \nabla_{\theta} f_{t} ( \theta_{t-1} ) \rVert^{3} ],
\end{align*}
where we have made use of Cauchy-Schwarz inequality.
Next, we can utilize Young's inequality to simplify the terms. By applying Young's inequality, we have: $4\gamma_{t}^{3}\lVert\theta_{t-1}-\theta^{*}\rVert \lVert\nabla_{\theta}f_{t}(\theta_{t-1})\rVert^{3} \leq 2 \gamma_{t}^{4}\lVert \nabla_{\theta} f_{t} ( \theta_{t-1} ) \rVert^{4} + 2\gamma_{t}^{2}\lVert \theta_{t-1} - \theta^{*} \rVert^{2} \lVert \nabla_{\theta} f_{t} ( \theta_{t-1} ) \rVert^{2}$ and $8\gamma_{t}^{2}\lVert \theta_{t-1} - \theta^{*} \rVert^{2} \lVert \nabla_{\theta} f_{t} ( \theta_{t-1} ) \rVert^{2} \leq (\mu\gamma_{t}/2)\lVert\theta_{t-1}-\theta^{*}\rVert^{4} + 32\mu^{-1}\gamma_{t}^{3}\lVert\nabla_{\theta}f_{t}(\theta_{t-1})\rVert^{4}$. These inequalities allow us to obtain the simplified expression: 
\begin{align*}
\Delta_{t}
\leq& [1+\mu\gamma_{t}/2]\Delta_{t-1}
+ 3 \gamma_{t}^{4} \E [ \lVert \nabla_{\theta} f_{t} ( \theta_{t-1} ) \rVert^{4} ]
+ 32\mu^{-1}\gamma_{t}^{3}\E[\lVert\nabla_{\theta}f_{t}(\theta_{t-1})\rVert^{4}]
\\ &- 4 \gamma_{t} \E [ \lVert \theta_{t-1} - \theta^{*} \rVert^{2} \langle \nabla_{\theta} f_{t} ( \theta_{t-1} ), \theta_{t-1} - \theta^{*} \rangle ].
\end{align*}
In order to bound the fourth-order term $\E[\lVert\nabla_{\theta}f_{t}(\theta_{t-1})\rVert^{4}]$, we can utilize several assumptions. Firstly, we make use of the Lipschitz continuity of $\nabla_{\theta}f_{t}$ (as stated in \cref{assump:ssg:f_lipschitz}). Additionally, we consider the bounds on $\nabla_{\theta}f_{t}(\theta^{*})$ given in \cref{assump:ssg:theta_sigma_bound}, and the fact that $\theta_{t-1}$ is $\gF_{t-1}$-measurable (as stated in \cref{assump:ssg:measurable}). Combining these assumptions, we can show that $\E[\lVert\nabla_{\theta}f_{t}(\theta_{t-1})\rVert^{4}]\leq8C_{\kappa}^{4}\Delta_{t-1}+8\sigma_{t}^{4}$. Thus,
\begin{align} 
\Delta_{t}
\leq& [1+\mu\gamma_{t}/2+256\mu^{-1}C_{\kappa}^{4}\gamma_{t}^{3}+24C_{\kappa}^{4}\gamma_{t}^{4} ]\Delta_{t-1} +256\mu^{-1}\sigma_{t}^{4}\gamma_{t}^{3}+ 24\sigma_{t}^{4}\gamma_{t}^{4} \nonumber \\ &- 4 \gamma_{t} \E [ \lVert \theta_{t-1} - \theta^{*} \rVert^{2} \langle \nabla_{\theta} f_{t} ( \theta_{t-1} ), \theta_{t-1} - \theta^{*} \rangle ]. \label{eq:4mo:assg:term1}
\end{align}
Next, by employing similar arguments as in the proof of \cref{cor:ssg}, along with Young's inequality and \cref{assump:ssg:measurable} (with $p=4$), we have that
\begin{align*}
&4\gamma_{t} \E [  \lVert \theta_{t-1} - \theta^{*} \rVert^{2} \langle \E [ \nabla_{\theta} f_{t} ( \theta_{t-1}) | \gF_{t-1} ] - \nabla_{\theta} F( \theta_{t-1} ) , \theta_{t-1} - \theta^* \rangle ]
\\ &\geq - 4\gamma_{t}\E [ \lVert \theta_{t-1} - \theta^{*} \rVert^{3} \lVert  \E [ \nabla_{\theta} f_{t} ( \theta_{t-1}) | \gF_{t-1} ] - \nabla_{\theta} F( \theta_{t-1} ) \rVert ]
\\ &\geq -3\mu\gamma_{t}\Delta_{t-1} - \mu^{-3}\gamma_{t} \E [\lVert  \E [ \nabla_{\theta} f_{t} ( \theta_{t-1}) | \gF_{t-1} ] - \nabla_{\theta} F( \theta_{t-1} ) \rVert^{4} ] 
\\ &\geq -3\mu\gamma_{t}\Delta_{t-1} - \mu^{-3}\gamma_{t}D_{\nu}^{4}\nu_{t}^{4}\Delta_{t-1} -  \mu^{-3}\gamma_{t}B_{\nu}^{4} \nu_{t}^{4},
\end{align*}
Hence, utilizing this inequality, we can bound the last term of \cref{eq:4mo:assg:term1} as follows,
\begin{align*}
&4\gamma_{t}\E[\lVert\theta_{t-1}-\theta^{*}\rVert^{2}\langle\nabla_{\theta}f_{t}(\theta_{t-1}),\theta_{t-1}-\theta^{*}\rangle] 
= 4\gamma_{t}\E[\lVert\theta_{t-1}-\theta^{*}\rVert^{2}\langle\E[\nabla_{\theta}f_{t}(\theta_{t-1})\vert\gF_{t-1}],\theta_{t-1}-\theta^{*}\rangle] 
\\ &= 4\gamma_{t}\E[\lVert\theta_{t-1}-\theta^{*}\rVert^{2}\langle\nabla_{\theta}F(\theta_{t-1}),\theta_{t-1}-\theta^{*}\rangle]  + 4\gamma_{t}\E[\lVert\theta_{t-1}-\theta^{*}\rVert^{2}\langle\E[\nabla_{\theta}f_{t}(\theta_{t-1})\vert\gF_{t-1}]-\nabla_{\theta}F(\theta_{t-1}),\theta_{t-1}-\theta^{*}\rangle] 
\\ &\geq \mu\gamma_{t}\Delta_{t-1} - \mu^{-3}\gamma_{t}D_{\nu}^{4}\nu_{t}^{4}\Delta_{t-1} -  \mu^{-3}\gamma_{t}B_{\nu}^{4}\nu_{t}^{4}.
\end{align*}
To summarize, by inserting this into \cref{eq:4mo:assg:term1} and incorporating the indicator function that determines whether $(\nu_{t})$ is constant ($=\mathcal{C}$) or not ($\neg\mathcal{C}$), we obtain the following inequality:
\begin{align} \label{eq:lem:assg:delta}
\Delta_{t}
\leq& \left[1-\left(\frac{\mu_{\nu}}{2}-\frac{\mathbbm{1}_{\{\nu_{t}\neg\mathcal{C}\}}D_{\nu}^{4}\nu_{t}^{4}}{\mu^{3}}\right)\gamma_{t}+\frac{256C_{\kappa}^{4}\gamma_{t}^{3}}{\mu} +  24C_{\kappa}^{4}\gamma_{t}^{4} \right]\Delta_{t-1}
+ \frac{B_{\nu}^{4}\nu_{t}^{4}\gamma_{t}}{\mu^{3}} + \frac{256\sigma_{t}^{4}\gamma_{t}^{3}}{\mu} + 24\sigma_{t}^{4}\gamma_{t}^{4},
\end{align}
with $\mu_{\nu}'=\mu-\mathbbm{1}_{\{\nu_{t}=\mathcal{C}\}}2D_{\nu}^{4}\nu_{t}^{4}/\mu^{3}>0$. Note that $\mu_{\nu}$ from \cref{cor:ssg} is lower bounded by $\mu_{\nu}'$, and it is strictly lower bounded when $(\nu_{t})$ is constant, i.e., $\mu_{\nu}>\mu_{\nu}'>0$.
Let $C_{\Delta}\geq1$ satisfy the conditions of \cref{prop:delta_recursive_upper_bound}. The constant $C_{\Delta}$ is chosen such that $C_{\Delta}(\mathbbm{1}_{\{\nu_{t}\neg\mathcal{C}\}}D_{\nu}^{4}\nu_{t}^{4}/\mu^{3}+256C_{\kappa}^{4}\gamma_{t}^{2}/\mu+24C_{\kappa}^{4}\gamma_{t}^{3})\leq\mu_{\nu}'/2$, which implies $\mu_{\nu}'\gamma_{t}/2 \leq 1$. This condition is possible since the sequence $(\nu_{t})$ is non-increasing and $(\gamma_{t})$ is decreasing.
At last, by applying \cref{prop:delta_recursive_upper_bound} to \cref{eq:lem:assg:delta}, we obtain the desired bound for $\Delta_{t}$.
\end{proof}

\begin{corollary} \label{cor:assg}
Let $\Delta_{t}=\E[\lVert\theta_{t}-\theta^{*}\rVert^{4}]$, where $(\theta_{t})$ either follows the recursion in \cref{eq:ssg} or \cref{eq:pssg}.
Suppose \cref{assump:L_strong_convexity,assump:ssg:measurable,assump:ssg:f_lipschitz,assump:ssg:theta_sigma_bound} hold for $p=4$.
If $\mu_{\nu}'=\mu-\mathbbm{1}_{\{\rho=0\}}2D_{\nu}^{4}/\mu^{3}C_{\rho}^{4\nu}>0$, then for $\alpha-\rho\beta\in(1/2,1)$, we have
\begin{align} \label{eq:cor:assg}
\Delta_{t} \leq \Pi_{t} + \frac{2^{2+4\rho\nu}B_{\nu}^{4}}{\mu^{3}\mu_{\nu}'C_{\rho}^{4\nu}t^{4\rho\nu}} + \frac{2^{2\rho(2\sigma-\beta)+2\alpha}(2^{10}\mu^{-1}+2^{7}C_{\gamma}C_{\rho}^{\beta})C_{\sigma}^{4}C_{\gamma}^{2}C_{\rho}^{2\beta}}{\mu_{\nu}'C_{\rho}^{4\sigma}t^{2\rho(2\sigma-\beta)+2\alpha}},
\end{align}
with $\Pi_{t}$ given in \cref{eq:cor:assg:pi} such that $\Pi_{t}=\mathcal{O}(\exp(-N_{t}^{(1+\rho\beta-\alpha)/(1+\rho)}))$.
\end{corollary}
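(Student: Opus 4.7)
The plan is to carry out for $\Delta_{t}$ exactly the same specialization argument that was used to derive \cref{thm:ssg:bound} from \cref{cor:ssg}, but now starting from the fourth-order recursive bound in \cref{lem:assg}. That is, we substitute the prescribed functional forms $\gamma_{t}=C_{\gamma}n_{t}^{\beta}t^{-\alpha}$, $\nu_{t}=n_{t}^{-\nu}$, $\sigma_{t}=C_{\sigma}n_{t}^{-\sigma}$, and $n_{t}=\lceil C_{\rho}t^{\rho}\rceil \geq C_{\rho}t^{\rho}$ into every term of \cref{lem:assg} and simplify. A key observation is that under $\nu_{t}=n_{t}^{-\nu}$, the condition ``$(\nu_{t})$ constant'' is equivalent to $\rho=0$, so $\mathbbm{1}_{\{\nu_{t}=\mathcal{C}\}}$ collapses to $\mathbbm{1}_{\{\rho=0\}}$ and $\mu_{\nu}'$ takes the form announced in the statement.

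First I would handle the three noise maxima. Using $n_{i}\geq C_{\rho}i^{\rho}$, we have $\nu_{i}^{4}\leq C_{\rho}^{-4\nu}i^{-4\rho\nu}$, $\sigma_{i}^{4}\gamma_{i}^{2}\leq C_{\sigma}^{4}C_{\gamma}^{2}C_{\rho}^{2\beta-4\sigma}i^{-2(\rho(2\sigma-\beta)+\alpha)}$, and $\sigma_{i}^{4}\gamma_{i}^{3}\leq C_{\sigma}^{4}C_{\gamma}^{3}C_{\rho}^{3\beta-4\sigma}i^{-(3\alpha-\rho(3\beta-4\sigma))}$. Since all these exponents of $i$ are strictly negative under the stated conditions, the maxima over $t/2\leq i\leq t$ are attained near the lower endpoint, producing factors $(t/2)^{-4\rho\nu}$ and $(t/2)^{-2(\rho(2\sigma-\beta)+\alpha)}$ up to a scaling in $2$. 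Combining the two $\sigma^{4}$ contributions yields the single bound in \cref{eq:cor:assg}; converting from $t$ to $N_{t}$ via the two-sided inequality $(N_{t}/2C_{\rho})^{1/(1+\rho)}\leq t\leq (2N_{t}/C_{\rho})^{1/(1+\rho)}$ gives the $N_{t}^{-4\rho\nu/(1+\rho)}$ and $N_{t}^{-2(\rho(2\sigma-\beta)+\alpha)/(1+\rho)}$ rates (modulo constants).

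Next I would bound the sub-exponential prefactor $\Pi_{t}$. The leading exponential $\exp(-\tfrac{\mu_{\nu}'}{4}\sum_{i=t/2}^{t}\gamma_{i})$ gives decay of order $\exp(-c\,t^{1+\rho\beta-\alpha})$ because $\alpha-\rho\beta<1$, which after the $t\to N_{t}$ conversion yields the announced $\exp(-\mathcal{O}(N_{t}^{(1+\rho\beta-\alpha)/(1+\rho)}))$ behavior. The three inner exponentials contain $\sum_{i=1}^{t}\nu_{i}^{4}\gamma_{i}$, $\sum_{i=1}^{t}\gamma_{i}^{3}$, and $\sum_{i=1}^{t}\gamma_{i}^{4}$, which I would bound with $\psi_{\alpha-\rho(\beta-4\nu)}(t)$, $\psi_{3\alpha-3\rho\beta}(t)$, and $\psi_{4\alpha-4\rho\beta}(t)$ respectively via the integral test; the condition $\alpha-\rho\beta>1/2$ ensures that $\sum_{i}\gamma_{i}^{3}$ and $\sum_{i}\gamma_{i}^{4}$ stay bounded (those $\psi$'s evaluate to constants), while the $\nu^{4}\gamma$ sum is harmless since $\nu>0$ and it only enters when $\rho\neq 0$. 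The leftover partial sums $\sum_{i=1}^{t/2-1}\nu_{i}^{4}\gamma_{i}$, $\sum_{i=1}^{t/2-1}\sigma_{i}^{4}\gamma_{i}^{3}$, and $\sum_{i=1}^{t/2-1}\sigma_{i}^{4}\gamma_{i}^{4}$ are dominated by the $\psi$ functions and multiplied against the decaying leading exponential, so they contribute only subdominantly.

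The main obstacle will be bookkeeping: keeping track of all multiplicative constants, carefully separating the cases $\rho=0$ versus $\rho>0$ that arise from $\mathbbm{1}_{\{\nu_{t}\neg\mathcal{C}\}}$, and confirming that every $\psi$-exponent remains in the regime where the integral test gives the claimed polynomial or constant bound under $\alpha-\rho\beta\in(1/2,1)$, $\nu>0$, $\sigma\in[0,1/2]$, $\beta\in[0,1]$, $\rho\in[0,1)$. Once all pieces are assembled, the explicit form of $\Pi_{t}$ (displayed in \cref{eq:cor:assg:pi} of the full proof) follows mechanically, and the stated $\mathcal{O}(\exp(-N_{t}^{(1+\rho\beta-\alpha)/(1+\rho)}))$ scaling is read off from the leading exponential factor. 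The projected variant (PSSG) is handled exactly as in \cref{lem:ssg}/\cref{thm:ssg:bound} by invoking $\E[\lVert\mathcal{P}_{\Theta}(\theta)-\theta^{*}\rVert^{4}]\leq\E[\lVert\theta-\theta^{*}\rVert^{4}]$, so no separate argument is needed.
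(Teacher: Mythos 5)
Your proposal follows essentially the same route as the paper: substitute $\gamma_{t}=C_{\gamma}n_{t}^{\beta}t^{-\alpha}$, $\nu_{t}=n_{t}^{-\nu}$, $\sigma_{t}=C_{\sigma}n_{t}^{-\sigma}$, $n_{t}=\lceil C_{\rho}t^{\rho}\rceil$ into \cref{lem:assg}, collapse $\mathbbm{1}_{\{\nu_{t}=\mathcal{C}\}}$ to $\mathbbm{1}_{\{\rho=0\}}$, merge the two $\sigma_{i}^{4}$ maxima via $\gamma_{t}^{3}\leq C_{\gamma}C_{\rho}^{\beta}\gamma_{t}^{2}$, and control $\Pi_{t}$ with the integral test, the $\psi$-functions, and the $t$-versus-$N_{t}$ inequality. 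This matches the paper's argument; the only cosmetic difference is that \cref{eq:cor:assg} is stated in terms of $t$ rather than $N_{t}$, so the $t\to N_{t}$ conversion is needed only for the $\Pi_{t}=\mathcal{O}(\exp(-N_{t}^{(1+\rho\beta-\alpha)/(1+\rho)}))$ claim.
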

\begin{proof}[Proof of \cref{cor:assg}]
By substituting the functions $\gamma_{t}=C_{\gamma}n_{t}^{\beta}t^{-\alpha}$, $\nu_{t}=n_{t}^{-\nu}$, $\sigma_{t}=C_{\sigma}n_{t}^{-\sigma}$, and $n_{t}=C_{\rho}t^{\rho}$ into the bound of \cref{lem:assg}, and utilizing the inequality $\gamma_{t}^{3}\leq C_{\gamma}C_{\rho}^{\beta}\gamma_{t}^{2}$ due to $\alpha-\rho\beta\in(1/2,1)$, we obtain \cref{eq:cor:assg}. In this inequality, we have $\mu_{\nu}'=\mu-\mathbbm{1}_{\{\rho=0\}}2D_{\nu}^{4}/\mu^{3}C_{\rho}^{4\nu}>0$. Furthermore, $\Pi_{t}$ can be bounded as follows:
\begin{align}
\Pi_{t} \leq & \exp\left(-\frac{\mu_{\nu}'C_{\gamma}C_{\rho}^{\beta}}{4}\sum_{i=t/2}^{t}i^{\rho\beta-\alpha}\right) \left[ \exp\left(\frac{\mathbbm{1}_{\{\rho\neq0\}}C_{\Delta}D_{\nu}^{4}C_{\gamma}C_{\rho}^{\beta}}{\mu^{3}C_{\rho}^{4\nu}}\sum_{i=1}^{t}i^{\rho(\beta-4\nu)-\alpha}\right) \right. \nonumber
\\ & \left. \exp\left(\frac{2^{8}C_{\Delta}C_{\kappa}^{4}C_{\gamma}^{3}C_{\rho}^{3\beta}}{\mu}\sum_{i=1}^{t}i^{3\rho\beta-3\alpha}\right)  \exp\left(24C_{\Delta}C_{\kappa}^{4}C_{\gamma}^{4}C_{\rho}^{4\beta}\sum_{i=1}^{t}i^{4\rho\beta-4\alpha}\right)\right. \nonumber
\\ & \left. \left(\Delta_{0} + \frac{4B_{\nu}^{4}}{\mu^{3}\mu_{\nu}'C_{\rho}^{4\nu}} + \frac{1024C_{\sigma}^{4}C_{\gamma}^{2}C_{\rho}^{2\beta}}{\mu\mu_{\nu}'C_{\rho}^{4\sigma}} + \frac{96C_{\sigma}^{4}C_{\gamma}^{3}C_{\rho}^{3\beta}}{\mu_{\nu}'C_{\rho}^{4\sigma}} \right) + \frac{B_{\nu}^{4}C_{\gamma}C_{\rho}^{\beta}}{\mu^{3}C_{\rho}^{4\nu}} \sum_{i=1}^{t/2-1} i^{\rho(\beta-4\nu)-\alpha} \right. \nonumber
\\ &+ \left. \frac{256C_{\sigma}^{4}C_{\gamma}^{3}C_{\rho}^{3\beta}}{\mu C_{\rho}^{4\sigma}} \sum_{i=1}^{t/2-1} i^{\rho(3\beta-4\sigma)-3\alpha} + \frac{24C_{\sigma}^{4}C_{\gamma}^{4}C_{\rho}^{4\beta}}{C_{\rho}^{4\sigma}} \sum_{i=1}^{t/2-1}  i^{4\rho(\beta-\sigma)-4\alpha}\right] \nonumber
\\ \leq &
\exp\left(-\frac{\mu_{\nu}'C_{\gamma}C_{\rho}^{\beta}t^{1+\rho\beta-\alpha}}{2^{3}}\right) \left[ \exp\left(\frac{\mathbbm{1}_{\{\rho\neq0\}}C_{\Delta}D_{\nu}^{4}C_{\gamma}C_{\rho}^{\beta}\psi_{\alpha-\rho(\beta-4\nu)}^{0}(t)}{\mu^{3}C_{\rho}^{4\nu}}\right)  \exp\left(\frac{2^{10}C_{\Delta}C_{\kappa}^{4}C_{\gamma}^{3}C_{\rho}^{3\beta}}{\mu}\right) \right. \nonumber
\\ & \left. \exp\left(2^{6}C_{\Delta}C_{\kappa}^{4}C_{\gamma}^{4}C_{\rho}^{4\beta}\right)  \left(\Delta_{0} + \frac{2^{2}B_{\nu}^{4}}{\mu^{3}\mu_{\nu}'C_{\rho}^{4\nu}} + \frac{2^{10}C_{\sigma}^{4}C_{\gamma}^{2}C_{\rho}^{2\beta}}{\mu\mu_{\nu}'C_{\rho}^{4\sigma}} + \frac{2^{7}C_{\sigma}^{4}C_{\gamma}^{3}C_{\rho}^{3\beta}}{\mu_{\nu}'C_{\rho}^{4\sigma}} \right) \right. \nonumber
\\ & + \left. \frac{B_{\nu}^{4}C_{\gamma}C_{\rho}^{\beta}\psi_{\alpha-\rho(\beta-4\nu)}^{0}(t/2)}{\mu^{3}C_{\rho}^{4\nu}} + \frac{2^{10}C_{\sigma}^{4}C_{\gamma}^{3}C_{\rho}^{3\beta}}{\mu C_{\rho}^{4\sigma}} + \frac{2^{6}C_{\sigma}^{4}C_{\gamma}^{4}C_{\rho}^{4\beta}}{C_{\rho}^{4\sigma}}\right], \label{eq:cor:assg:pi}
\end{align}
with help of the integral test for convergence.\footnote{$\sum_{i=1}^{t}i^{3\rho\beta-3\alpha}\leq3<2^{2}$ and $\sum_{i=1}^{t}i^{4\rho(\beta-x)-4\alpha}\leq2$ for any $x\geq0$ as $\alpha-\rho\beta\in(1/2,1)$.}
\end{proof}

\begin{lemma} \label{thm:assg}
Let $\bar{\delta}_{t}=\E[\lVert \bar{\theta}_{t}-\theta^{*}\rVert^{2}]$ with $\bar{\theta}_{n}$ given by \cref{eq:assg}, where $(\theta_{t})$ either follows the recursion in \cref{eq:ssg} or \cref{eq:pssg}.
Suppose \cref{assump:L_strong_convexity,assump:L_smoothness,assump:ssg:measurable,assump:ssg:f_lipschitz,assump:ssg:theta_sigma_bound,assump:assg:theta_sigma_bound_as} hold for $p=4$.
In addition, \cref{assump:passg:bounded} must hold true if $(\theta_{t})$ follows the recursion in \cref{eq:pssg}, which is indicated by $\mathbbm{1}_{\{D_{\Theta}<\infty\}}$.
Then, for any learning rate $(\gamma_{t})$, we have
\begin{align*}
\bar{\delta}_{t}^{1/2}
\leq& \frac{\Lambda^{1/2}}{N_{t}}\left(\sum_{i=1}^{t}n_{i}^{2(1-\sigma)}\right)^{1/2}+\frac{C_{\sigma}'^{1/2}}{\mu N_{t}} \left(\sum_{i=1}^{t}n_{i}^{2(1-\sigma-\sigma')}\right)^{1/2} +\frac{2^{1/2}B_{\nu}^{1/2}}{\mu N_{t}} \left(\sum_{j=2}^{t} \left(n_{j}\nu_{j} \sum_{i=1}^{j-1} n_{i}\sigma_{i}\right)\right)^{1/2}
\\ &+\frac{1}{\mu N_{t}} \sum_{i=1}^{t-1} \delta_{i}^{1/2} \left\vert \frac{n_{i+1}}{\gamma_{i+1}} - \frac{n_{i}}{\gamma_{i}} \right\vert
+ \frac{n_{t}}{\mu\gamma_{t}N_{t}} \delta_{t}^{1/2} 
+ \frac{n_{1}}{\mu N_{t}}\left(\frac{1}{\gamma_{1}}+2^{1/2}(C_{\nabla}+C_{\kappa})\right) \delta_{0}^{1/2}
 \\ &+ \frac{2^{1/2}(C_{\nabla}^{2}+C_{\kappa}^{2})^{1/2}}{\mu N_{t}} \left( \sum_{i=1}^{t-1}n_{i+1}^{2}\delta_{i}\right)^{1/2}+ \frac{C_{\nabla}''}{\mu N_{t}} \sum_{i=0}^{t-1} n_{i+1} \Delta_{i}^{1/2},
\\ &+ \frac{2^{3/4}(C_{\nabla}^{2}+C_{\kappa}^{2})^{1/2}}{\mu N_{t}} \left(\sum_{j=1}^{t-1} \left((D_{\nu}\delta_{j}^{1/2}+2^{1/2}B_{\nu})n_{j+1}\nu_{j+1} \sum_{i=0}^{j-1} n_{i+1}\delta_{i}^{1/2}\right)\right)^{1/2},
\end{align*}
with $\Lambda = \operatorname{Tr}(\nabla_{\theta}^{2} F(\theta^{*})^{-1} \Sigma \nabla_{\theta}^{2} F(\theta^{*})^{-1})$ and $C_{\nabla}''=C_{\nabla}'/2+\mathbbm{1}_{\{D_{\Theta}<\infty\}}2G_{\Theta}/D_{\Theta}^{2}$.
\end{lemma}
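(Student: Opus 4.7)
The plan is to adapt the classical Polyak-Ruppert averaging analysis (as in \citet{polyak1992acceleration,bach2011non,godichon2023non}) to the streaming, biased, dependent setting. The core idea is to express the averaged error $\bar{\theta}_{t}-\theta^{*}$, after multiplication by the Hessian at the optimum $H:=\nabla_{\theta}^{2}F(\theta^{*})$, as a sum of terms that can be controlled by the per-step rate $\delta_{i}$, the fourth-order rate $\Delta_{i}$, the covariance $\Sigma$ (and its remainder $\Sigma_{t}$), and the bias budget carried by $(\nu_{t})$ and $B_{\nu}$. At the end, one inverts $H$ using $H\succeq\mu I$ (a consequence of \cref{assump:L_strong_convexity,assump:L_smoothness} applied at $\theta^{*}$), so that $\bar{\delta}_{t}^{1/2}\le \mu^{-1}(\E\lVert H(\bar{\theta}_{t}-\theta^{*})\rVert^{2})^{1/2}$ and the triangle inequality distributes over the decomposition.

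\textbf{Key decomposition.} Taylor-expand $\nabla_{\theta}F$ at $\theta^{*}$ to write $\nabla_{\theta}F(\theta_{i-1})=H(\theta_{i-1}-\theta^{*})+R_{i}$ with $\lVert R_{i}\rVert\le (C_{\nabla}'/2)\lVert\theta_{i-1}-\theta^{*}\rVert^{2}$ (by \cref{assump:L_smoothness}). The SSG recursion then gives
\begin{equation*}
H(\theta_{i-1}-\theta^{*})=\frac{\theta_{i-1}-\theta_{i}}{\gamma_{i}}-\bigl[\nabla_{\theta}f_{i}(\theta_{i-1})-\nabla_{\theta}F(\theta_{i-1})\bigr]-R_{i},
\end{equation*}
and in the projected case the first term is augmented by a projection residual that, thanks to \cref{assump:passg:bounded}, is dominated by $(2G_{\Theta}/D_{\Theta}^{2})\lVert\theta_{i-1}-\theta^{*}\rVert^{2}$; this is what produces the extra $\mathbbm{1}_{\{D_{\Theta}<\infty\}}$-term in $C_{\nabla}''$. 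Multiplying by $n_{i}$, summing, and dividing by $N_{t}$ yields $H(\bar{\theta}_{t}-\theta^{*})$ on the left. An Abel summation on the telescoping pieces $n_{i}(\theta_{i-1}-\theta_{i})/\gamma_{i}$ produces the boundary contributions at $i=1$ and $i=t$ together with the weight-increment term $\sum_{i=1}^{t-1}(\theta_{i}-\theta^{*})(n_{i+1}/\gamma_{i+1}-n_{i}/\gamma_{i})$; taking $L^{2}$ norms of these terms already accounts for the boundary/Abel/$\delta_{i}$-type contributions in the statement, and applying $(\E\lVert R_{i}\rVert^{2})^{1/2}\le (C_{\nabla}'/2)\Delta_{i-1}^{1/2}$ gives the $C_{\nabla}''\,\Delta_{i}^{1/2}$ term.

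\textbf{Handling the stochastic noise.} Set $\xi_{i}:=\nabla_{\theta}f_{i}(\theta^{*})$ and write
\begin{equation*}
\nabla_{\theta}f_{i}(\theta_{i-1})-\nabla_{\theta}F(\theta_{i-1})=\bigl(\xi_{i}-\E[\xi_{i}\mid\gF_{i-1}]\bigr)+\E[\xi_{i}\mid\gF_{i-1}]+\eta_{i},
\end{equation*}
with $\eta_{i}:=[\nabla_{\theta}f_{i}(\theta_{i-1})-\xi_{i}]-\nabla_{\theta}F(\theta_{i-1})$, and also split $\eta_{i}$ into its centered and biased parts. The first part is a martingale difference: summing $n_{i}(\xi_{i}-\E[\xi_{i}\mid\gF_{i-1}])$ and taking second moments cancels cross terms, and \cref{assump:assg:theta_sigma_bound_as} produces the two leading terms of the bound, namely the $\Lambda^{1/2}(\sum n_{i}^{2(1-\sigma)})^{1/2}/N_{t}$ Cramér-Rao contribution (from $\Sigma$, pushed through $H^{-1}$ to form $\Lambda$) and the $C_{\sigma}'^{1/2}(\sum n_{i}^{2(1-\sigma-\sigma')})^{1/2}/(\mu N_{t})$ remainder (from $\Sigma_{i}$). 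For $\eta_{i}$, the Lipschitz bounds from \cref{assump:ssg:f_lipschitz} and \cref{assump:L_smoothness} give $(\E\lVert\eta_{i}\rVert^{2})^{1/2}\le\sqrt{2(C_{\kappa}^{2}+C_{\nabla}^{2})}\,\delta_{i-1}^{1/2}$ plus a bias piece bounded via \cref{assump:ssg:measurable}, which supplies the $(C_{\nabla}^{2}+C_{\kappa}^{2})^{1/2}(\sum n_{i+1}^{2}\delta_{i})^{1/2}/(\mu N_{t})$ term.

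\textbf{Biased cross terms (main obstacle).} The most delicate step is bounding the two-index sum coming from the interaction between the martingale difference of scores at $\theta^{*}$ and the conditional bias $\E[\xi_{i}\mid\gF_{i-1}]$ (and the biased part of $\eta_{i}$). Expanding $\lVert\sum_{i}n_{i}(\cdot)\rVert^{2}$ creates cross terms of the shape $\E[n_{i}(\text{martingale}_{i})^{\top}n_{j}(\text{bias}_{j})]$ for $i<j$; conditioning on $\gF_{j-1}$, Cauchy-Schwarz, and the bounds $\E\lVert\xi_{i}\rVert^{2}\le\sigma_{i}^{2}$ (from \cref{assump:ssg:theta_sigma_bound}) and $\E\lVert\E[\xi_{j}\mid\gF_{j-1}]\rVert^{2}\le\nu_{j}^{2}B_{\nu}^{2}$, respectively $\nu_{j}^{2}(D_{\nu}^{2}\delta_{j-1}+B_{\nu}^{2})$ (from \cref{assump:ssg:measurable}), force the appearance of the double-sum terms $(\sum_{j}n_{j}\nu_{j}\sum_{i<j}n_{i}\sigma_{i})^{1/2}$ and $(\sum_{j}(D_{\nu}\delta_{j}^{1/2}+\sqrt{2}B_{\nu})n_{j+1}\nu_{j+1}\sum_{i<j}n_{i+1}\delta_{i}^{1/2})^{1/2}$ in the conclusion. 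These terms are not a standard variance summation and require careful bookkeeping to avoid losing a factor of $t$; in particular the indicator $\mathbbm{1}_{\{B_{\nu}\ne 0\}}$ in \cref{thm:assg:bound} tracks precisely when these cross terms vanish. Collecting all the pieces, dividing by $\mu N_{t}$, and taking the square root yields the claimed bound.
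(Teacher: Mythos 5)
Your proposal follows essentially the same route as the paper's proof: the Polyak--Ruppert decomposition through $\nabla_{\theta}^{2}F(\theta^{*})$, Minkowski's inequality, Abel summation of $n_{i}(\theta_{i-1}-\theta_{i})/\gamma_{i}$ for the boundary and weight-increment terms, the covariance-of-scores assumption for the leading $\Lambda$ and $C_{\sigma}'$ contributions, conditioning on $\gF_{j-1}$ plus Cauchy--Schwarz for the biased cross terms, the fourth-order moment $\Delta_{i}$ for the Taylor remainder, and the bounded-gradient argument for the projection residual. The only cosmetic differences are that you further split $\nabla_{\theta}f_{i}(\theta^{*})$ into a martingale difference plus its conditional mean, whereas the paper keeps the score whole and applies the tower property directly inside the cross terms, and that the projection residual is actually controlled in $L^{2}$ via $\mathbb{P}[\theta_{i-1}-\gamma_{i}\nabla_{\theta}f_{i}(\theta_{i-1})\notin\Theta]\leq\Delta_{i-1}/D_{\Theta}^{4}$ rather than by the pointwise domination you state.
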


\begin{proof}[Proof of \cref{thm:assg}]
The proof is presented in two parts. In the first part, we consider the case where $(\theta_{t})$ follows the recursion given by \cref{eq:ssg}. In the second part, we examine the case of \cref{eq:pssg}. Let's assume that $(\theta_{t})$ is obtained using the recursion in \cref{eq:ssg}. To proceed, we apply the observation made by \citet{polyak1992acceleration}, which observe that
\begin{align*}
\nabla_{\theta}^{2}F(\theta^{*})(\theta_{t-1}-\theta^{*}) =& - \nabla_{\theta}f_{t}(\theta^{*}) +\nabla_{\theta}f_{t}(\theta_{t-1}) - [\nabla_{\theta}f_{t}(\theta_{t-1})-\nabla_{\theta}f_{t}(\theta^{*})-\nabla_{\theta}F(\theta_{t-1})] \\ &- [\nabla_{\theta}F(\theta_{t-1})-\nabla_{\theta}^{2}F(\theta^{*})(\theta_{t-1}-\theta^{*})],
\end{align*}
where $\nabla_{\theta}^{2} F(\theta^{*})$ is invertible with lowest eigenvalue greater than $\mu$, i.e., $\nabla_{\theta}^{2}F(\theta^{*}) \geq \mu \mathbb{I}_{d}$.
By summing the individual parts, taking the quadratic norm and expectation, and applying Minkowski's inequality, we obtain the following inequality:
\begin{align} 
\left(\E\left[\left\lVert\bar{\theta}_{t}-\theta^{*}\right\rVert^{2}\right]\right)^{\frac{1}{2}}
\leq & \left(\E\left[\left\lVert\nabla_{\theta}^{2}F\left(\theta^{*}\right)^{-1}\frac{1}{N_{t}}\sum_{i=1}^{t}n_{i}\nabla_{\theta}f_{i}\left(\theta^{*}\right)\right\rVert^{2}\right]\right)^{\frac{1}{2}} \nonumber
 \\ & + \left(\E\left[\left\lVert\nabla_{\theta}^{2}F\left(\theta^{*}\right)^{-1}\frac{1}{N_{t}}\sum_{i=1}^{t}n_{i}\nabla_{\theta}f_{i}\left(\theta_{i-1}\right)\right\rVert^{2}\right]\right)^{\frac{1}{2}} \nonumber
\\ & + \left(\E\left[\left\lVert\nabla_{\theta}^{2}F\left(\theta^{*}\right)^{-1}\frac{1}{N_{t}}\sum_{i=1}^{t}n_{i}\left[\nabla_{\theta}f_{i}\left(\theta_{i-1}\right)-\nabla_{\theta}f_{i}\left(\theta^{*}\right)-\nabla_{\theta}F\left(\theta_{i-1}\right)\right]\right\rVert^{2}\right]\right)^{\frac{1}{2}} \nonumber
\\ & + \left(\E\left[\left\lVert\nabla_{\theta}^{2}F\left(\theta^{*}\right)^{-1}\frac{1}{N_{t}}\sum_{i=1}^{t}n_{i}\left[\nabla_{\theta}F\left(\theta_{i-1}\right)-\nabla_{\theta}^{2}F\left(\theta^{*}\right)\left(\theta_{i-1}-\theta^{*}\right)\right]\right\rVert^{2}\right]\right)^{\frac{1}{2}}. \label{eq:thm:assg:minkowski_terms}
\end{align}

First term of \cref{eq:thm:assg:minkowski_terms}: As $(\nabla_{\theta}f_{t}(\theta^{*}))$ is a square-integrable sequences on $\sR^{d}$ (\cref{assump:ssg:measurable}), we have
\begin{align*} 
\E\left[\left\lVert\nabla_{\theta}^{2}F\left(\theta^{*}\right)^{-1}\frac{1}{N_{t}}\sum_{i=1}^{t}n_{i}\nabla_{\theta}f_{i}\left(\theta^{*}\right)\right\rVert^{2}\right]
= \frac{1}{N_{t}^{2}}\sum_{i=1}^{t}n_{i}^{2}\E\left[\left\lVert\nabla_{\theta}^{2}F\left(\theta^{*}\right)^{-1}\nabla_{\theta}f_{i}\left(\theta^{*}\right)\right\rVert^{2}\right] \\ + \frac{2}{N_{t}^{2}}\sum_{1\leq i<j\leq t}n_{i}n_{j}\E\left[\left\langle\nabla_{\theta}^{2}F\left(\theta^{*}\right)^{-1}\nabla_{\theta}f_{i}\left(\theta^{*}\right),\nabla_{\theta}^{2}F\left(\theta^{*}\right)^{-1}\nabla_{\theta}f_{j}\left(\theta^{*}\right)\right\rangle\right],
\end{align*}
Here, the first term can be bounded using \cref{assump:assg:theta_sigma_bound_as}, as follows:
\begin{align*} 
\frac{1}{N_{t}^{2}} \sum_{i=1}^{t}n_{i}^{2}\E\left[\left\lVert\nabla_{\theta}^{2}F\left(\theta^{*}\right)^{-1}\nabla_{\theta}f_{i}\left(\theta^{*}\right)\right\rVert^{2}\right]
\leq \frac{\Lambda}{N_{t}^{2}}\sum_{i=1}^{t}n_{i}^{2(1-\sigma)}+\frac{C_{\sigma}'}{\mu^{2}N_{t}^{2}}\sum_{i=1}^{t}n_{i}^{2(1-\sigma-\sigma')},
\end{align*}
where $\Lambda$ denotes $\operatorname{Tr}[\nabla_{\theta}^{2}F(\theta^{*})^{-1}\Sigma\nabla_{\theta}^{2}F(\theta^{*})^{-1}]$.
For the second term, we use Cauchy-Schwarz inequality, Hölder's inequality, and \cref{assump:ssg:measurable,assump:ssg:theta_sigma_bound} to show that
\begin{align*} 
\frac{2}{N_{t}^{2}}&\sum_{1\leq i<j\leq t}n_{i}n_{j}\E\left[\left\langle\nabla_{\theta}^{2}F\left(\theta^{*}\right)^{-1}\nabla_{\theta}f_{i}\left(\theta^{*}\right),\nabla_{\theta}^{2}F\left(\theta^{*}\right)^{-1}\nabla_{\theta}f_{j}\left(\theta^{*}\right)\right\rangle\right]
\\ \leq& \frac{2}{\mu^{2}N_{t}^{2}}\sum_{1\leq i<j\leq t}n_{i}n_{j}\E\left[\left\langle\nabla_{\theta}f_{i}\left(\theta^{*}\right),\nabla_{\theta}f_{j}\left(\theta^{*}\right)-\nabla_{\theta}F(\theta^{*})\right\rangle\right]
\\ \leq& \frac{2}{\mu^{2}N_{t}^{2}}\sum_{1\leq i<j\leq t}n_{i}n_{j}\E\left[\left\lVert \nabla_{\theta}f_{i}\left(\theta^{*}\right)\right\rVert\left\lVert [\E[\nabla_{\theta}f_{j}(\theta^{*})\vert \gF_{j-1}]-\nabla_{\theta}F(\theta^{*})]\right\rVert\right]
\\ \leq& \frac{2}{\mu^{2}N_{t}^{2}}\sum_{1\leq i<j\leq t}n_{i}n_{j}\sqrt{\E\left[\left\lVert \nabla_{\theta}f_{i}\left(\theta^{*}\right)\right\rVert^{2}\right]}\sqrt{\E\left[\left\lVert [\E[\nabla_{\theta}f_{j}(\theta^{*})\vert \gF_{j-1}]-\nabla_{\theta}F(\theta^{*})]\right\rVert^{2}\right]}
\\ \leq& \frac{2B_{\nu}}{\mu^{2}N_{t}^{2}}\sum_{1\leq i<j\leq t}n_{i}n_{j}\sigma_{i}\nu_{j}
= \frac{2B_{\nu}}{\mu^{2}N_{t}^{2}}\sum_{j=2}^{t} \left(n_{j}\nu_{j} \sum_{i=1}^{j-1} n_{i}\sigma_{i}\right).
\end{align*}
Thus, combining these finding gives us
\begin{align} 
\left(\E\left[\left\lVert\nabla_{\theta}^{2}F\left(\theta^{*}\right)^{-1}\frac{1}{N_{t}}\sum_{i=1}^{t}n_{i}\nabla_{\theta}f_{i}\left(\theta^{*}\right)\right\rVert^{2}\right]\right)^{\frac{1}{2}} 
\leq \frac{\Lambda^{\frac{1}{2}}}{N_{t}}\left(\sum_{i=1}^{t}n_{i}^{2(1-\sigma)}\right)^{\frac{1}{2}}  \nonumber
\\ + \frac{C_{\sigma}'^{1/2}}{\mu N_{t}^{1/2}}\left(\sum_{i=1}^{t}n_{i}^{2(1-\sigma-\sigma')}\right)^{\frac{1}{2}} + \frac{2^{1/2}B_{\nu}^{1/2}}{\mu N_{t}} \left(\sum_{j=2}^{t} \left(n_{j}\nu_{j} \sum_{i=1}^{j-1} n_{i}\sigma_{i}\right)\right)^{\frac{1}{2}}. \label{eq:thm:minkowski_1}
\end{align}

Second term of \cref{eq:thm:assg:minkowski_terms}: First, we use that $\frac{1}{N_{t}}\sum_{i=1}^{t}n_{i}\nabla_{\theta}f_{i}(\theta_{i-1}) = \frac{1}{N_{t}}\sum_{i=1}^{t}\frac{n_{i}}{\gamma_{i}}(\theta_{i-1}-\theta_{i})  = \frac{1}{N_{t}}\sum_{i=1}^{t-1}(\theta_{i}-\theta^{*})(\frac{n_{i+1}}{\gamma_{i+1}}-\frac{n_{i}}{\gamma_{i}}) - \frac{1}{N_{t}}(\theta_{t}-\theta^{*})\frac{n_{t}}{\gamma_{t}} + \frac{1}{N_{t}}(\theta_{0}-\theta^{*})\frac{n_{1}}{\gamma_{1}}$, which leads to an upper bound on normed quantity $\lVert\nabla_{\theta}^{2}F(\theta^{*})^{-1}\frac{1}{N_{t}}\sum_{i=1}^{t}n_{i}\nabla_{\theta}f_{i}(\theta_{i-1})\rVert$ given by
\begin{align*}
\frac{1}{\mu N_{t}}\sum_{i=1}^{t-1}\left\lVert\theta_{i}-\theta^{*}\right\rVert \left\lvert\frac{n_{i+1}}{\gamma_{i+1}}-\frac{n_{i}}{\gamma_{i}}\right\rvert + \frac{1}{\mu N_{t}}\left\lVert\theta_{t}-\theta^{*}\right\rVert\frac{n_{t}}{\gamma_{t}} + \frac{1}{\mu N_{t}}\left\lVert\theta_{0}-\theta^{*}\right\rVert\frac{n_{1}}{\gamma_{1}}.
\end{align*}
Rewriting this in terms of $\delta_{t}=\E[\lVert\theta_{t}-\theta^{*}\rVert^{2}]$, gives us a bound for second term of \cref{eq:thm:assg:minkowski_terms};
\begin{align}
\frac{1}{\mu N_{t}} \sum_{i=1}^{t-1} \delta_{i}^{\frac{1}{2}} \left\vert \frac{n_{i+1}}{\gamma_{i+1}} - \frac{n_{i}}{\gamma_{i}} \right\vert
+ \frac{n_{t}}{\mu\gamma_{t}N_{t}} \delta_{t}^{\frac{1}{2}} 
+ \frac{n_{1}}{\mu\gamma_{1}N_{t}} \delta_{0}^{\frac{1}{2}}. \label{eq:thm:minkowski_2}
\end{align}

Third term of \cref{eq:thm:assg:minkowski_terms}: Here, we use that $\E[\lVert\nabla_{\theta}^{2}F(\theta^{*})^{-1}\frac{1}{N_{t}}\sum_{i=1}^{t}n_{i}[\nabla_{\theta}f_{i}(\theta_{i-1})-\nabla_{\theta}f_{i}(\theta^{*}) - \nabla_{\theta}F(\theta_{i-1})]\rVert^{2} ]$ can be derived as
\begin{align*}
 &\frac{1}{\mu^{2}N_{t}^{2}}\left[\sum_{i=1}^{t} n_{i}^{2} \E [ \lVert\nabla_{\theta} f_{i}(\theta_{i-1}) - \nabla_{\theta} f_{i}( \theta^{*}) - \nabla_{\theta} F(\theta_{i-1}) \rVert^{2} ] \right.
\\ &+ \left. 2\sum_{i<j}^{t} n_{i}n_{j} \E [ \langle \nabla_{\theta}f_{i}(\theta_{i-1}) - \nabla_{\theta} f_{i}( \theta^{*}) - \nabla_{\theta}F(\theta_{i-1}), \nabla_{\theta} f_{j}(\theta_{j-1}) - \nabla_{\theta}f_{j}(\theta^{*}) - \nabla_{\theta}F(\theta_{j-1})\rangle]\right].
\end{align*}
Next, we use Cauchy-Schwarz inequality, \cref{assump:ssg:f_lipschitz} and \cref{eq:assump:L_lipschitz} to show that
\begin{align*}
\sum_{i=1}^{t} n_{i}^{2} \E [ \lVert\nabla_{\theta} f_{i} ( \theta_{i-1} ) - \nabla_{\theta} f_{i} ( \theta^{*} ) - \nabla_{\theta} F( \theta_{i-1} ) \rVert^{2} ] \leq 2(C_{\nabla}^{2}+C_{\kappa}^{2})\sum_{i=1}^{t}n_{i}^{2}\delta_{i-1}.
\end{align*}
Similarly, for the other term, we note that
\begin{align*}
\E&[\langle \nabla_{\theta}f_{i}(\theta_{i-1}) - \nabla_{\theta} f_{i}( \theta^{*}) - \nabla_{\theta}F(\theta_{i-1}), \nabla_{\theta} f_{j}(\theta_{j-1}) - \nabla_{\theta}f_{j}(\theta^{*}) - \nabla_{\theta}F(\theta_{j-1})\rangle]
\\ \leq& \sqrt{\E[\lVert \nabla_{\theta}f_{i}(\theta_{i-1}) - \nabla_{\theta} f_{i}( \theta^{*}) - [\nabla_{\theta}F(\theta_{i-1})-\nabla_{\theta}F(\theta^{*})]\rVert^{2} ]} \\ & \sqrt{\E[\lVert \E[\nabla_{\theta} f_{j}(\theta_{j-1}) \vert \gF_{j-1}] - \nabla_{\theta}F(\theta_{j-1})-[\E[\nabla_{\theta}f_{j}(\theta^{*})\vert \gF_{j-1}]-\nabla_{\theta}F(\theta^{*})]\rVert^{2}]}
\\ \leq& \sqrt{2\E[\lVert \nabla_{\theta}f_{i}(\theta_{i-1}) - \nabla_{\theta} f_{i}( \theta^{*})\rVert^{2}]+ 2\E[\lVert\nabla_{\theta}F(\theta_{i-1})-\nabla_{\theta}F(\theta^{*})\rVert^{2}]} \\ & \sqrt{2\E[\lVert \E[\nabla_{\theta} f_{j}(\theta_{j-1}) \vert \gF_{j-1}] - \nabla_{\theta}F(\theta_{j-1})\rVert^{2}]+2\E [\lVert\E[\nabla_{\theta}f_{j}(\theta^{*})\vert \gF_{j-1}]-\nabla_{\theta}F(\theta^{*})\rVert^{2}]}
\\ \leq& \sqrt{2(C_{\kappa}^{2}+C_{\nabla}^{2})\delta_{i-1}} \sqrt{2D_{\nu}^{2}\nu_{j}^{2}\delta_{j-1}+4B_{\nu}^{2}\nu_{j}^{2}}
\\ \leq& 2^{1/2}(C_{\kappa}^{2}+C_{\nabla}^{2})^{1/2}\delta_{i-1}^{1/2} (D_{\nu}\nu_{j}\delta_{j-1}^{1/2}+2^{1/2}B_{\nu}\nu_{j}),
\end{align*}
using $\gF_{i-1}\subset\gF_{j-1}$ since $i<j$, Cauchy–Schwarz inequality, Hölder's inequality, $\lVert a+b\rVert^{p} \leq 2^{p-1}(\lVert a\rVert^{p}+\lVert b\rVert^{p})$ with $p\in\sN$, \cref{assump:ssg:measurable,assump:ssg:f_lipschitz}, and \cref{eq:assump:L_lipschitz}.
Thus, the third term of \cref{eq:thm:assg:minkowski_terms} can be upper bounded by
\begin{align}
  \frac{2^{1/2}(C_{\kappa}^{2}+C_{\nabla}^{2})^{1/2}}{\mu N_{t}} \left( \sum_{i=1}^{t}n_{i}^{2}\delta_{i-1}\right)^{1/2} + \frac{2^{3/4}(C_{\nabla}^{2}+C_{\kappa}^{2})^{1/2}}{\mu N_{t}} \left(\sum_{j=2}^{t} \left((D_{\nu}\delta_{j-1}^{1/2}+2^{1/2}B_{\nu})n_{j}\nu_{j} \sum_{i=1}^{j-1} n_{i}\delta_{i-1}^{1/2}\right)\right)^{1/2}.
\label{eq:thm:minkowski_3}
\end{align}

Fourth term of \cref{eq:thm:assg:minkowski_terms}: Here, we use that \cref{eq:assump:L_diff_lipschitz} implies $\forall\theta$, $\Vert\nabla_{\theta}F(\theta)-\nabla_{\theta}^{2}F(\theta^{*})(\theta-\theta^{*})\rVert\leq C_{\nabla}'\lVert\theta-\theta^{*}\rVert^{2}/2$ \citep{nesterov2018lectures}, which gives the upper bound $\frac{C_{\nabla}'}{2\mu N_{t}} \sum_{i=1}^{t} n_{i} \Delta_{i-1}^{1/2}$ using the definition $\Delta_{t} = \E [\lVert\theta_{t} - \theta^{*}\rVert^{4} ]$.
Combining the terms \cref{eq:thm:minkowski_1,eq:thm:minkowski_2,eq:thm:minkowski_3} into \cref{eq:thm:assg:minkowski_terms}, together with shifting the indices and collecting the $\delta_{0}$ terms, gives use the desired bound for when $(\theta_{t})$ follows \cref{eq:ssg}.

Now, assume that $(\theta_{t})$ is derived from the recursion in \cref{eq:pssg}. As above, we follow the steps of \citet{polyak1992acceleration}, in which, we can rewrite \cref{eq:pssg} to $\frac{1}{\gamma_{t}}(\theta_{t-1}-\theta_{t}) = \nabla_{\theta}f_{t}(\theta_{t-1}) - \frac{1}{\gamma_{t}}\Omega_{t}$,
where $\Omega_{t} =  \mathcal{P}_{\Theta}(\theta_{t-1}-\gamma_{t}\nabla_{\theta} f_{t}(\theta_{t-1}))-(\theta_{t-1}-\gamma_{t}\nabla_{\theta} f_{t}(\theta_{t-1}))$.
Thus, summing the parts, taking the norm and expectation, and using the Minkowski's inequality, yields the same terms as in \cref{eq:thm:assg:minkowski_terms}, but with an additional term regarding $\Omega_{t}$, namely
\begin{align} 
\left(\E \left[ \left\lVert\nabla_{\theta}^{2} F \left( \theta^{*} \right)^{-1} \frac{1}{N_{t}} \sum_{i=1}^{t} \frac{n_{i}}{\gamma_{i}} \Omega_{i} \right\rVert^{2} \right] \right)^{\frac{1}{2}}
\leq \frac{1}{\mu N_{t}}\sum_{i=1}^{t} \frac{n_{i}}{\gamma_{i}} \sqrt{\E\left[\left\lVert\Omega_{i}\right\rVert^{2}\mathbbm{1}_{\{\theta_{i-1}-\gamma_{i}\nabla_{\theta}f_{i}(\theta_{i-1})\notin\Theta\}} \right]},\label{eq:thm:passg:omega_term}
\end{align}
using \cite[Lemma 4.3]{godichon2016estimating}.
Next, we note that
\begin{align*}
\left\lVert\Omega_{t}\right\rVert^{2}
=& \left\lVert\mathcal{P}_{\Theta}\left(\theta_{t-1}-\gamma_{t}\nabla_{\theta}f_{t}\left(\theta_{t-1}\right)\right)-\theta_{t-1}+\gamma_{t}\nabla_{\theta}f_{t}\left(\theta_{t-1}\right)\right\rVert^{2}
\\ \leq& 2\left\lVert\mathcal{P}_{\Theta}\left(\theta_{t-1}-\gamma_{t}\nabla_{\theta}f_{t}\left(\theta_{t-1}\right)\right)-\theta_{t-1}\right\rVert^{2}+2\gamma_{t}^{2}\left\lVert\nabla_{\theta}f_{t}\left(\theta_{t-1}\right)\right\rVert^{2}
\\ =& 2\left\lVert\mathcal{P}_{\Theta}\left(\theta_{t-1}-\gamma_{t}\nabla_{\theta}f_{t}\left(\theta_{t-1}\right)\right)-\mathcal{P}_{\Theta}\left(\theta_{t-1}\right)\right\rVert^{2}+2\gamma_{t}^{2}\left\lVert\nabla_{\theta}f_{t}\left(\theta_{t-1}\right)\right\rVert^{2}
\\ \leq& 2\left\lVert\theta_{t-1}-\gamma_{t}\nabla_{\theta}f_{t}\left(\theta_{t-1}\right)-\theta_{t-1}\right\rVert^{2}+2\gamma_{t}^{2}\left\lVert\nabla_{\theta}f_{t}\left(\theta_{t-1}\right)\right\rVert^{2}
\leq 4\gamma_{t}^{2}G_{\Theta}^{2},
\end{align*}
as $\mathcal{P}_{\Theta}$ is Lipschitz and  $\lVert \nabla_{\theta} f_{t}(\theta)\rVert^{2} \leq G_{\Theta}^{2}$ for any $\theta\in\Theta$. This means that the inner expectation of \cref{eq:thm:passg:omega_term}, $\E[\lVert\Omega_{t}\rVert^{2}\mathbbm{1}_{\{\theta_{t-1}-\gamma_{t}\nabla_{\theta}f_{t}(\theta_{t-1})\notin\Theta\}}] = 4\gamma_{t}^{2}G_{\Theta}^{2}\mathbb{P}[\theta_{t-1}-\gamma_{t}\nabla_{\theta}f_{t}(\theta_{t-1})\notin\Theta]$.
Moreover, as in \cite[Theorem 4.2]{godichon2017} with use of \cref{lem:assg}, we know that $\mathbb{P}[\theta_{t-1}-\gamma_{t}\nabla_{\theta}f_{t}(\theta_{t-1})\notin\Theta] \leq \Delta_{t}/D_{\Theta}^{4}$, where $D_{\Theta} = \inf_{\theta\in\partial\Theta} \lVert\theta-\theta^{*}\rVert$ with $\partial\Theta$ denoting the frontier of $\Theta$.
Thus, \cref{eq:thm:passg:omega_term} can then be bounded by
\begin{align*}
\frac{1}{\mu N_{t}}\sum_{i=1}^{t} \frac{n_{i}}{\gamma_{i}} \sqrt{\E\left[\left\lVert\Omega_{i}\right\rVert^{2}\mathbbm{1}_{\{\theta_{i-1}-\gamma_{i}\nabla_{\theta}f_{i}(\theta_{i-1})\notin\Theta\}} \right]} 
\leq \frac{2G_{\Theta}}{\mu D_{\Theta}^{2}N_{t}}\sum_{i=1}^{t}n_{i+1}\Delta_{i}^{1/2},
\end{align*}
since the sequence $(n_{t})$ is either constant or increasing, meaning $\forall t, n_{t}/n_{t+1}\leq1$.
At last, let $C_{\nabla}''=C_{\nabla}'/2+\mathbbm{1}_{\{D_{\Theta}<\infty\}}2G_{\Theta}/D_{\Theta}^{2}$ indicate whether $(\theta_{t})$ follows \cref{eq:pssg} or not.
\end{proof}

\begin{proof}[Proof of \cref{thm:assg:bound}]
This result can be obtained by simplifying and bounding each term of \cref{thm:assg}, using the bounds provided by \cref{thm:ssg:bound} and \cref{lem:assg}.
By inserting the functions $\gamma_{t}=C_{\gamma}n_{t}^{\beta} t^{-\alpha}$, $\nu_{t} = n_{t}^{-\nu}$, $\sigma_{t} = C_{\sigma} n_{t}^{-\sigma}$, and $n_{t}=C_{\rho}t^{\rho}$ into the bound of \cref{thm:assg}, we obtain that
\begin{align*}
&\bar{\delta}_{t}^{1/2} \leq \frac{\Lambda^{1/2}}{N_{t}^{1/2}}\mathbbm{1}_{\{\sigma=1/2\}} +  \frac{\Lambda^{1/2}C_{\rho}^{1-\sigma}}{N_{t}}\left(\sum_{i=1}^{t}i^{2\rho(1-\sigma)}\right)^{1/2}\mathbbm{1}_{\{\sigma\neq1/2\}}
+\frac{C_{\sigma}'^{1/2}C_{\rho}^{1-\sigma-\sigma'}}{\mu N_{t}} \left(\sum_{i=1}^{t}i^{2\rho(1-\sigma-\sigma')}\right)^{1/2} 
\\ &+ \frac{(\rho(1-\beta)+\alpha)C_{\rho}}{\mu C_{\gamma}C_{\rho}^{\beta}N_{t}} \sum_{i=1}^{t-1} i^{\rho(1-\beta)+\alpha-1} \delta_{i}^{1/2} 
+ \frac{2^{1/2}B_{\nu}^{1/2}C_{\sigma}^{1/2}C_{\rho}}{\mu C_{\rho}^{(\sigma+\nu)/2}N_{t}} \left(\sum_{j=2}^{t} \left(j^{\rho(1-\nu)}\sum_{i=1}^{j-1}i^{\rho(1-\sigma)}\right)\right)^{1/2} 
\\ &+ \frac{C_{\rho}t^{\rho(1-\beta)+\alpha}}{\mu C_{\gamma} C_{\rho}^{\beta} N_{t}} \delta_{t}^{1/2} 
+ \frac{C_{\rho}}{\mu N_{t}}\left(\frac{1}{C_{\gamma}C_{\rho}^{\beta}}+2^{1/2}\left(C_{\kappa}+C_{\nabla}\right)\right) \delta_{0}^{1/2}
+ \frac{2^{1/2+\rho}(C_{\kappa}^{2}+C_{\nabla}^{2})^{1/2}C_{\rho}}{\mu N_{t}} \left( \sum_{i=1}^{t-1}i^{2\rho}\delta_{i}\right)^{1/2} 
\\ &+ \frac{2^{\rho}C_{\nabla}''C_{\rho}}{\mu N_{t}} \sum_{i=0}^{t-1} i^{\rho} \Delta_{i}^{1/2}
+ \frac{2^{3/4+\rho(2-\nu)/2}(C_{\nabla}^{2}+C_{\kappa}^{2})^{1/2}C_{\rho}}{\mu C_{\rho}^{\nu/2}N_{t}} \left(\sum_{j=1}^{t-1} \left((D_{\nu}\delta_{j}^{1/2}+2^{1/2}B_{\nu})j^{\rho(1-\nu)} \sum_{i=1}^{j-1}i^{\rho}\delta_{i}^{1/2}\right)\right)^{1/2},
\end{align*}
using $n_{i+1}/n_{i}\leq2^{\rho}$ and that $\lvert n_{i+1}/\gamma_{i+1}-n_{i}/\gamma_{i}\rvert\leq(\rho(1-\beta)+\alpha)C_{\rho}^{1-\beta}/C_{\gamma}i^{1-\rho(1-\beta)-\alpha}$ as $\rho(1-\beta)+\alpha\leq1-\rho$ with $\rho \in [0,1)$.
Next, as $\sigma\in[0,1/2]$ and $\sigma'\in(0,1/2]$, we have $\sum_{i=1}^{t}i^{2\rho(1-\sigma-\sigma')} \leq t^{1+2\rho(1-\sigma-\sigma')}/(1+2\rho(1-\sigma-\sigma'))$, where $t \leq (2N_{t}/C_{\rho})^{1/(1+\rho)}$. 
Similarly, as $\nu\in(0,\infty)$, we have that
\begin{align*}
\sum_{j=2}^{t-1} \left(j^{\rho(1-\nu)}\sum_{i=1}^{j-1}i^{\rho(1-\sigma)}\right)
\leq& \sum_{j=1}^{t-1} j^{\rho(1-\nu)}\sum_{i=1}^{t-1}i^{\rho(1-\sigma)}
\leq \psi_{\rho(\nu-1)}(t)\psi_{\rho(\sigma-1)}(t)
\\ \leq& \psi_{\rho(\nu-1)}^{\rho}(2N_{t}/C_{\rho})\psi_{\rho(\sigma-1)}^{\rho}(2N_{t}/C_{\rho}),
\end{align*}
using the $\psi$-function defined in \cref{eq:psi_rate}. Hence, $\sqrt{\psi_{\rho(\sigma-1)}^{\rho}(2N_{t}/C_{\rho})\psi_{\rho(\nu-1)}^{\rho}(2N_{t}/C_{\rho})}/N_{t}$ is $\tilde{\mathcal{O}}(N_{t}^{-\rho(\sigma+\nu)/2(1+\rho)})$.
From \cref{eq:thm:ssg:bound:t} we know that $\delta_{t}\leq D_{\delta}/t^{\delta}$ with
\begin{align*}
D_{\delta}=\sup_{t\in\sN} \pi_{t}t^{\delta}+\frac{2^{1+2\rho\nu}B_{\nu}^{2}}{\mu\mu_{\nu}C_{\rho}^{2\nu}}+\frac{2^{2+\rho(2\sigma-\beta)+\alpha}C_{\sigma}^{2}C_{\gamma}C_{\rho}^{\beta}}{\mu_{\nu}C_{\rho}^{2\sigma}},
\end{align*} 
and $\delta=\mathbbm{1}_{\{B_{\nu}=0\}}(\rho(2\sigma-\beta)+\alpha)+\mathbbm{1}_{\{B_{\nu}\neq0\}}\min\{\rho(2\sigma-\beta)+\alpha,2\rho\nu\}$, yielding
\begin{align*}
&\sum_{j=1}^{t-1} \left((D_{\nu}\delta_{j}^{1/2}+2^{1/2}B_{\nu})j^{\rho(1-\nu)} \sum_{i=1}^{j-1} i^{\rho}\delta_{i}^{1/2}\right) 
\leq D_{\delta}^{1/2}\sum_{j=1}^{t-1} \left((D_{\nu}D_{\delta}^{1/2}j^{-\delta/2}+2^{1/2}B_{\nu})j^{\rho(1-\nu)} \psi_{\delta/2-\rho}(t)\right) 
\\ &\leq D_{\nu}D_{\delta}\psi_{\delta/2-\rho}(t) \psi_{\delta/2+\rho(\nu-1)}(t)+ 2^{1/2}B_{\nu}D_{\delta}^{1/2}\psi_{\delta/2-\rho}(t) \psi_{\rho(\nu-1)}(t)
\\ &\leq D_{\nu}D_{\delta}\psi_{\delta/2-\rho}^{\rho}(2N_{t}/C_{\rho}) \psi_{\delta/2+\rho(\nu-1)}^{\rho}(2N_{t}/C_{\rho}) + 2^{1/2}B_{\nu}D_{\delta}^{1/2}\psi_{\delta/2-\rho}^{\rho}(2N_{t}/C_{\rho}) \psi_{\rho(\nu-1)}^{\rho}(2N_{t}/C_{\rho}),
\end{align*}
if $\delta/2-\rho\geq0$.
Hence, $\sqrt{\psi_{\delta/2-\rho}^{\rho}(2N_{t}/C_{\rho}) \psi_{\delta/2+\rho(\nu-1)}^{\rho}(2N_{t}/C_{\rho})}/N_{t}$ is $\tilde{\mathcal{O}}(N_{t}^{-(\delta+\rho\nu)/2(1+\rho)})$, and $\sqrt{\psi_{\delta/2-\rho}^{\rho}(2N_{t}/C_{\rho}) \psi_{\rho(\nu-1)}^{\rho}(2N_{t}/C_{\rho})}/N_{t}$ is $\tilde{\mathcal{O}}(N_{t}^{-(\delta/2+\rho\nu)/2(1+\rho)})$.
Next, we define $\bar{\pi}_{t}=\sum_{i=1}^{t}i^{2}\pi_{i}\geq\sum_{i=1}^{t}\pi_{i}$ such that $\pi_{t}\leq t^{-1} \sum_{i=1}^{t}\pi_{i} \leq t^{-1} \bar{\pi}_{t}\leq t^{-1} \bar{\pi}_{\infty}$ since $\pi_{t}$ is decreasing.
Similarly, let $\bar{\Pi}_{t}=\sum_{i=1}^{t} i^{\rho} \Pi_{i}$.
Both $\bar{\pi}_{t}$ and $\bar{\Pi}_{t}$ convergences to some finite constant depending on the model's parameters.
With use of these notions, we have
\begin{align*}
&\bar{\delta}_{t}^{1/2}
\leq \frac{\Lambda^{1/2}}{N_{t}^{1/2}}\mathbbm{1}_{\{\sigma=1/2\}} + \frac{2^{1/2}\Lambda^{1/2}C_{\rho}^{(1-2\sigma)/2(1+\rho)}}{N_{t}^{(1+2\rho\sigma)/2(1+\rho)}}\mathbbm{1}_{\{\sigma\neq1/2\}}+
\frac{2^{1/2}C_{\sigma}'^{1/2}C_{\rho}^{(1-2(\sigma+\sigma'))/2(1+\rho)}}{\mu N_{t}^{(1+2\rho(\sigma+\sigma'))/2(1+\rho)}}
\\ &+ \frac{2^{2+(7+2\rho(1+\sigma))/2(1+\rho)}C_{\sigma}C_{\rho}^{(2-2\sigma-\beta-\alpha)/2(1+\rho)}}{\mu\mu_{\nu}^{1/2}C_{\gamma}^{1/2}N_{t}^{(2+\rho(\beta+2\sigma)-\alpha)/2(1+\rho)}} + \frac{\Gamma C_{\rho}}{\mu N_{t}} 
+ \frac{2^{(2+\rho)/(1+\rho)}C_{\rho}^{(2+\beta-\alpha)/(1+\rho)}\bar{\pi}_{\infty}}{\mu C_{\gamma}N_{t}^{(2+\rho\beta-\alpha)/(1+\rho)}} 
\\ &+ \frac{2^{(1+\rho(1+2\sigma-\beta)+\alpha)/(1+\rho)}(2^{5}\mu^{-1/2}+2^{4}C_{\gamma}^{1/2}C_{\rho}^{\beta/2})C_{\nabla}''C_{\sigma}^{2}C_{\gamma}}{\mu\sqrt{\mu_{\nu}'}C_{\rho}^{(1-2\rho\sigma-\alpha)/(1+\rho)}N_{t}^{(\rho(2\sigma-\beta)+\alpha)/(1+\rho)}}  + \mathbbm{1}_{\{B_{\nu}\neq0\}}\Psi_{t}
\\ &+ \frac{2^{(5/2+\rho(5-2\sigma))/2(1+\rho)}D_{\nabla}^{\kappa}C_{\sigma}C_{\gamma}^{1/2}C_{\rho}^{(1+\beta-2\sigma+\alpha)/2(1+\rho)}}{\mu\mu_{\nu}^{1/2}N_{t}^{(1+\rho(2\sigma-\beta)+\alpha)/(2(1+\rho))}}
\\ &+ \frac{2^{3/4+\rho(2-\nu)/2}\sqrt{D_{\nabla}^{\kappa}}D_{\nu}^{1/2}D_{\delta}^{1/2}C_{\rho}\sqrt{\psi_{\delta/2-\rho}^{\rho}(2N_{t}/C_{\rho}) \psi_{\delta/2+\rho(\nu-1)}^{\rho}(2N_{t}/C_{\rho})}}{\mu C_{\rho}^{\nu/2}N_{t}},
\end{align*}
as $\alpha-\rho\beta\in(1/2,1)$, where $\mu_{\nu}'=\mu-\mathbbm{1}_{\{\rho=0\}}2D_{\nu}^{4}/\mu^{3}C_{\rho}^{4\nu}$, $D_{\nabla}^{\kappa}=C_{\nabla}+C_{\kappa}$, $C_{\nabla}''=C_{\nabla}'+\mathbbm{1}_{\{D_{\Theta}<\infty\}}2G_{\Theta}/D_{\Theta}^{2}$, $\Gamma=2\bar{\pi}_{\infty}/C_{\gamma}C_{\rho}^{\beta}+(1/C_{\gamma}C_{\rho}^{\beta}+2^{1/2}D_{\nabla}^{\kappa})\delta_{0}^{1/2}+2^{1/2+\rho}D_{\nabla}^{\kappa}\bar{\pi}_{\infty}^{1/2}+2^{\rho}C_{\nabla}''\bar{\Pi}_{\infty}$, $\delta=\mathbbm{1}_{\{B_{\nu}=0\}}(\rho(2\sigma-\beta)+\alpha)+\mathbbm{1}_{\{B_{\nu}\neq0\}}\min\{\rho(2\sigma-\beta)+\alpha,2\rho\nu\}$, and $\Psi_{t}$ given as
\begin{align*}
&\frac{2^{1/2}B_{\nu}^{1/2}C_{\sigma}^{1/2}C_{\rho}\sqrt{\psi_{\rho(\sigma-1)}^{\rho}(2N_{t}/C_{\rho})\psi_{\rho(\nu-1)}^{\rho}(2N_{t}/C_{\rho})}}{\mu C_{\rho}^{(\sigma+\nu)/2}N_{t}} + \frac{2^{3(1+\rho\nu)}B_{\nu}C_{\rho}^{(1-\beta-\nu-\alpha)/(1+\rho)}}{\mu^{3/2}\mu_{\nu}^{1/2}C_{\gamma}N_{t}^{(1+\rho(\beta+\nu)-\alpha)/(1+\rho)}}
\\  & + \frac{2^{1+\rho(2-\nu)/2}B_{\nu}^{1/2}\sqrt{D_{\nabla}^{\kappa}}D_{\delta}^{1/4}C_{\rho}\sqrt{\psi_{\delta/2-\rho}^{\rho}(2N_{t}/C_{\rho}) \psi_{\rho(\nu-1)}^{\rho}(2N_{t}/C_{\rho})}}{\mu C_{\rho}^{\nu/2}N_{t}} 
\\  & + \frac{2^{2(1+\rho\nu)}B_{\nu}^{2}C_{\nabla}''C_{\rho}\psi_{\rho(2\nu-1)}^{\rho}(2N_{t}/C_{\rho})}{\mu^{5/2}\sqrt{\mu_{\nu}'}C_{\rho}^{2\nu}N_{t}} + \frac{2^{3/2+\rho(1+\nu)}B_{\nu}D_{\nabla}^{\kappa}C_{\rho}\sqrt{\psi_{2\rho(\nu-1)}^{\rho}(2N_{t}/C_{\rho})}}{\mu^{3/2}\mu_{\nu}^{1/2}C_{\rho}^{\nu}N_{t}}
\\ & + \frac{2^{3/2+\rho\nu}B_{\nu}C_{\rho}\psi_{1+\rho(\beta+\nu-1)-\alpha}^{\rho}(2N_{t}/C_{\rho})}{\mu^{3/2}\mu_{\nu}^{1/2}C_{\gamma}C_{\rho}^{\beta+\nu}N_{t}},
\end{align*}
Furthermore, using the $\tilde{\mathcal{O}}$-notation one can show that
\begin{align}
\bar{\delta}_{t}^{1/2} &\leq \frac{\Lambda^{1/2}}{N_{t}^{1/2}}\mathbbm{1}_{\{\sigma=1/2\}} + \frac{2^{1/2}\Lambda^{1/2}C_{\rho}^{(1-2\sigma)/2(1+\rho)}}{N_{t}^{(1+2\rho\sigma)/2(1+\rho)}}\mathbbm{1}_{\{\sigma\neq1/2\}}+
\frac{2^{1/2}C_{\sigma}'^{1/2}C_{\rho}^{(1-2(\sigma+\sigma'))/2(1+\rho)}}{\mu N_{t}^{(1+2\rho(\sigma+\sigma'))/2(1+\rho)}} \nonumber
\\ &+ \frac{2^{6}C_{\sigma}C_{\rho}^{(2-2\sigma-\beta-\alpha)/2(1+\rho)}}{\mu\mu_{\nu}^{1/2}C_{\gamma}^{1/2}N_{t}^{(2+\rho(\beta+2\sigma)-\alpha)/2(1+\rho)}} \nonumber
+ \frac{2^{7}(\mu^{-1/2}+C_{\gamma}^{1/2}C_{\rho}^{\beta/2})C_{\nabla}''C_{\sigma}^{2}C_{\gamma}}{\mu\sqrt{\mu_{\nu}'}C_{\rho}^{(1-2\rho\sigma-\alpha)/(1+\rho)}N_{t}^{(\rho(2\sigma-\beta)+\alpha)/(1+\rho)}} 
\\ &+ \frac{2^{2}D_{\nabla}^{\kappa}C_{\sigma}C_{\gamma}^{1/2}C_{\rho}^{(1+\beta-2\sigma+\alpha)/2(1+\rho)}}{\mu\mu_{\nu}^{1/2}N_{t}^{(1+\rho(2\sigma-\beta)+\alpha)/(2(1+\rho))}}
+ \frac{\Gamma C_{\rho}}{\mu N_{t}} + \frac{2^{2}C_{\rho}^{(2+\beta-\alpha)/(1+\rho)}\bar{\pi}_{\infty}}{\mu C_{\gamma}N_{t}^{(2+\rho\beta-\alpha)/(1+\rho)}} \nonumber
\\ &+ \tilde{\mathcal{O}}(N_{t}^{-(\delta+\rho\nu)/2(1+\rho)}) + \mathbbm{1}_{\{B_{\nu}\neq0\}}\Psi_{t}, \label{eq:assg:bound:explicit}
\end{align}
where $\Psi_{t}=\tilde{\mathcal{O}}(N_{t}^{-\rho(\sigma+\nu)/2(1+\rho)})+\tilde{\mathcal{O}}(N_{t}^{-(1+\rho(\beta+\nu)-\alpha)/(1+\rho)})+\tilde{\mathcal{O}}(N_{t}^{-(1+2\rho\nu)/2(1+\rho)})+\tilde{\mathcal{O}}(N_{t}^{-(\delta/2+\rho\nu)/2(1+\rho)})+\tilde{\mathcal{O}}(N_{t}^{-2\rho\nu/(1+\rho)})$, implying that $\nu>1/2$ to obtain the desired rate $\bar{\delta}_{t}=\mathcal{O}(N^{-1})$ if $B_{\nu}=0$.
\end{proof}

\section{Verifications of \cref{assump:ssg:measurable,assump:ssg:theta_sigma_bound} for the AR model} \label{sec:appendix:verification}

\textbf{Well-specified case.} Consider the well-specified case, in which, we estimate an AR$(1)$ model from the underlying stationary AR$(1)$ process $X_{s}=\theta^{*}X_{s-1}+\epsilon_{s}$ with $\vert\theta^{*}\vert<1$.
The squared loss function $f_{t}(\theta)=n_{t}^{-1}\sum_{i=1}^{n_{t}}(X_{N_{t-1}+i}-\theta X_{N_{t-1}+i-1})^{2}$ with gradient $\nabla_{\theta}f_{t}(\theta)=-2n_{t}^{-1}\sum_{i=1}^{n_{t}}X_{N_{t-1}+i-1}(X_{N_{t-1}+i}-\theta X_{N_{t-1}+i-1})$.
Thus, the objective function is
\begin{align*}
F(\theta) = \E\left[\frac{1}{n_{t}}\sum_{i=1}^{n_{t}}(X_{N_{t-1}+i}-\theta X_{N_{t-1}+i-1})^{2}\right] 
= \frac{\sigma_{\epsilon}^{2}(\theta^{*} - \theta)^{2}}{1-(\theta^{*})^{2}} + \sigma_{\epsilon}^{2},
\end{align*}
using $\E[X_{s}]=0$ and $\E[X_{s}^{2}]=\sigma_{\epsilon}^{2}/(1-(\theta^{*})^{2})$, yielding $\nabla_{\theta}F(\theta)=2\sigma_{\epsilon}^{2}(\theta-\theta^{*})/(1-(\theta^{*})^{2})$.
Next, to verify \cref{assump:ssg:measurable} for $p=2$, we first note that
\begin{align}
\E[\nabla_{\theta} f_{t}(\theta) \vert \gF_{t-1}] 
= \frac{2\theta}{n_{t}}\sum_{i=1}^{n_{t}} \E \left[X_{N_{t-1}+i-1}^{2}\middle\vert \gF_{t-1} \right]-\frac{2}{n_{t}}\sum_{i=1}^{n_{t}} \E \left[X_{N_{t-1}+i-1}X_{N_{t-1}+i}\middle\vert \gF_{t-1} \right] \nonumber
\\ = \frac{2 (\theta-\theta^{*})}{n_{t}}\sum_{i=1}^{n_{t}}\E \left[X_{N_{t-1}+i-1}^{2}\middle\vert \gF_{t-1} \right]  -\frac{2}{n_{t}}\sum_{i=1}^{n_{t}} \E \left[X_{N_{t-1}+i-1}\epsilon_{N_{t-1}+i}\middle\vert \gF_{t-1} \right], \label{eq:verify:ar:a1}
\end{align}
as $X_{N_{t-1}+i}=\theta^{*}X_{N_{t-1}+i-1}+\epsilon_{N_{t-1}+i}$.
For the first term of \cref{eq:verify:ar:a1}, we use that $\E[X_{s+i}\vert\gF_{s}]=(\theta^{*})^{i}X_{s}$ and $\Var[X_{s+i}\vert\gF_{s}]=\sigma_{\epsilon}^{2}(1-(\theta^{*})^{2i})/(1-(\theta^{*})^{2})$, yielding 
\begin{align*}
\sum_{i=1}^{n_{t}}\E [X_{N_{t-1}+i-1}^{2}\vert \gF_{t-1} ]
 =  X_{N_{t-1}}^{2} \sum_{i=1}^{n_{t}} (\theta^{*})^{2(i-1)} - \frac{\sigma_{\epsilon}^{2}}{(1-(\theta^{*})^{2})} \sum_{i=1}^{n_{t}} (\theta^{*})^{2(i-1)} + \frac{\sigma_{\epsilon}^{2}n_{t}}{1-(\theta^{*})^{2}}
\\ =  \frac{(1-(\theta^{*})^{2n_{t}})X_{N_{t-1}}^{2}}{(1-(\theta^{*})^{2})} - \frac{(1-(\theta^{*})^{2n_{t}})\sigma_{\epsilon}^{2}}{(1-(\theta^{*})^{2})^{2}}+ \frac{\sigma_{\epsilon}^{2}n_{t}}{1-(\theta^{*})^{2}}.
\end{align*}
Next, the second term of \cref{eq:verify:ar:a1} is zero by utilising that $(\epsilon_{s})$ is a Martingale difference sequence, i.e., $\E[\epsilon_{s+i}\vert\gF_{s}]=0$ and $\E[\epsilon_{s+i}\epsilon_{s+j}\vert\gF_{s}]=0$ for $i\neq j$.
Thus,
\begin{align*}
\E [\lVert\E[\nabla_{\theta} f_{t}(\theta) \vert \gF_{t-1}] - \nabla_{\theta} F(\theta)\rVert^{2}]
= \frac{4(\theta - \theta^{*})^{2}(1-(\theta^{*})^{2n_{t}})^{2}\sigma_{\epsilon}^{2}}{(1-(\theta^{*})^{2})^{4}n_{t}^{2}} \left(\sigma_{\epsilon}^{2}+\frac{1}{1-(\theta^{*})^{2}}\right),
\end{align*}
meaning that \cref{assump:ssg:measurable} is verified for $p=2$ if $(X_{s})$ has bounded moments; this is fulfilled by the natural constraint that $\vert\theta^{*}\vert<1$.
Thus, we can deduce that $D_{\nu}>0$, $B_{\nu}=0$, and $\nu_{t}$ is $\mathcal{O}(n_{t}^{-1})$.
The remaining assumption can be verified in the same way, in particular, \cref{assump:ssg:theta_sigma_bound} is satisfied with $\sigma_{t}$ is $\mathcal{O}(n_{t}^{-1/2})$, \cref{assump:L_smoothness} with $C_{\nabla}=2\sigma_{\epsilon}^{2}/(1-(\theta^{*})^{2})$ and $C_{\nabla}'=0$, and \cref{assump:assg:theta_sigma_bound_as} with $\Sigma=4\sigma_{\epsilon}^{4}/(1-(\theta^{*})^{2})$ and $\Sigma_{t}=0$.
Furthermore, for an AR$(1)$ process $X_{s}$ constructed using the noise process $\epsilon_{s}=\sqrt{G_{s}(H)}z_{s}$ with Hurst index $H\geq1/2$, one can verify that $\nu_{t}^{4}$ and $\sigma_{t}^{4}$ is $\mathcal{O}(n_{t}^{H-1})$ in \cref{assump:ssg:measurable,assump:ssg:theta_sigma_bound} using the self-similarty property \citep{nourdin2012selected}. 

\textbf{Misspecified case.} Next, assume that the underlying data generating process follows the MA$(1)$-process, $X_{s}=\epsilon_{s}+\phi^{*}\epsilon_{s-1}$, with $\phi^{*}\in\sR$.
The misspecification error of fitting an AR$(1)$ model to a MA$(1)$ process can be found by minimizing
\begin{align*}
F(\theta) =& \E [(X_{s}-\theta X_{s-1})^{2}] 
= \E [(\epsilon_{s}+\phi^{*} \epsilon_{s-1}-\theta( \epsilon_{s-1} + \phi^{*} \epsilon_{s-2}))^{2}]
\\ =& \E [(\epsilon_{s}+(\phi^{*}-\theta)\epsilon_{s-1}-\theta\phi^{*} \epsilon_{s-2})^{2}]
=  \sigma_{\epsilon}^{2}(1+(\phi^{*}-\theta)^{2}+\theta^{2}(\phi^{*})^{2} ),
\end{align*}
where $\nabla_{\theta}F(\theta)=2(\theta-\phi^{*})\sigma_{\epsilon}^{2}+2\theta(\phi^{*})^{2}\sigma_{\epsilon}^{2}$.
Thus, as $\theta^{*} = \argmin_{\theta} F(\theta) \equiv \argmin_{\theta} (\phi^{*}-\theta)^{2}+\theta^{2}(\phi^{*})^{2}$ is a strictly convex function in $\theta$, we have $\nabla_{\theta}F(\theta)=0 \Leftrightarrow 2(\theta-\phi^{*})+2\theta(\phi^{*})^{2}=0\Leftrightarrow 2\theta (1+(\phi^{*})^{2})=2\phi^{*} \Leftrightarrow \theta= \phi^{*}/(1+(\phi^{*})^{2})$. This means for any $\phi^{*}\in\sR$ then $\theta\in(-1/2,1/2)$.
With this in mind, we can conduct our study of fitting an AR(1) model to the MA(1) process with $\phi^{*}$ drawn randomly from $\sR$ (\cref{fig:ar:ma}).
Furthermore, this reparametrization trick can be used to verify \cref{assump:ssg:measurable}: first, we can reparameterize $\nabla_{\theta}F(\theta)=2\sigma_{\epsilon}^{2}(\theta-\theta^{*})(1+(\phi^{*})^{2})$ using $\theta^{*}= \phi^{*}/(1+(\phi^{*})^{2})$.
Next, for $\E[\nabla_{\theta}f_{t}(\theta)\vert\gF_{t-1}]$ one have that
\begin{align*}
\E[\nabla_{\theta}f_{t}(\theta)\vert\gF_{t-1}] = \frac{2\theta}{n_{t}}\sum_{i=1}^{n_{t}} \E[X_{N_{t-1}+i-1}^{2}\vert\gF_{t-1}] - \frac{2}{n_{t}}\sum_{i=1}^{n_{t}} \E[X_{N_{t-1}+i-1}X_{N_{t-1}+i}\vert\gF_{t-1}],
\end{align*}
where
\begin{align*}
\sum_{i=1}^{n_{t}} \E[X_{N_{t-1}+i-1}^{2}\vert\gF_{t-1}] =& X_{N_{t-1}}^{2}+\E[X_{N_{t-1}+1}^{2}\vert\gF_{t-1}]+\cdots+\E[X_{N_{t-1}+n_{t}-1}^{2}\vert\gF_{t-1}]
\\ =& X_{N_{t-1}}^{2}+\sigma_{\epsilon}^{2}+(\phi^{*})^{2}\epsilon_{N_{t-1}}^{2}+\cdots+\sigma_{\epsilon}^{2}+(\phi^{*})^{2}\sigma_{\epsilon}^{2}
\\ =& X_{N_{t-1}}^{2}+(\phi^{*})^{2}\epsilon_{N_{t-1}}^{2}+\sigma_{\epsilon}^{2}(n_{t}-1)+(\phi^{*})^{2}\sigma_{\epsilon}^{2}(n_{t}-2)
\\ =& X_{N_{t-1}}^{2}+(\phi^{*})^{2}(\epsilon_{N_{t-1}}^{2}-\sigma_{\epsilon}^{2})+(1+(\phi^{*})^{2})\sigma_{\epsilon}^{2}(n_{t}-1),
\end{align*}
and
\begin{align*}
\sum_{i=1}^{n_{t}} \E[X_{N_{t-1}+i-1}X_{N_{t-1}+i}\vert\gF_{t-1}] 
=& \phi^{*}X_{N_{t-1}}\epsilon_{N_{t-1}}+\phi^{*}\sigma_{\epsilon}^{2}(n_{t}-1)
\\ =& \theta^{*}(1+(\phi^{*})^{2})X_{N_{t-1}}\epsilon_{N_{t-1}}+\theta^{*}(1+(\phi^{*})^{2})\sigma_{\epsilon}^{2}(n_{t}-1),
\end{align*}
using the same white noise properties as for the well-specified case above.
This yields,
\begin{align*}
\E [\lVert\E[\nabla_{\theta} f_{t}(\theta) \vert \gF_{t-1}] - \nabla_{\theta} F(\theta)\rVert^{2}] =& \frac{4(\theta-\theta^{*})^{2}}{n_{t}^{2}} f_{\phi^{*}}(\epsilon_{N_{t-1}}),
\end{align*}
where $f_{\phi^{*}}(\epsilon_{N_{t-1}})$ is finite function depending on the moments of $(\epsilon_{N_{t-1}})$ and $\phi^{*}$.
Hence, we have $D_{\nu}>0$ and $B_{\nu}=0$ with $\nu_{t}$ being $\mathcal{O}(n_{t}^{-1})$.
Similarly, it can be verified that $\sigma_{t}$ are $\mathcal{O}(n_{t}^{-1/2})$ by use of the reparametrization trick.

\end{document}